\documentclass[letterpaper, 11pt]{article}

\usepackage{microtype}
\usepackage{graphicx}
\usepackage{natbib}
\usepackage{subcaption}
\usepackage{booktabs} % for professional tables

\usepackage{hyperref}

\usepackage[margin=1in]{geometry}

\usepackage{amsmath}
\usepackage{amssymb}
\usepackage{mathtools}
\usepackage{amsthm}
\usepackage{soul}

\usepackage[capitalize,noabbrev]{cleveref}

\theoremstyle{plain}
\newtheorem{theorem}{Theorem}[section]

\newtheorem{lemma}[theorem]{Lemma}
\newtheorem{claim}[theorem]{Claim}

\theoremstyle{definition}
\newtheorem{definition}[theorem]{Definition}

\theoremstyle{remark}

\usepackage[textsize=tiny]{todonotes}

\crefname{theorem}{Theorem}{Theorems}
\crefname{lemma}{Lemma}{Lemmas}
\crefname{claim}{Claim}{Claims}
\crefname{proposition}{Proposition}{Propositions}
\crefname{fact}{Fact}{Facts}
\crefname{corollary}{Corollary}{Corollaries}
\crefname{assumption}{Assumption}{Assumptions}
\crefname{definition}{Definition}{Definitions}
\crefname{remark}{Remark}{Remarks}
\crefname{observation}{Observation}{Observations}
\crefname{notation}{Notation}{Notations}
\crefname{example}{Example}{Examples}
\crefname{examples}{Examples}{Examples}
\crefname{question}{Question}{Questions}
\crefname{inequality}{Inequality}{Inequalities}
\crefname{table}{Table}{Tables}
\crefname{section}{Section}{Sections}
\crefname{algorithm}{Algorithm}{Algorithms}
\crefname{equation}{Equation}{Equations}
\crefname{appendix}{Appendix}{Appendices}

\usepackage{algorithm}
\usepackage[noend]{algpseudocode}

\renewcommand{\Pr}{\operatorname{{Pr}}}

\newcommand{\cX}{\mathcal{X}}
\newcommand{\cA}{\mathcal{A}}
\newcommand{\cI}{\mathcal{I}}

\newcommand{\eps}{\varepsilon}
\renewcommand{\epsilon}{\eps}

\hypersetup{
    colorlinks=true,
    linkcolor=blue,
    filecolor=magenta,      
    urlcolor=teal,
    citecolor=blue,
}

\newcommand{\R}{\mathbb{R}} % Real numbers
\tikzset{
  block/.style={draw, rounded corners=3pt, align=center, inner sep=7pt, font=\small},
  smallblock/.style={draw, rounded corners=3pt, align=left, inner sep=6pt, font=\footnotesize},
  arrow/.style={-{Stealth[length=6pt,width=6pt]},very thick},
  note/.style={font=\footnotesize, align=left}
}

\title{Fast-MWEM: Private Data Release in Sublinear Time}

\begin{document}

\author{
  Themistoklis Haris \\ Boston University
  \and
  Steve Choi \\ Boston University
  \and
  Mutiraj Laksanawisit \\ Boston University
}

\maketitle

\begin{abstract}
The Multiplicative Weights Exponential Mechanism (MWEM) is a fundamental iterative framework for private data analysis, with broad applications such as answering $m$ linear queries, or privately solving systems of $m$ linear constraints. However, a critical bottleneck hindering its scalability is the $\Theta(m)$ time complexity required to execute the exponential mechanism in each iteration. We introduce a modification to the MWEM framework that improves the per-iteration runtime dependency to $\Theta(\sqrt{m})$ in expectation. This is done via a lazy sampling approach to the Report-Noisy-Max mechanism, which we implement efficiently using Gumbel noise and a $k$-Nearest Neighbor data structure. This allows for the rapid selection of the approximate score in the exponential mechanism without an exhaustive linear scan. We apply our accelerated framework to the problems of private linear query release and solving Linear Programs (LPs) under neighboring constraint conditions and low-sensitivity assumptions. Experimental evaluation confirms that our method provides a substantial runtime improvement over classic MWEM.
\end{abstract}

\section{Introduction}
Differential Privacy (DP) has emerged as the gold standard for private data analysis, providing a rigorous mathematical framework for preventing the leakage of personal information \citep{dwork2006calibrating}. The core principle of DP is that the outcome of a randomized algorithm should not significantly change if a single individual's data is added to or removed from a dataset. Since its introduction, this definition has led to the development of a wide variety of differentially private algorithms that are often deployed in industry and government \citep{dwork2019differential}.

The Multiplicative Weights Update (MWU) method is a powerful algorithmic tool connecting diverse fields such as online learning, game theory, and optimization \citep{hardt2012simple}. In its classic formulation, MWU is an iterative algorithm for decision-making under uncertainty. We consider a sequential game where a player must choose an action from $n$ options at each step, and an adversary reveals a corresponding loss vector $\ell^{(t)} \in \mathbb{R}^n$. MWU provides a strategy for updating a probability distribution over the actions by systematically down-weighting actions with high loss. In this manner, MWU achieves remarkably low regret against the best fixed action in hindsight.

Privatizing MWU has yielded numerous important results in private data analysis. In the \textbf{Multiplicative Weights Exponential Mechanism (MWEM)}, the MWU strategy is augmented with a private selection mechanism. At each step, the algorithm uses the private exponential mechanism \citep{mcsherry2007mechanismdesign} to select a high-quality action. This powerful combination allows for the private solving of complex tasks, including linear query answering \citep{hardt2012simple}, solving linear programs \citep{hsu2014privately}, boosting weak learners \citep{bun2020efficient}, and private synthetic data generation.

However, despite its versatility and strong theoretical guarantees, MWEM suffers from a critical performance bottleneck. The use of the exponential mechanism in each iteration requires evaluating the quality of every possible action, leading to a runtime that scales \textit{linearly} with the number of queries or constraints. For large-scale problems with millions of data points and thousands of queries, this linear dependency renders the algorithm computationally intractable. While various heuristics can reduce the dependency on the domain size, the linear scaling with the number of queries remains a fundamental barrier.

Our work addresses this bottleneck by rethinking the implementation of MWEM's core private oracle. We introduce a method for executing the exponential mechanism that sidesteps its costly exhaustive search, thereby reducing the per-iteration runtime to be sublinear in the number of queries. This significant acceleration comes with minimal loss to the provable privacy and utility guarantees.

\subsection{Our Results and Techniques}

Our primary contribution is \textbf{Fast-MWEM}, an algorithmic framework that reduces the per-iteration time complexity of the exponential mechanism from $\Theta(m)$ to an expected $\Theta(\sqrt{m})$. This acceleration effectively breaks the linear barrier that has traditionally hindered MWEM's scalability in high-dimensional settings.

We demonstrate the versatility of Fast-MWEM by applying it to two fundamental problems in private data analysis:

\paragraph{Private Linear Query Release.}
We first address the classic problem of answering a collection $Q$ of $m$ linear queries on a dataset $X$ consisting of $n$ elements from a domain set $\mathcal{X}$. Viewing the dataset as a normalized histogram vector $h \in [0,1]^{|\mathcal{X}|}$ and each query as a vector $q \in [0,1]^{|\mathcal{X}|}$, our goal is to output a ``synthetic histogram'' vector $\hat{p}$ that minimizes the maximum error:
\begin{align}
    \label{eq:lq-form}
    \max_{q \in Q} \left| \langle q, h-\hat{p} \rangle \right|.
\end{align}
We provide an algorithm that matches the utility and privacy guarantees of the classic MWEM algorithm but whose expected per-iteration runtime scales with $\tilde{O}(|\mathcal{X}|\sqrt{m})$, compared to the previous $O(|\mathcal{X}|m)$. For settings with large query sets this yields a substantial speedup.

\paragraph{Private Linear Programming.}
We also apply our framework to the problem of privately solving Linear Programs (LPs) with $m$ constraints. We focus on LPs of the form
\begin{align}
    \label{eq:lp-form}
    \max_{x \in \mathbb{R}^d}\{c^\top x\},\,\,\text{subject to}\,\, Ax\leq b,
\end{align}
with $c\in\mathbb{R}^d, A\in \mathbb{R}^{m\times d}, b \in \mathbb{R}^m$. We consider two settings \citep{hsu2014privately} for neighboring databases $D\sim D'$:
\begin{itemize}
    \item \textit{scalar-private, low-sensitivity} LPs (\Cref{sec:primal}): we have $A(D)=A(D')$, $c(D)=c(D')$ and $||b(D)-b(D')||_\infty \leq \Delta_\infty$.
    \item \textit{constraint-private} LPs (\Cref{sec:dual}): the constraints $(A(D),b(D))$ and $(A(D'),b(D'))$ differ in that one of them has one extra row.
\end{itemize}

\textbf{Fast-MWEM} maintains the utility guarantees of \cite{hsu2014privately} while reducing expected per-iteration runtime from $O(dm)$ to $O(d\sqrt{m})$ for scalar-private LPs, and by $O(m\sqrt{d})$ for large-width constraint-private LPs.

\paragraph{Our Techniques.}
The core technical innovation driving these results is a novel implementation of the exponential mechanism. We observe that in many applications, the "score" of a candidate can be expressed as an inner product.

\begin{itemize}
    \item \textbf{Lazy Gumbel Sampling:} We leverage the ``lazy'' Gumbel sampling technique \citep{mussmann2017fast}, which allows us to sample from the exponential mechanism's distribution by examining only the top-$\sqrt{m}$ scores rather than the full set.
    \item \textbf{Reduction to MIPS:} We show that identifying these top scores is equivalent to the Maximum Inner Product Search (MIPS) problem. This insight allows us to utilize efficient, off-the-shelf $k$-MIPS data structures like Locality Sensitive Hashing (LSH) \citep{datar2004locality}, Inverted File Indices (IVF) \citep{baranchuk2018revisiting} and Hierarhical Navigable Small Worlds graphs (HNSW) \citep{malkov2018efficient} to implement the private selection step in sublinear time.
\end{itemize}

\paragraph{Experimental Evaluation.}
We validate our framework on synthetic benchmarks for both linear query release and private linear programming. Our experiments confirm that Fast-MWEM achieves a significant runtime improvement over standard MWEM, with the performance gap widening as the number of queries or constraints increases. Crucially, this acceleration is achieved without compromising utility; our method maintains competitive error rates and privacy guarantees compared to the baseline exhaustive approach.

\begin{figure}[h]
    \centering
    \includegraphics[scale=0.13]{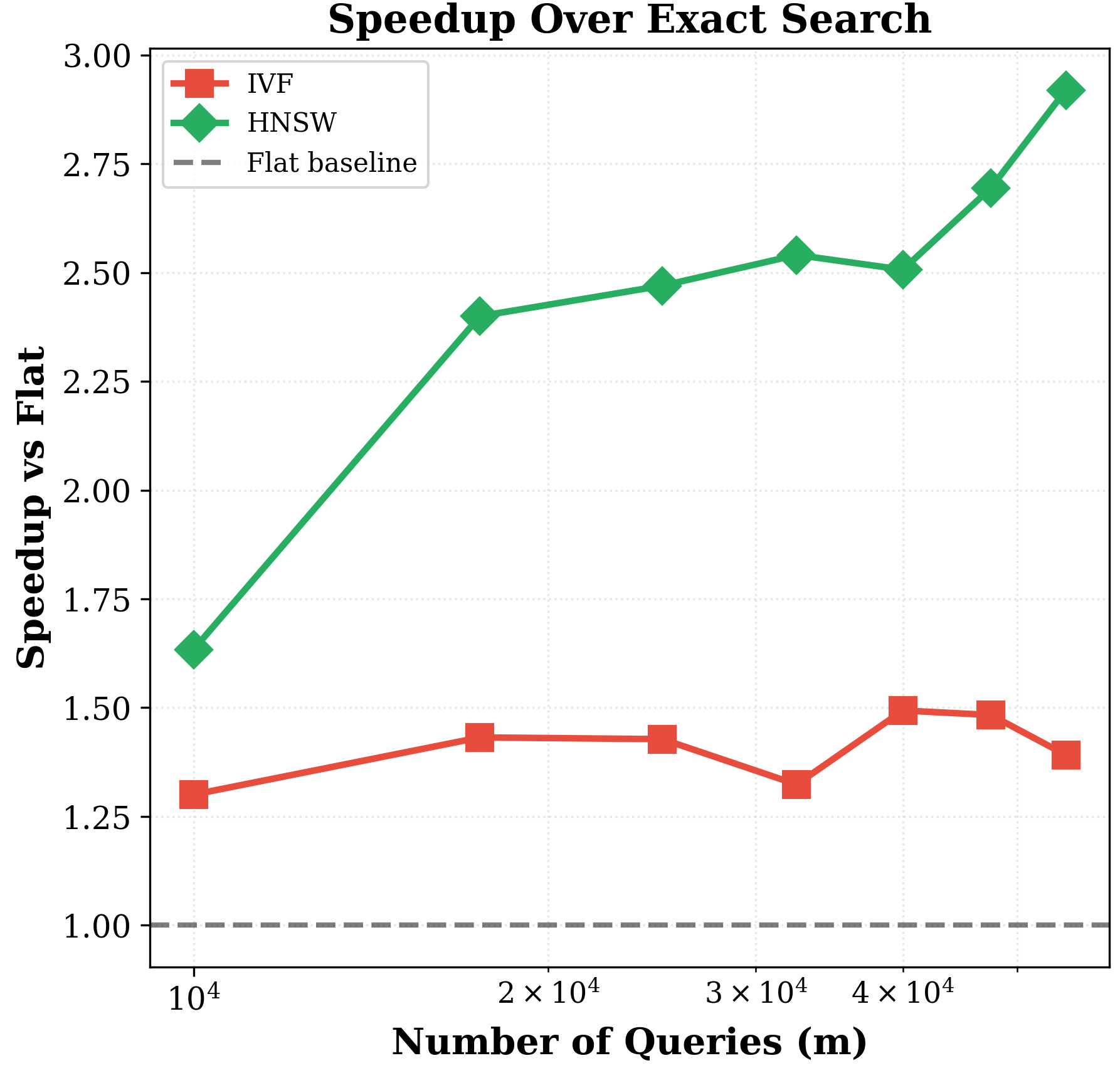}
    \caption{Observed speed-up factor of Fast-MWEM on linear queries over exhaustive search for IVF and HNSW indices.}
    \label{fig:placeholder}
\end{figure}

\subsection{Related Work} % Themis
Since its inception \citep{hardt2012simple}, MWEM and its variants has been used heavily in private data analysis. In the problem of linear query release, numerous works have provided close to optimal error guarantees while also achieving practical performance \citep{liu2021iterative, roth2010interactive, el2020practical}. For private linear programming and optimization, MWEM is a crucial ingredient in many algorithms \citep{hsu2014privately}, though recent developments have revealed more efficient algorithms \citep{ene2025solving, kaplan2025differentially}.

The runtime bottlenecks of MWEM are also inherent to the private linear query release problem. Lower bounds \citep{ullman2011pcps} exist showing that no algorithm can answer more than $n^{2+o(1)}$ queries with constant accuracy in time polynomial in \textit{both} the dimension of the data \textit{and} $n$. Our results improve the dependency on the number of queries, but maintain an exponential dependence on the data dimension. Combining our approach with heuristics designed for ameliorating this curse of dimensionality is an interesting future direction.

\section{Preliminaries} % Champ
We give the formal definition of differential privacy below:
\begin{definition}
Let $\mathcal{A}$ be any randomized algorithm that operates on databases whose elements come from some universe. For parameters $\varepsilon > 0$ and $\delta \in [0,1]$, the algorithm $\mathcal{A}$ is $(\varepsilon,\delta)$--\textbf{differentially private (DP)} if for any two neighboring databases $S \sim S'$ (ones that differ on one row only) and for any subset of outcomes $T$ of the output space, we have:
    \begin{align*}
        \Pr[\mathcal{A}(S)\in T] \leq e^\varepsilon\cdot \Pr[\mathcal{A}(S')\in T] + \delta.
    \end{align*}
\end{definition}

\begin{definition}
The \textbf{exponential mechanism (EM)} \citep{mcsherry2007mechanismdesign} solves the \textit{private selection} problem: Given a set of candidate options $R$ and a \textit{score function} $s:\mathcal{D}\times R\to \mathbb{R}$, we seek to select the candidate with close-to-maximal score. Let the \textit{global sensitivity} of $s$ be defined as:
\begin{align}
    \Delta := \max\limits_{h \in R}\max\limits_{D\sim D'}\left|s(D,h)-s(D',h)\right|.
\end{align} Given a privacy parameter $\varepsilon$, EM samples a candidate according to a categorical distribution:
\begin{align}
    \Pr[\text{$h$ is selected}\mid D] \propto \exp\left(\frac{\varepsilon \cdot s(D, h)}{2\Delta}\right).
\end{align}
 \end{definition}
EM has the following formal guarantees:
\begin{theorem}
The Exponential Mechanism is $\varepsilon$-DP and runs in  $O(|R|)$ time. Utility-wise, if EM outputs some candidate $\widehat{h} \in R$, then:
\begin{align}
\mathop{\Pr}_{\widehat{h}}\left[s(D,\widehat{h}) < q_{\max} - \frac{2\Delta(\ln |R| + t)}{\varepsilon}\right] \leq e^{-t}
\end{align}
where $q_{\max} = \max\limits_{i \in R} s(D,h_i)$ is the maximum score of any candidate. 
\end{theorem}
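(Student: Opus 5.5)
The statement has three parts: privacy, running time, and utility. I would prove them separately, working directly from the sampling distribution $p_D(h) = \exp(\varepsilon s(D,h)/2\Delta)/Z_D$, where $Z_D = \sum_{h'\in R}\exp(\varepsilon s(D,h')/2\Delta)$.

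\textbf{Privacy.} Fix neighbors $D\sim D'$ and a candidate $h$. By the definition of $\Delta$, $|s(D,h)-s(D',h)|\le\Delta$. So the numerator $\exp(\varepsilon s(D,h)/2\Delta)$ changes by a factor of at most $e^{\varepsilon/2}$. The same bound holds termwise in the normalizer, so $Z_{D'}\ge e^{-\varepsilon/2}Z_D$. Multiplying the two bounds gives
\begin{align*}
\frac{p_D(h)}{p_{D'}(h)} \le e^{\varepsilon/2}\cdot e^{\varepsilon/2} = e^{\varepsilon}.
\end{align*}
For an arbitrary set of outcomes $T\subseteq R$, sum this pointwise inequality over $h\in T$. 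This gives $\varepsilon$-DP with $\delta=0$.

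\textbf{Running time.} This step is routine, assuming each score can be evaluated in $O(1)$ time. One pass computes all $|R|$ unnormalized weights and their sum. A categorical sample can then be drawn by inverse-CDF sampling with one uniform variate and a second linear scan. The total is $O(|R|)$.

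\textbf{Utility.} Let $B=\{h: s(D,h) < q_{\max} - \tau\}$ with $\tau = 2\Delta(\ln|R|+t)/\varepsilon$. Bound the numerator and denominator of $\Pr[\widehat h\in B]$ separately.
\begin{itemize}
\item Every $h\in B$ has weight at most $\exp(\varepsilon(q_{\max}-\tau)/2\Delta)$, so the total mass of $B$ is at most $|R|$ times this.
\item The normalizer $Z_D$ is at least the weight of a maximizer, $\exp(\varepsilon q_{\max}/2\Delta)$.
\end{itemize}
Dividing the two bounds gives
\begin{align*}
\Pr[\widehat h\in B]\le |R|\exp\!\left(-\frac{\varepsilon\tau}{2\Delta}\right) = |R|\, e^{-\ln|R| - t} = e^{-t}.
\end{align*}

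\textbf{Main obstacle.} The argument is elementary. The only point needing care is the constant bookkeeping: the factor $2$ in the exponent's denominator is exactly what absorbs the two $e^{\varepsilon/2}$ factors in the privacy proof. The same factor $2\Delta/\varepsilon$ then appears in the utility threshold. I would also note that the utility bound is stated for the strict inequality $s(D,\widehat h)<q_{\max}-\tau$, which is exactly how $B$ is defined.
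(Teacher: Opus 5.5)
Your proof is correct, and it is the standard McSherry--Talwar argument. The paper gives no proof of this theorem at all: it states it as background, attributed to McSherry and Talwar, so there is no competing route to compare against.

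There is one direction slip in the privacy step. Writing
$p_D(h)/p_{D'}(h) = e^{\varepsilon(s(D,h)-s(D',h))/2\Delta}\cdot Z_{D'}/Z_D$,
you need the upper bound $Z_{D'}\le e^{\varepsilon/2}Z_D$. You stated $Z_{D'}\ge e^{-\varepsilon/2}Z_D$ instead, which only bounds $Z_{D'}/Z_D$ from below and so does not give the claimed $e^{\varepsilon}$. Both inequalities follow from the two-sided termwise bound $|s(D,h')-s(D',h')|\le\Delta$, so the fix is immediate.

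Your $O(|R|)$ implementation via inverse-CDF sampling is fine. The paper itself has the Gumbel-max trick in mind: add independent $\mathrm{Gumbel}(0,1)$ noise to each scaled score $\varepsilon s(D,h)/2\Delta$ and return the $\arg\max$. That is also linear time, and it is the form the paper's lazy variant later accelerates.
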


\section{Method \& Framework} % Champ
In this section we present our framework for accelerating MWEM. For ease of presentation, we first focus on the problem of answering linear queries privately.
\subsection{Setting} % Steve
Let $\mathcal{X}$ be a domain set and $X = \{x_1,...,x_n\}\subseteq \mathcal{X}^n$ be a dataset. The goal is to answer a collection of \textit{linear queries}, which are functions $\phi:\mathcal{X}\to[0,1]$. Let us define 
$$
\phi(X) := \frac{1}{n}\sum\limits_{i=1}^n \phi(x_i).
$$
The \textit{histogram representation} $h \in [0,1]^{|\mathcal{X}|}$ of dataset $X$ is defined as:
$$
h_x = \frac{|\{i\in[n]:x_i=x\}|}{n},\text{ for all } x \in \mathcal{X}.
$$
We can view a linear query as a vector in $[0,1]^{|\mathcal{X}|}$. Then we have that:
$$
\phi(X) = \langle \phi, h\rangle.
$$
Then, if we are given $m$ queries $q_1,...,q_m \in [0,1]^{|\mathcal{X}|}$, the answers to all of them can be captured by the product $Q\cdot h$, where $Q=\{q_1,\dots,q_m\} \in[0,1]^{m\times |\cX|}$.

The goal of \textit{Private Linear Query Release} is to design a randomized algorithm $\mathcal{M}$ that satisfies $(\epsilon, \delta)$-differential privacy. Given privacy parameters $\epsilon, \delta > 0$ and an error bound $\alpha > 0$, the algorithm should output a synthetic distribution vector $\hat{p} \in [0,1]^{|\mathcal{X}|}$ such that, with high probability, the maximum error over all queries is bounded by $\alpha$:
$$
\| Q\hat{p} - Qh \|_\infty = \max_{i \in [m]} | \langle q_i, \hat{p} \rangle - \langle q_i, h \rangle | \leq \alpha.
$$

% (Mention how the exponential mechanism selects)
% for some score function $q$:
% $$
% \mathrm{score}(i) = |q_i\cdot \hat p - q_i
% \cdot h|
% $$

% (Mention how the multiplicative weight update rule works)
% for some learning rate $\eta$:
% $$
% p^{(t)}_i \propto p^{(t-1)}_i\cdot (1-\eta)^{c^{t}_i}
% $$
\subsection{The MWEM algorithm}
As a warm-up, let us review the classic MWEM algorithm for solving the private linear query release problem. Intuitively, the iterative MWU part acts as an ``analyst'' and continuously refines a synthetic data distribution $\widehat{p}$ that achieves low error. The exponential mechanism acts as private ``adversary'', finding queries with high error against the chosen synthetic data distribution.

\begin{algorithm}[h]
\begin{algorithmic}
\caption{MWEM Algorithm for Linear Queries}
\label{alg:mw-em}
    \State \textbf{Inputs: }Queries $Q=\{q_i\} \in [0,1]^{m\times |\mathcal{X}|}$, Dataset $X\in \cX^n$ where $|X| = n$ with histogram $h$. Parameters $\alpha, \varepsilon,\delta >0$.
    \State Initialize $w^{(1)} \gets \vec{1}^{|\mathcal{X}|}$
    \State $T= 4\alpha^{-2}\ln m, \varepsilon_0 = \varepsilon(T \ln\frac{1}{\delta})^{-1/2}$ and $ \eta=\sqrt{\frac{\ln |\cX|}{T}}$ \State Let $u(h, \hat{p}, q_i):=|\langle q_i, h\rangle-\langle q_i, \widehat p\rangle|$.
    \Procedure{MWU}{}
    \For{$t=1$ to $T$}
        \State Let $i_t \gets $ \textsc{EM}$(\varepsilon_0,\frac{w^{(t)}}{||w^{(t)}||},u, \frac{1}{n})$ and $c^{(t)} \gets q_{i_t}$.
        \State For each $i \in \mathcal{X}$: $w_{i}^{(t+1)}=w_{i}^{t}\cdot e^{-\eta \cdot c_{i}^{(t)}}$.
    \EndFor
    \State \textbf{Output} $\widehat{p}:=\frac{1}{T}\sum_{i=1}^T p^{(t)}$
    \EndProcedure
    \Procedure{EM}{$\varepsilon_0,\hat{p},h,u, \Delta_u$}
       \State\Return index $i \in [m]$ w.p. $\propto \exp\left(\frac{\eps_0\cdot u(h, \hat{p}, q_i)}{2\Delta_u}\right)$.
    \EndProcedure
\end{algorithmic}
\end{algorithm}

We present MWEM as \Cref{alg:mw-em} in two parts: the MWU \textit{iterative part}, and  the EM ``oracle''. This formulation is convenient for our presentation because our improvements will mainly be focused on the \textsc{EM} procedure.

The guarantees of MWEM are 
established by \citet{hardt2012simple}. For the runtime, each iteration takes $O(m|\cX|)$ time, and there are $T=O(\alpha^{-2}\log m)$ iterations.
\begin{theorem}
\label{thm:mw-em-original-guarantees}
Suppose we are given a dataset $X \in \mathcal{X}^n$ with $n = \Omega(\eps^{-1}\alpha^{-2}\log\frac1\delta\log\frac{m}{\beta} \sqrt{\log|\mathcal{X}|})$. The MWEM Algorithm is $(\varepsilon,\delta)$--DP and outputs a synthetic dataset $\widehat{p} \in \Delta(\mathcal{X})$ such that with probability at least $1-\beta$ it holds that
$$
||Q\cdot (\widehat{p}-h)||_{\infty} \leq \alpha.
$$
The runtime of each iteration is $O(m|\cX|)$.
\end{theorem}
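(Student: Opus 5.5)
The argument has three parts: privacy via advanced composition, utility via the MWU regret bound combined with the EM utility theorem, and a direct count for the runtime.

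\textbf{Privacy.} Changing one element of $X$ moves one unit of mass of $n h$, so $h$ changes by at most $2/n$ in $\ell_1$. Since $q_i\in[0,1]^{|\cX|}$, the score $u(h,\hat p,q_i)=|\langle q_i,h-\hat p\rangle|$ has sensitivity $O(1/n)$ for a fixed $\hat p$; the constant can be absorbed into $\Delta_u$. Each call to \textsc{EM} is therefore $\varepsilon_0$-DP. The multiplicative-weights update uses only the public query $q_{i_t}$ and not $h$, so by post-processing the whole run is an adaptive composition of $T$ mechanisms, each $\varepsilon_0$-DP. Advanced composition then gives $(O(\varepsilon_0\sqrt{T\ln(1/\delta)})+T\varepsilon_0^2,\delta)$-DP. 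With $\varepsilon_0=\varepsilon(T\ln\frac1\delta)^{-1/2}$ this is $(O(\varepsilon),\delta)$-DP, and rescaling constants gives exactly $(\varepsilon,\delta)$-DP.

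\textbf{Utility.} Let $p^{(t)}=w^{(t)}/\|w^{(t)}\|_1$. Interpret MWU as playing distributions $p^{(t)}$ against losses given by the selected query. To handle both signs of error, treat the loss as $\pm q_{i_t}$ with the sign determined by $\langle q_{i_t},p^{(t)}-h\rangle$; equivalently, augment $Q$ with the complements $1-q$, which I will assume as the standard trick. The MWU regret bound with $\eta=\sqrt{\ln|\cX|/T}$, compared against the fixed point $h$, gives
$$\frac1T\sum_t u(h,p^{(t)},q_{i_t})\le 2\sqrt{\ln|\cX|/T}.$$
By the EM theorem with $t=\ln(T/\beta)$ and a union bound over the $T$ iterations, every round satisfies
$$u(h,p^{(t)},q_{i_t})\ge \max_i u(h,p^{(t)},q_i)-O\!\left(\frac{\ln m+\ln(T/\beta)}{n\varepsilon_0}\right)$$
with probability at least $1-\beta$. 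For any query $q$, convexity gives
$$|\langle q,\hat p-h\rangle|\le\frac1T\sum_t u(h,p^{(t)},q)\le\frac1T\sum_t\max_i u(h,p^{(t)},q_i).$$
Combining the two bounds, this is at most $2\sqrt{\ln|\cX|/T}+O\!\left(\frac{\log(m/\beta)\sqrt{T\ln(1/\delta)}}{n\varepsilon}\right)$.

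\textbf{Runtime.} Each iteration computes $p^{(t)}$ and the $m$ scores $\langle q_i,h-p^{(t)}\rangle$. That is $m$ inner products of length $|\cX|$, costing $O(m|\cX|)$. Sampling from the resulting distribution costs $O(m)$ and the update costs $O(|\cX|)$, so each iteration is $O(m|\cX|)$.

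\textbf{Main obstacle.} The hard step is the parameter balancing. One needs to pick $T=\Theta(\alpha^{-2}\ln|\cX|)$, or the stated $\alpha^{-2}\ln m$ up to matching the regret term, so that the regret term is at most $\alpha/2$, and then check that the stated lower bound on $n$ forces the EM error term to be at most $\alpha/2$. Substituting $T$ gives $\sqrt T=O(\alpha^{-1}\sqrt{\log})$. The required $n\gtrsim \varepsilon^{-1}\alpha^{-2}\log(m/\beta)\sqrt{\ln(1/\delta)\log|\cX|}$ roughly matches the hypothesis, and I expect that hypothesis to cover it up to logarithmic and constant factors.
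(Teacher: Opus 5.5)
The paper gives no argument of its own here: it attributes privacy and utility to \citet{hardt2012simple} and counts $O(m|\cX|)$ work per round. Your composition/regret/EM argument is the standard route, and it fits \Cref{alg:mw-em} better than that citation, whose MWEM has a Laplace measurement step that \Cref{alg:mw-em} lacks. Your runtime count matches. You are also right that the printed $T=4\alpha^{-2}\ln m$ leaves a regret term $\alpha\sqrt{\ln|\cX|/\ln m}$, so one needs $T=\Theta(\alpha^{-2}\ln|\cX|)$.

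\textbf{The sign step fails as written.} If the sign of the loss is read off from $\langle q_{i_t},p^{(t)}-h\rangle$, the update uses $h$ beyond the EM output, so it is not post-processing. If instead you keep the absolute score $u$ on the complement-augmented set, EM may return $\vec{1}-q$ when $\langle q,p^{(t)}-h\rangle>0$. The regret bound controls only the signed quantity $\langle q_{i_t},p^{(t)}-h\rangle$, so it says nothing about $\sum_t u(h,p^{(t)},q_{i_t})$; \Cref{alg:mw-em} as printed has this same defect. The fix is to run EM over the $2m$ candidates $\{q_i,\vec{1}-q_i\}$ with the signed score $s_t(q)=\langle q,p^{(t)}-h\rangle$, whose sensitivity is still $1/n$, and to use the selected vector itself as the loss. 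Since $\langle\vec{1},p^{(t)}-h\rangle=0$, we get $\max_q s_t(q)=\max_i u(h,p^{(t)},q_i)$. The chain then becomes $\frac1T\sum_t\max_i u(h,p^{(t)},q_i)\le 2\sqrt{\ln|\cX|/T}+\frac1T\sum_t Z_t$, where $Z_t\ge 0$ is the EM shortfall in round $t$.

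\textbf{The balancing step you left open does not close via a union bound.} Your per-round bound carries $\ln(mT/\beta)$, and the $\ln T$ silently vanished in your combined display. With $T=\Theta(\alpha^{-2}\ln|\cX|)$ this term is $\Theta(\ln\frac1\alpha+\ln\ln|\cX|)$. The hypothesis cannot absorb it: its only slack over what you need is a factor $\sqrt{\ln(1/\delta)}$, while $\ln\frac1\alpha$ is unbounded for fixed $m,\beta,\delta$. Since only the average shortfall enters, bound it directly. Conditionally on the past, $Y_t:=\max\{0,\tfrac{n\varepsilon_0}{2}Z_t-\ln(2m)\}$ satisfies $\Pr[Y_t>u]\le e^{-u}$ for $u\ge 0$, hence $\mathbb{E}[e^{Y_t/2}\mid i_1,\dots,i_{t-1}]\le 2$. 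Iterating the conditional expectation in a Chernoff bound gives $\sum_tY_t\le 2T\ln 2+2\ln\frac1\beta$ with probability at least $1-\beta$. Thus $\frac1T\sum_tZ_t=O\bigl(\frac{\ln(m/\beta)}{n\varepsilon_0}\bigr)$ with no $\ln T$, which is exactly the bound you displayed. Taking $T=16\alpha^{-2}\ln|\cX|$, the regret term is $\alpha/2$ and the EM term is $O\bigl(\frac{\ln(m/\beta)\sqrt{\ln|\cX|\ln(1/\delta)}}{n\varepsilon\alpha}\bigr)$. The stated lower bound on $n$ (for $\delta\le 1/e$) makes this at most $\alpha/2$.
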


\subsection{``Lazy'' Exponential Mechanism}
To arrive at our improvement, we first recall that the exponential mechanism can be implemented using the classic Gumbel-Max-Trick (see \Cref{sec:gumbel-distrib}).
\begin{lemma}
\label{lemma:gumbel-max-trick}
Let $x_1 \geq x_2 \geq \cdots \geq x_n$ be a list of real numbers, and define the distribution $p \in \Delta([n])$\footnote{We use the $\Delta(\cdot)$ notation to denote the probability simplex over a domain.} where $p_i \propto \exp(x_i)$. Consider sampling $n$ random variables %$\{G_i\} 
$G_1, \dots ,G_n\sim \mathrm{Gumbel}(0, 1)$ and let $$\hat{i} \in \arg \max_{i \in [n]} (x_i + G_i).$$
Then, $\hat{i}$ is distributed according to $p$. 
\end{lemma}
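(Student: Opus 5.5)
The plan is to compute $\Pr[\hat{i} = k]$ directly for a fixed index $k \in [n]$ and show it equals $\exp(x_k)/\sum_j \exp(x_j)$. The ordering assumption $x_1 \geq \cdots \geq x_n$ plays no role in the argument. Ties in the $\arg\max$ occur with probability zero because the Gumbel distribution is continuous, so $\hat{i}$ is almost surely well defined.

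We use the standard Gumbel$(0,1)$ facts. The CDF is $F(g) = \exp(-e^{-g})$ and the density is $f(g) = e^{-g}\exp(-e^{-g})$.

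First, condition on $G_k = g$. The event $\hat{i} = k$ is then the event that $x_j + G_j < x_k + g$ for every $j \neq k$. By independence this event has probability
\begin{align*}
\prod_{j\neq k} F(x_k - x_j + g) = \exp\Bigl(-e^{-g}\sum_{j\neq k} e^{x_j - x_k}\Bigr).
\end{align*}

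Next, integrate this against $f(g)$:
\begin{align*}
\Pr[\hat{i}=k] = \int_{-\infty}^{\infty} e^{-g}\exp\Bigl(-e^{-g}\bigl(1+\textstyle\sum_{j\neq k} e^{x_j-x_k}\bigr)\Bigr)\,dg .
\end{align*}
Write $Z_k = \sum_{j} e^{x_j - x_k}$, and substitute $u = e^{-g}$, so that $du = -e^{-g}\,dg$ and $u$ ranges over $(0,\infty)$. The integral becomes $\int_0^\infty e^{-Z_k u}\,du = 1/Z_k$. Since $1/Z_k = e^{x_k}/\sum_j e^{x_j} = p_k$, this gives the claim.

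The only step needing any care is recognizing the product of CDFs as a single exponential and choosing the substitution $u=e^{-g}$. Everything else is routine, so I do not expect a real obstacle. As an alternative route, one could note that $\max_{j\neq k}(x_j+G_j)$ is itself Gumbel with location $\log\sum_{j\neq k}e^{x_j}$, reducing to a two-variable comparison. I would use the direct integral above.
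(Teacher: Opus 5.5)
Your proof is correct and follows essentially the same route as the paper's: condition on the noise $G_k$ of the candidate index, multiply the CDFs of the other Gumbels, and integrate against the Gumbel density. The only difference is that you evaluate the integral in closed form via $u=e^{-g}$ to get $p_k$ exactly, whereas the paper substitutes $u = x + x_1$ and concludes proportionality by noting that the remaining integral is a constant symmetric in all the $x_i$.
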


Though this trick helps with numerical stability, sampling every $G_i$ and taking the maximum still takes linear time. \citet{mussmann2017fast} demonstrated that, if we possess knowledge of the $k=\sqrt{n}$ largest numbers $S_k = \{x_1,...,x_k\}$, then sampling only $\Theta(\sqrt{n})$ Gumbel random variables suffices to sample from $p$. We will refer to this method as \textit{lazy Gumbel Sampling} and we include more background information on it in \Cref{sec:lazy-gumbel-sampling-appx}.

Computing $S_k$ in general, however, also requires linear time. To get sublinear runtime, we require that the numbers have some specific structure. For example, when the $x_i$-s are inner products between a continuously evolving query vector $q$ and a static vector dataset $V = \{v_1,...,v_n\}$, we can obtain $S_k$ by solving the \textit{$k$-Maximum Inner Product Search} problem ($k$-MIPS) with dataset $V$ and queries $q$. The $k$-MIPS problem is well-studied, particularly as it can be reduced to the $k$-NN problem \citep{neyshabur2015symmetric}. For more information, see \Cref{sec:mips-to-knn}.

Conveniently, this inner product structure arises very naturally in the MWEM settings we consider. For linear queries specifically, recall that our scores are indeed of the form $\langle q_i,h-\widehat{p}\rangle$, where $h-\widehat{p}$ is provided continuously by MWU and $Q=\{q_i\}_{i=1}^m$ are fixed in the beginning. Thus, we can first pre-process $Q$ to build a $k$-MIPS index $\mathcal{H}$ and then query $\mathcal{H}$ in each iteration to obtain the top $\sqrt{m}$ inner products. 

In this section we will state our algorithm in a manner agnostic to the implementation of index $\mathcal{H}$, though this will be a vital detail in our experimental section. For the theoretical analysis it will be enough to assume that each query to $\mathcal{H}$ to approximately retrieve $S_k$ can be answered in $ O(k)$ time. Indeed, the retrieval process itself can be lossy, a situation common to many fast $k$-MIPS algorithms, and we will specifically address the consequences of only \textit{approximately} obtaining $S_k$ in \Cref{sec:approximation-mips}.

\subsection{Fast MWEM with a perfect index}
Following our previous discussion, we propose Fast-MWEM as \Cref{alg:fast-mwem-queries} for solving the linear query release problem. In that problem specifically, the scores in the exponential mechanism are actually the \textit{absolute values} of inner products. To use a $k$-MIPS data structure we have to augment our query dataset by making it closed to complements: if $q_i \in Q$ then $1-q_i \in Q$. 

To aid presentation, we first analyze the algorithm under the assumption that the $k$-MIPS index $\mathcal{H}$ is \textit{perfect}, meaning that it \textit{exactly} finds the set $S_k$. We denote by $\mathcal{H}(k,h-p)$ the result of querying $\mathcal{H}$ with vector $h-p$ to find the query set $S_k \subset Q$ with the $k$-largest inner products $\langle q_i,h-p\rangle$.
\begin{algorithm}
    \caption{Fast MWEM}
    \label{alg:fast-mwem-queries}
    \begin{algorithmic}[1]
        \State \textbf{Inputs: }Queries $Q=\{q_i\}_{i=1}^{m}$, Dataset $X\in \mathcal{X}^n$ with histogram $h$. Parameters $\alpha,\varepsilon,\delta >0$.
        \State {\color{purple}Initialize a $k$-MIPS index $\mathcal{H}$ on $Q$.}
        \State $T\gets \frac{4\ln m}{\alpha^2}, \varepsilon_0 \gets \frac{\varepsilon}{\sqrt{T\cdot \ln\frac{1}{\delta}}}$, and $ \eta=\sqrt{\frac{\ln |\cX|}{T}}$.
    \Procedure{LazyEM}{$\varepsilon_0,p,h,{\color{purple}\mathcal{H}}, u, \Delta_u$}
        \State $S \gets \mathcal{H}(\sqrt{m},h-p)$.
        \State Sample $G_s \sim \text{Gumbel}(0,1)$ for each $s \in S$.
        \State Let $M= \max_{s\in S}\left\{\frac{\varepsilon_0\cdot |\langle q_s, p-h\rangle|}{2\Delta_u}+G_s\right\}$
        \State Let $L= \min_{s\in S}\left\{\frac{\varepsilon_0\cdot |\langle q_s, p-h\rangle|}{2\Delta_u}\right\}$ and {$B = M-L$}
        \State Let $C\sim\text{Bin}(m-\sqrt{m},1-e^{-e^{-B}})$
        \State Sample $C$ queries from $\mathcal{Q}\setminus S$ into set $T$
        % \vspace{1mm}
        \State For $t\in T$, sample $U_t \sim \mathrm{Uniform}(e^{-e^{-B}}, 1)$.
        \State Let $G_t = -\ln(-\ln(U_t))$.
        \State \textbf{return }${\arg}\max\limits_{i\in S\cup T}\left\{\frac{\varepsilon_0\cdot |\langle q_i, p-h\rangle|}{2\Delta_u}+G_i\right\}$
    \EndProcedure
    \end{algorithmic}
\end{algorithm}

Referring back to \Cref{alg:mw-em}, we create $\mathcal{H}$ in the initialization stage, and substitute the \textit{EM} algorithm with our lazy approach. The iterative MWU part remains identical, except that \textit{LazyEM} is now invoked instead of EM. 

We analyze Fast MWEM via the following theorem:
\begin{theorem}
\label{thm:main-thm-acc-mw-em}
Let $\mathcal{H}$ be a data structure that exactly solves the $k$-MIPS problem for any fixed dataset $Q$ of size $m$ with probability at least $1-\frac{1}{m}$ over $T$ queries. Suppose that each query $\mathcal{H}(k,q)$ runs in $O(k)$ time. 

Let $n = \Omega(\log\frac1\delta\log\frac{m}{\beta} \sqrt{\log|\mathcal{X}|}\cdot \eps^{-1}\alpha^{-2})$. Then Algorithm \autoref{alg:fast-mwem-queries} is a $(\varepsilon,\delta+\frac{1}{m})$--DP algorithm that produces a synthetic dataset histogram $\widehat{p} \in \Delta(\mathcal{X})$ such that with probability at least $1-\beta-\frac{1}{m}$, we have $||Q\cdot(\widehat{p}-h)||_\infty\leq \alpha$. 

The expected per-iteration runtime is $\Theta(|\mathcal{X}|\sqrt{m})$.
\end{theorem}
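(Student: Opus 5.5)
The plan is to reduce \Cref{thm:main-thm-acc-mw-em} to \Cref{thm:mw-em-original-guarantees} by a coupling argument. The coupling shows that, on a good event of probability at least $1-\frac1m$, \textsc{LazyEM} returns an index with exactly the distribution that \textsc{EM} would return. Privacy, utility and runtime are then handled separately.

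\emph{Step 1: exactness of lazy sampling given a correct $S$.} Fix an iteration and suppose $\mathcal{H}$ returns the true top-$\sqrt m$ set $S$ of scores $x_i=\frac{\varepsilon_0|\langle q_i,p-h\rangle|}{2\Delta_u}$. Because $Q$ is closed under complements, maximizing $\langle q_i,h-p\rangle$ over the augmented set is the same as maximizing the absolute value. I would then invoke the Mussmann et al.\ argument (\Cref{sec:lazy-gumbel-sampling-appx}) on top of \Cref{lemma:gumbel-max-trick}.
\begin{itemize}
\item Every $j\notin S$ has $x_j\le L$, so $j$ can beat $M$ only if $G_j> M-x_j\ge B$.
\item The events $\{G_j>B\}$ are i.i.d.\ with probability $1-e^{-e^{-B}}$, so the number of such $j$ is $\mathrm{Bin}(m-\sqrt m,1-e^{-e^{-B}})$.
\item By exchangeability the set of such $j$ is a uniformly random subset of that size.
\item Conditioned on $G_j>B$, the inverse-CDF transform of $U_j\sim\mathrm{Uniform}(e^{-e^{-B}},1)$ gives exactly the conditional Gumbel law.
\item Every $j\notin S$ with $G_j\le B$ has $x_j+G_j\le M$ and cannot be the argmax.
\end{itemize}
Hence the returned argmax has the same law as the full Gumbel-max over all $m$ candidates, which is the EM distribution by \Cref{lemma:gumbel-max-trick}.

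\emph{Step 2: privacy and utility.} Let $E$ be the event that $\mathcal H$ is exact on all $T$ queries, so $\Pr[E]\ge 1-\frac1m$. I would construct a hybrid algorithm that runs exact EM whenever $E$ fails. This hybrid is distributed identically to \Cref{alg:mw-em}, so it is $(\varepsilon,\delta)$-DP and $\alpha$-accurate with probability $1-\beta$ by \Cref{thm:mw-em-original-guarantees}. Fast-MWEM and the hybrid differ in total variation by at most $\Pr[\bar E]\le\frac1m$. For any output set $T'$ this gives
\[
\Pr[\mathcal{A}(S)\in T']\le \Pr[\mathrm{Hyb}(S)\in T']+\tfrac1m\le e^{\varepsilon}\Pr[\mathrm{Hyb}(S')\in T']+\delta+\tfrac1m .
\]
One extra step is needed to replace $\mathrm{Hyb}(S')$ with $\mathcal A(S')$ on the right-hand side. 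I would do this by noting that $\mathcal H$'s randomness is data-independent, because it is built on $Q$ only. Conditioning on a fixed good realization of that randomness then makes the lazy and exact algorithms coincide on both databases. Utility follows the same way: the failure probability is at most $\beta+\frac1m$.

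\emph{Step 3: runtime, the main obstacle.} Each iteration pays three costs.
\begin{itemize}
\item Computing $h-p$ and querying $\mathcal H$ costs $O(|\mathcal X|\sqrt m)$, counting the $O(|\mathcal X|)$ cost per inner product.
\item Scoring $S$ costs $O(|\mathcal X|\sqrt m)$.
\item Scoring $T$ costs $O(|\mathcal X|\,\mathbb E[C])$, plus the MWU update, which is $O(|\mathcal X|)$.
\end{itemize}
The crux is showing $\mathbb E[C]=O(\sqrt m)$. We have $\mathbb E[C]\le m\,\mathbb E[1-e^{-e^{-B}}]\le m\,\mathbb E[e^{-B}]$ and $B\ge \max_{s\in S}G_s$. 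Taking $\max_{s\in S}G_s=\ln\sqrt m+G$ with $G\sim\mathrm{Gumbel}(0,1)$, I get $\mathbb E[e^{-B}]\le \frac{1}{\sqrt m}\mathbb E[e^{-G}]=\frac{1}{\sqrt m}$, since $e^{-G}\sim\mathrm{Exp}(1)$. Thus $\mathbb E[C]\le\sqrt m$. The matching lower bound comes from the $\sqrt m$ retrieved elements, which gives $\Theta(|\mathcal X|\sqrt m)$.
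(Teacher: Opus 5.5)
Your overall route is the paper's: on the event that $\mathcal{H}$ is exact, \textsc{LazyEM} samples exactly from EM, so everything is inherited from \Cref{thm:mw-em-original-guarantees} at an additive cost of $\frac1m$. Your Steps 1 and 3 are correct and fill in what the paper only cites. Note that $B\ge\max_{s\in S}G_s$ uses only $x_s\ge L$ on $S$, so $\mathbb{E}[C]\le\sqrt m$ holds whatever set the index returns.

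The step that does not go through is the last one in Step 2. You need a ``fixed good realization'' of the index randomness on which the lazy and exact algorithms coincide for both databases. Such a realization must be correct on every vector $h-p^{(t)}$ that either run could ever issue, and there are exponentially many of these in $T$. The hypothesis only gives correctness on $T$ given queries with probability $1-\frac1m$. So you cannot condition on such a realization at a cost of $\frac1m$. Without it, total variation applied on both sides yields only $\delta+(1+e^{\varepsilon})/m$. The paper's proof makes the same leap when it asserts $\gamma=\frac1m$.

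Your hybrid closes the gap if you condition on the hybrid's trajectory instead of on the index. For every realization of $\mathcal{H}$'s randomness, the hybrid's selections $(i_1,\dots,i_T)$ have MWEM's law, so they are independent of $\mathcal{H}$. Given them, the query sequence is fixed. Let $F$ be the event that $\mathcal{H}$ errs somewhere along it; then $F$ has conditional probability at most $\frac1m$. Under the natural coupling, Fast-MWEM agrees with the hybrid off $F$, and the output is a function of the trajectory. Write $P$ and $P'$ for MWEM's probabilities of an output set $T'$ on neighbors $X\sim X'$. Then
\[
\Pr[\mathcal{A}(X)\in T']\le P+\frac{1-P}{m}
\quad\text{and}\quad
\Pr[\mathcal{A}(X')\in T']\ge\Bigl(1-\frac1m\Bigr)P'.
\]
Hence $\Pr[\mathcal{A}(X)\in T']-e^{\varepsilon}\Pr[\mathcal{A}(X')\in T']\le(1-\frac1m)(P-e^{\varepsilon}P')+\frac1m\le\delta+\frac1m$, by the $(\varepsilon,\delta)$-privacy of MWEM. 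This argument also handles the adaptivity of the index queries, which the paper never addresses. A per-sequence guarantee for $\mathcal{H}$ does not cover adaptive queries directly, but the hybrid's trajectory is independent of the index randomness, so the guarantee applies to it.
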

\begin{proof}
Intuitively, if $\mathcal{H}$ answers correctly every time it is queried, then \text{LazyEM} samples from a distribution that is \textbf{identical} to the one required by the exponential mechanism. As a result, we can apply the advanced composition theorem (\Cref{thm:advanced_composition}) over the $T$ iterations of the main routine and conclude that the algorithm is private. However, there is the chance that $\mathcal{H}$ fails, which we have to analyze separately. 

Let $\mathcal{E}_{\text{good},X}$ be the event that $\mathcal{H}$ answers correctly to identify the  $\sqrt{m}$ nearest neighbors of $p^{(t)}$ for all $t \in T$. Let $X \sim X'$ be two neighboring datasets and let $\mathcal{A}$ be Algorithm \ref{alg:fast-mwem-queries}. Let $\gamma := \Pr\left[\overline{\mathcal{E}_{\text{good},X}}\cup\overline{\mathcal{E}_{\text{good},X'}}\right]$ and let $E$ be an event over the output space of $\mathcal{A}$. Note that $|X\cup X'| = m+1$ and so $\gamma = \frac{1}{m}$. We have % \stevenote{I'm pretty sure this would have to be changed slightly to match the ANN proof that Champ wrote.}
\begin{align*}
    &\Pr[\mathcal{A}(X)\in E] \\
    &= \Pr[\mathcal{A}(X)\in E \cap (\mathcal{E}_{\text{good},X}\cap\mathcal{E}_{\text{good},X'})]\\
    &\quad\quad+\Pr\left[\mathcal{A}(X)\in E\cap  (\overline{\mathcal{E}_{\text{good},X}}\cup\overline{\mathcal{E}_{\text{good},X'}})\right]\\
    &\leq \Pr[\mathcal{A}(X)\in E\cap (\mathcal{E}_{\text{good},X}\cap\mathcal{E}_{\text{good},X'})] + \gamma\\
    &\leq e^{\varepsilon}\cdot\Pr[\mathcal{A}(X')\in E\cap (\mathcal{E}_{\text{good},X}\cap\mathcal{E}_{\text{good},X'})]+\delta+\gamma\\
    &\leq e^{\varepsilon}\cdot\Pr[\mathcal{A}(X')\in E]+\delta+\gamma,\tag{*}
\end{align*}
establishing the privacy guarantees in the theorem statement. The inequality (*) follows because if both $\mathcal{E}_{\text{good},X}$ and $\mathcal{E}_{\text{good},X'}$ occur, both in $\mathcal{A}(X)$ and in $\mathcal{A}(X')$ the distributions from which $i_t$ is sampled are exactly identical to the ones from the exponential mechanism which implies that \textsc{LazyEM} is $(\varepsilon_0,0)$--DP. Thus $\mathcal{A}$ is $(\varepsilon,\delta)$--DP, giving the inequality from the definition of DP.

For the runtime, each iteration of \textsc{MWU} now takes $O(|\mathcal{X}|\sqrt m)$ time in expectation, as claimed.

Finally, we settle the utility guarantees. If $\mathcal{E}_{\text{good},X}$ happens, then the utility guarantees are identical to \Cref{thm:mw-em-original-guarantees}. % the original MW-EM algorithm (\autoref{thm:mw-em-original-guarantees}). 
Thus, if
\begin{align*}
    n = \Omega\left(\frac{\log\frac1\delta\log\frac m\beta\sqrt{\log|\cX|}}{\eps\alpha^2}\right),
    % n \geq \frac{c_{\varepsilon,\delta}\cdot\ln(m/\beta)\cdot \sqrt{\ln(|\mathcal{X}|)}}{\alpha^2},
\end{align*}
then with probability at least $1-\beta-\gamma$, we have that
$$
||Q\cdot(\widehat{p}-h)||_\infty\leq \alpha. \quad \qedhere
$$
\end{proof}

\subsection{Imperfect indices}
\label{sec:approximation-mips}
We now drop our simplifying assumption that the index $\mathcal{H}$ is perfect. A $k$-MIPS algorithm satisfying our $O(k)$ query runtime requirement can only provide an approximation to the correct set $S_k$, which we define as follows:

\begin{definition}
A set of indices $S$ is an approximate top $k$ if $|S| = k$ and for some constant $c$ it holds that:
$$
\max_{i \notin S} x_i - \min_{i \in S} x_i \leq c.
$$
\end{definition}

Let us assume that $\mathcal{H}(\sqrt{m},p)$ only returns a $c$-approximate set $S$ in \Cref{alg:fast-mwem-queries}. Our theoretical analysis can still proceed, albeit in two orthogonal directions:

\paragraph{Preserving runtime} We can show that the $O(|\cX|\sqrt{m})$ runtime guarantee is preserved with a small degradation in the privacy parameter. Specifically, in \Cref{thm:ann-runtime-preserving} we show that Lazy-EM with an imperfect index is $(\varepsilon+2c,0)$-DP.

The full analysis is deferred to \Cref{sec:robustness-to-approximation}. At a high level, we analyze the relationship between two distributions over queries: (i) the original exponential mechanism, and (ii) the one induced by an approximate top-$k$ set. In \Cref{lem:ann-privacy} we provide piecewise bounds that demonstrate the closeness of those distributions as a function of $c$.

\paragraph{Preserving privacy} We can avoid any privacy degradation by increasing the number of samples we take. By lowering the margin $B$ by a factor of $c$ we show in \Cref{sec:robustness-to-approximation} that with per-iteration runtime of $\Theta(e^c \sqrt{m})$, the privacy guarantees are preserved. 

In both cases, we demonstrate that our Fast-MWEM framework yields a sublinear time, differentially private algorithm, even in the case of imperfect indices.

\section{Solving LPs Privately, Faster}
\label{sec:primal}
In this section we apply our fast MWEM framework towards approximately solving Linear Programs (LPs) privately. Following \cite{hsu2014privately}, we focus on solving LPs that have the form:
\begin{align*}
    &\max\limits_{x \in \mathbb{R}^d} c^\top x\\
    \text{s.t.}\quad&Ax\leq b
\end{align*}
where $c \in \R^d$, $A \in \R^{m\times d}$ and $b\in \R^m$. Let $\mathcal{K}_{\text{OPT}} := \{x \in \R^d\mid c^\top x = \text{OPT}\}$. We can solve the original LP by repeatedly solving the following \textit{feasibility} problem:
\begin{align*}
    &\text{find}\, x \in \mathcal{K}_{\text{OPT}}\\
    \text{s.t.}\quad &Ax \leq b
\end{align*}
and binary searching the value of $\text{OPT}$. Therefore, we focus only on feasibility LPs. 

\subsection{Scalar Private LPs}\label{sec:scalar-private}
Consider a private database $D$ that defines our problem parameters $A(D),b(D),c(D)$. Let 
$$
\rho = \sup\limits_{D} ||A(D)||_\infty
$$
be a publicly known upper bound to the \textit{width} of our LP. In this section, we consider \textit{low-sensitivity, scalar private} LPs where for neighboring databases $D$ and $D'$ it is true that $||b(D)-b(D')||_\infty \leq \Delta_\infty$. At the same time, $A(D)=A(D')$ and $c(D)=c(D')$. Other types of neighboring and sensitivity conditions can also be considered \citep{hsu2014privately}. Our method can be applied to those as well.

We further assume for simplicity that that the feasible solution is a distribution:
\begin{align*}
    &\text{find}\, x\in\mathbb{R}^d_+\\
    \text{s.t.}\quad&||x||_1 = 1\,\text{and}\,Ax\leq b
\end{align*}

Solving LPs approximately with the MWU method is a classic result: We start with the uniform distribution which we refine using the MWU rule. Every time we have a candidate solution $\widetilde{x}^{(t)}$, we find the worst constraint $p^{(t)}$, which is the row $A_j$ maximizing $A_j \widetilde{x}^{(t)}-b_j$. Our losses are then the entries of that constraint. 

To achieve privacy, we use the exponential mechanism to select the worst constraint with quality score
$$
Q_t(i,b(D)) = A_i \widetilde{x}^{(t)} -b_i.
$$
By \Cref{thm:advanced_composition}, our entire algorithm is $(\varepsilon,\delta)$-DP. Analyzing the error of EM the solution $x^*$ we obtain has $Ax^* \leq b + \alpha\cdot \vec{1}$ with probability at least $1-\beta$ \citep{hsu2014privately}, where
$$
\alpha := \alpha_{\text{err}}=\widetilde{O}\left(\frac{\rho^{1/2}\Delta_\infty^{1/2}}{\varepsilon^{1/2}}\log^{1/4}d\log^{1/2}\frac{1}{\beta}\right).
$$
Each iteration in this algorithm takes $\Theta(dm)$ time.

\paragraph{The improvement}
Our fast-MWEM framework can improve this runtime via the familiar observation that scores can be written as inner products. Specifically, consider the vector set $\{A_i \circ b_i\}_{i=1}^m \in (\mathbb{R}^{d+1})^m$, where $\circ$ denotes concatenation. Then we can write:
$$
Q_t(i,b(D)) = \langle A_i\circ b_i,\widetilde{x}^{(t)} \circ -1\rangle.
$$
We can then run \textsc{Lazy-EM} with queries $x'_t = \widetilde{x}^{(t)} \circ -1$.
\begin{theorem}
    In the scalar-private, low-sensitivity setting, there exists an $(\varepsilon,\delta)$-private algorithm for solving the feasibility LP $Ax \leq b$ that outputs a solution $x^*$ such that with probability at least $1-\beta$ we have $Ax^*\leq b+\alpha_{\text{err}}\cdot \vec{1}$. The runtime per iteration is $O(d\sqrt{m})$ in expectation.
\end{theorem}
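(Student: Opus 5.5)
The argument mirrors the proof of \Cref{thm:main-thm-acc-mw-em}, with the linear-query score replaced by the LP violation score $Q_t(i,b(D)) = \langle A_i\circ b_i,\widetilde{x}^{(t)}\circ -1\rangle$.

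\textbf{Step 1: the index.} In preprocessing we build a $k$-MIPS index $\mathcal{H}$ on the $m$ vectors $v_i := A_i\circ b_i \in \R^{d+1}$. Every iteration then calls \textsc{LazyEM} from \Cref{alg:fast-mwem-queries}. Here the score is a signed inner product, not an absolute value, so we do not need to close the vector set under complements. The query is $x'_t=\widetilde{x}^{(t)}\circ -1$, the privacy parameter is $\varepsilon_0=\varepsilon/\sqrt{T\ln(1/\delta)}$, and the sensitivity is $\Delta_u=\Delta_\infty$.

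\textbf{Step 2: the index depends on private data.} This is the main obstacle. The vectors $v_i$ contain $b(D)$, so the index built on $D$ differs from the one built on $D'$. The privacy argument must therefore not depend on the internal structure of the index, only on its output. I would handle this exactly as in \Cref{thm:main-thm-acc-mw-em}. Let $\mathcal{E}_{\text{good}}$ be the event that $\mathcal{H}$ returns the exact top-$\sqrt m$ set in all $T$ rounds. On this event, \Cref{lemma:gumbel-max-trick} and the correctness of lazy Gumbel sampling (\Cref{sec:lazy-gumbel-sampling-appx}) show that \textsc{LazyEM} samples from exactly the exponential-mechanism distribution $\propto \exp(\varepsilon_0 Q_t(i,b(D))/2\Delta_\infty)$, which is $\varepsilon_0$-DP because $|Q_t(i,b(D))-Q_t(i,b(D'))|\le\Delta_\infty$.

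Advanced composition (\Cref{thm:advanced_composition}) over the $T$ rounds then gives $(\varepsilon,\delta)$-DP conditional on the good events for both $D$ and $D'$. The same event-splitting chain of inequalities as before adds the failure probability $\gamma$ to $\delta$. To obtain a clean $(\varepsilon,\delta)$ statement, I would absorb $\gamma$ into $\delta$: run the base algorithm with $\delta/2$ and assume the index fails with probability at most $\delta/2$. Boosting the index's success probability by repetition costs only logarithmic factors. Alternatively, one could use an exact structure whose failure only affects runtime.

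\textbf{Step 3: utility.} On $\mathcal{E}_{\text{good}}$, every selection has exactly the same distribution as the exponential mechanism in the \citet{hsu2014privately} algorithm. Their analysis therefore applies verbatim: the EM utility bound, union-bounded over the $T$ rounds, combined with the MWU regret bound for losses bounded by the width $\rho$. It yields $Ax^*\le b+\alpha_{\text{err}}\vec 1$ with probability $1-\beta'$. Setting $\beta'=\beta/2$ and making the index failure probability at most $\beta/2$ gives overall success probability $1-\beta$, with $\alpha_{\text{err}}$ changed by at most constant factors hidden in $\widetilde{O}$.

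\textbf{Step 4: runtime.} Each round has three costs:
\begin{itemize}
\item the index query costs $O(\sqrt m)$ by assumption, and computing the $\sqrt m$ scores in dimension $d+1$ costs $O(d\sqrt m)$;
\item the lazy Gumbel sampling draws $C\sim\mathrm{Bin}(m-\sqrt m,1-e^{-e^{-B}})$ extra candidates, whose expectation is $O(\sqrt m)$ by the lazy Gumbel analysis, and each costs $O(d)$ to score;
\item the MWU update of $\widetilde{x}^{(t)}$ costs $O(d)$.
\end{itemize}
The total is $O(d\sqrt m)$ in expectation.
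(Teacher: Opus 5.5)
Your proposal is correct and follows the paper's route. You rewrite the violation score as $\langle A_i\circ b_i,\widetilde{x}^{(t)}\circ -1\rangle$ and run \textsc{LazyEM} on a MIPS index over these vectors, so each selection exactly reproduces the exponential mechanism. Privacy (via advanced composition) and utility then carry over from \citet{hsu2014privately}, and the runtime comes from the lazy-Gumbel sample count. You are also more careful than the paper on two points it leaves implicit: that the index is built on data-dependent vectors, and that the index-failure probability must be absorbed into $\delta$ and $\beta$.

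One small fix is needed. With your $\varepsilon_0=\varepsilon/\sqrt{T\ln(1/\delta)}$, \Cref{thm:advanced_composition} gives $\sqrt{2}\,\varepsilon$ plus a second-order term, not $\varepsilon$. Use the value $\varepsilon_0=\varepsilon/\sqrt{8T\log(1/\delta)}$ from \Cref{alg:mwu-lp} instead; this changes $\alpha_{\text{err}}$ only by a constant factor.
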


The algorithm is described below as \Cref{alg:mwu-lp}.
\begin{center}
\begin{algorithm}
    \caption{Fast Scalar Private LP Solver}
    \label{alg:mwu-lp}
    \label{alg:accelerated-mw-em}
    \begin{algorithmic}[1]
    \State \textbf{Input:} $A \in \R^{m\times d}$, $b \in \R^m$, error $\alpha\in (0,1)$, and privacy parameters $\eps>0,\delta\in(0,1), \Delta_\infty\in \R$.
    \Procedure{Initialization}{}
        \State Let $\widetilde{x}^{(1)}$ be the uniform distribution in $\R^d$.
        \State Let $\rho := \max_{ij}|A_{ij}|$ be the width of the LP.
        \State Let $\alpha > 0$ be the desired accuracy.
        \State Set $T \gets \frac{9\rho^2 \log d}{\alpha^2}$, $\eta \gets \sqrt{\frac{\log d}{T}}$,
        
        \vspace{-0.5mm}
        
        $\varepsilon_0 \gets \frac{\varepsilon}{\sqrt{8T\log(1/\delta)}}$, and $Q(i, b) = A_i\widetilde x^{(t)}-b_i$.
        \State Construct $k$-MIPS index $\mathcal{H}$ with $\{A_i\circ b_i\}_{i=1}^m$.
    \EndProcedure
    
    \Procedure{MWU}{}
        \For{$t = 1$ to $T$}
            % \State \Comment{Run \textsc{LazyEM}}
            \State \textit{Score: $Q_t(i,b(D)) := \langle A_i\circ b_i,\widetilde{x}^{(t)} \circ -1\rangle$,}
            \State $p^{(t)} = \text{LazyEM}(\varepsilon_0,-x'_t,0,\mathcal{H}, Q, \Delta_{\infty})$
            % \vspace{2mm}
            % \State \Comment{Perform MWU Updates}
            \State $\ell_i^{(t)} \gets \frac{1}{\rho} A_{p^{(t)}i}$ for each $i \in [d]$.
            \State $X^{(t+1)}_i = e^{-\eta \ell_i^{(t)}} X^{(t)}_i$ for $i \in [d]$.
            \State $\widetilde{x}^{(t)} = \frac{X^{(t+1)}}{||X^{(t+1)}||_1}$.
        \EndFor
        \State Output $\bar{x} = \frac{1}{T}\sum_{t=1}^T \widetilde{x}^{(t)}$
    \EndProcedure
    \end{algorithmic}
\end{algorithm}
\end{center}

\subsection{Constraint-Private LPs via Dense MWU} 
\label{sec:dual}
Another way to solve LPs approximately is to focus on the dual LP instead and apply MWU to compete with the best \textit{dense} distribution in hindsight. This gives us a solver that finds an approximate solution $x^*$, though it might not satisfy a small fraction of the constraints. For background on dense distributions and Bregman projections, see \Cref{sec:bregman}.

The Dense MWU algorithm operates on the dual space: instead of proposing a solution $\widetilde{x}^{(t)}$ and computing the worst constraint for $\widetilde{x}^{(t)}$, the algorithm suggests a distribution $\widetilde{y}^{(t)}$ over constraints and computes a solution $x^*$ with minimal expected constraint violation. Computing $x^*$ is done via the dual oracle:
\begin{definition}
    Let $\rho \geq \sup_D \sup_{x \in \mathbb{R}^d} ||A(D)x-b(D)||_\infty$ be a data-independent LP width. Given a distribution $y \in \Delta([m])$ over constraints, the \textbf{$(\alpha,\beta)$- dual oracle} finds some candidate solution $x^* \in \mathbb{R}^d$ such that w.p. at least $1-\beta$,
    \begin{align*}
        \sum\limits_{i=1}^m y_i (A_i x^*) &\leq \min\limits_{x \in \mathcal{K}} \sum\limits_{i=1}^m y_i (A_i x) + \alpha\,\text{ and}\\
        ||Ax^* - b||_\infty &\leq \rho.
    \end{align*}
\end{definition}

After updating $\widetilde{y}^{(t)}$ via MWU, we project it to the space of $\frac{1}{s}$-dense distributions via the Bregman projection $\Gamma_s$ at each step. It is known \citep{herbster2001tracking} that this gives low regret guarantees against the best, in-hindsight, dense distribution: 
\begin{lemma}
    If $||\ell^{(t)}||_\infty \leq 1$ and $\eta \leq \frac{1}{2}$ then it is true that:
    \begin{align*}
        \frac{1}{T}\sum\limits_{t=1}^T\langle \ell^t, \widetilde{B}^t\rangle - \frac{1}{T}\sum\limits_{t=1}^T\langle \ell^t,\widetilde{B}^*\rangle + \eta + \frac{\log n}{\eta T}
    \end{align*}
    when $\widetilde{B}^*$ is any distribution uniform on a subset $S \subseteq [n]$ with $|S| \leq s$
\end{lemma}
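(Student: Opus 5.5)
The plan is the standard relative-entropy potential argument for multiplicative weights, with one extra step to account for the Bregman projection $\Gamma_s$. I read the displayed expression as the regret bound
\[
\frac{1}{T}\sum_{t=1}^T\langle \ell^t, \widetilde{B}^t\rangle - \frac{1}{T}\sum_{t=1}^T\langle \ell^t,\widetilde{B}^*\rangle \leq \eta + \frac{\log n}{\eta T}.
\]
I take the comparator $\widetilde{B}^*$ to lie in the convex set $\mathcal{P}_s$ of $\frac1s$-dense distributions, that is, distributions with every coordinate at most $\frac1s$. This is the set $\Gamma_s$ projects onto. A distribution uniform on $S$ belongs to $\mathcal{P}_s$ exactly when $|S|\geq s$, so I read the size condition on $S$ in the statement in that sense.

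\textbf{Potential.} Take $\Phi_t := \mathrm{KL}(\widetilde{B}^*\,\|\,\widetilde{B}^t)$. Start from the uniform distribution $\widetilde{B}^1$, which lies in $\mathcal{P}_s$ whenever $s\leq n$. Then $\Phi_1 = \log n - H(\widetilde{B}^*) \leq \log n$. Since $\Phi_{T+1}\geq 0$, it suffices to show the per-round inequality
\[
\Phi_t - \Phi_{t+1} \geq \eta\langle \ell^t,\widetilde{B}^*\rangle - \eta\langle \ell^t,\widetilde{B}^t\rangle - \eta^2 .
\]
Summing it over $t$ gives $\eta\sum_t(\langle \ell^t,\widetilde{B}^t\rangle-\langle \ell^t,\widetilde{B}^*\rangle)\leq \log n + \eta^2T$, and dividing by $\eta T$ yields the claim.

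\textbf{Per-round step.} Each round has two parts.
\begin{itemize}
\item The unprojected multiplicative update is $\hat w^{t+1}_i = \widetilde{B}^t_i e^{-\eta\ell^t_i}/Z_t$ with $Z_t=\sum_i \widetilde{B}^t_i e^{-\eta \ell^t_i}$. A direct computation gives
\[
\mathrm{KL}(\widetilde{B}^*\|\widetilde{B}^t)-\mathrm{KL}(\widetilde{B}^*\|\hat w^{t+1}) = -\eta\langle \ell^t,\widetilde{B}^*\rangle - \log Z_t .
\]
Because $|\eta\ell^t_i|\leq \frac12\leq 1$, we have $e^{-x}\leq 1-x+x^2$, so $Z_t\leq 1-\eta\langle\ell^t,\widetilde{B}^t\rangle+\eta^2$. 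Then $\log(1+y)\leq y$ bounds $-\log Z_t$ below by $\eta\langle \ell^t,\widetilde{B}^t\rangle-\eta^2$. This gives the per-round inequality, but for $\hat w^{t+1}$ in place of $\widetilde{B}^{t+1}$. (The sign convention of the losses only determines which of the two inner products is subtracted. I will fix it so that the displayed regret is the one controlled.)
\item The projection is $\widetilde{B}^{t+1}=\Gamma_s(\hat w^{t+1})$, the KL-projection onto the closed convex set $\mathcal{P}_s$. The generalized Pythagorean inequality for Bregman projections gives, for every $u\in\mathcal{P}_s$,
\[
\mathrm{KL}(u\|\hat w^{t+1})\geq \mathrm{KL}(u\|\Gamma_s(\hat w^{t+1}))+\mathrm{KL}(\Gamma_s(\hat w^{t+1})\|\hat w^{t+1})\geq \mathrm{KL}(u\|\widetilde{B}^{t+1}).
\]
Applied with $u=\widetilde{B}^*$, this shows that projecting can only decrease the potential, so the bound from the first part transfers to $\Phi_t-\Phi_{t+1}$.
\end{itemize}

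The main obstacle is the projection step. It requires that $\widetilde{B}^*\in\mathcal{P}_s$, since this is exactly where the restriction to dense comparators enters. It also requires invoking the Pythagorean property for KL divergence restricted to the simplex. I would cite \citet{herbster2001tracking} for this property, or verify it from the first-order optimality condition of the projection, $\langle \nabla\mathrm{KL}(\cdot\|\hat w)(\Gamma_s\hat w), u-\Gamma_s\hat w\rangle\geq 0$. Everything else is the routine Hedge calculation.
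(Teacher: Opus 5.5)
Your argument is correct. The paper gives no proof of this lemma; it only attributes the bound to \citet{herbster2001tracking}. Your proposal is a self-contained version of the standard argument behind that citation. It has two steps: the relative-entropy potential for the unprojected Hedge step, and then the generalized Pythagorean inequality, which shows that the KL-projection $\Gamma_s$ cannot increase $\mathrm{KL}(\widetilde{B}^*\,\|\,\cdot)$ when the comparator is $\frac1s$-dense. Compared with the bare citation, your version makes explicit the one place where density of the comparator is used. That is the point the ``all but $s-1$ constraints'' guarantee of \Cref{lemma:non-private} relies on, since it comes from taking $\widetilde{B}^*$ uniform on a set of at least $s$ badly violated constraints.

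Your two readings of the garbled statement are also right. The missing relation is $\leq$. A distribution uniform on $S$ is $\frac1s$-dense only when $|S|\geq s$, not when $|S|\leq s$. Your initial potential even gives the slightly sharper $\Phi_1=\log(n/|S|)\leq\log(n/s)$ in place of $\log n$.

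One slip needs fixing: the per-round inequality you display has the two inner products swapped. Your bullet computation actually gives
\[
\Phi_t-\mathrm{KL}(\widetilde{B}^*\,\|\,\hat w^{t+1}) \geq \eta\langle \ell^t,\widetilde{B}^t\rangle-\eta\langle\ell^t,\widetilde{B}^*\rangle-\eta^2 ,
\]
and this is the orientation your summed inequality uses. With the update $e^{-\eta\ell^t_i}$ as you wrote it, no adjustment of the sign convention is needed. The computation directly controls the displayed regret; only the display has to be corrected.
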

In other words, we can get the same guarantee as the classic MWU, but while only using dense distributions!
% \footnote{We can now only compete against dense distributions.}

% Set $\eta = \sqrt{\frac{\log n}{T}}$, $T=\frac{36\rho^2\log n}{\alpha^2}$ and assume that we $(\alpha/3,\beta/T)$-oracle available and that $0 \leq \alpha \leq 9\rho$. 
\cite{hsu2014privately} prove that dense MWU finds $x\in \mathcal{K_{\text{OPT}}}$ such that \textit{all but $s-1$} constraints are violated with error at most $\alpha$.

\paragraph{Constraint Privatization}
In a constraint private LP, we assume that for neighboring datasets $D \sim D'$ the constraints $(A(D),b(D))$ and $(A(D'),b(D'))$ differ in that one of them has one extra row. Note that we no longer have a low-sensitivity assumption. Suppose our oracle is $\varepsilon'=\varepsilon/(2T\log(1/\delta))^{1/2}$ private when on neighboring instances $y,y'$ we have
$$
||y||_\infty, ||y'||_\infty \leq \frac{1}{s},\text{ and } ||y-y'||_1 \leq \frac{2}{s}.
$$
Then, by advanced composition the whole algorithm is $(\varepsilon,\delta)$-private. When an extra row is added, it is known that the Bregman projections between $y$ and $y'$ are close (\Cref{lemma:sensitivity-bregman}), so privacy follows.

\paragraph{Fast-MWEM on the Dual Oracle}
Our task becomes efficiently and privately implementing the dual oracle. Let us assume that the entries of matrix $A$, as well as the constraint $c$ are positive (packing / covering LP). Recall that the oracle $\mathcal{O}$ wants to output, given some $y \in \Delta([m])$,
$$
\mathop{\arg\min}_{x\in\mathcal{K}} \{y^TAx\},
$$
where $\mathcal{K} = \{x\in\mathbb{R}^d_+:c^T x=\text{OPT}\}$. This is a linear program itself, so by the fundamental theorem of linear programming its solution has to be a vertex of the convex polytope $\mathcal{K}$. These vertices have the form $v^{(j)} = \frac{\text{OPT}}{c_j}\cdot e_j$, so our optimization problem becomes
\begin{align*}
    \text{find}\quad \mathop{\arg\min}\limits_{j\in[d]}\left\{\frac{\text{OPT}}{c_j}\cdot y^T A \vec{e_j}\right\}.
\end{align*}
To solve it privately, we use the exponential mechanism with score function
\begin{align*}
    Q(j,y) = -\frac{\text{OPT}}{c_j}y^TA_{:,j}.
\end{align*}
If we let $N_j := -\frac{\text{OPT}}{c_j}\cdot (A^T)_j$ then we just have to maximize $\langle y, N_j\rangle$, over $j \in [d]$.

Note that the vectors $N_j$ are fixed and can be preprocessed in the beginning of the algorithm. Every iteration we query our index $\mathcal{H}$ with $y_j$ to implement \textsc{LazyEM}. For the proof, see \Cref{sec:proof-of-dense-mwu-fast}.
\begin{theorem}
\label{thm:dense-mwu-fast}
If $\rho$ is the width of our LP and $\alpha \leq 9\rho$, then there exists an algorithm that finds some $x^*$ such that $A_i x^* \leq b_i+\alpha$ except for at most $s$ constraints with
$$
s = \widetilde{O}\left(\frac{OPT^2\log\frac{1}{\beta}}{c_{\min}^2\alpha^2\varepsilon}\right).
$$
where $c_{\text{min}}$ is the minimum value of $c$. The algorithm has an expected per-iteration runtime scaling with $O(m\sqrt{d})$, as opposed to the prior $O(md)$.
\end{theorem}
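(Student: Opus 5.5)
The plan is to reduce \Cref{thm:dense-mwu-fast} to the dense-MWU analysis of \cite{hsu2014privately}, with the dual oracle replaced by \textsc{LazyEM} over the $d$ candidate vertices $v^{(j)}=\frac{\text{OPT}}{c_j}e_j$. The argument has three parts: privacy, utility, and runtime.

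\textbf{Privacy and the oracle's sensitivity.} Fix the preprocessed vectors $N_j = -\frac{\text{OPT}}{c_j}(A^T)_j$ and build the index $\mathcal{H}$ on $\{N_j\}_{j=1}^d$.

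\begin{itemize}
\item \emph{Neighboring inputs.} The oracle is invoked on distributions $y,y'$ with $\|y\|_\infty,\|y'\|_\infty\le 1/s$ and $\|y-y'\|_1\le 2/s$. This is exactly what \Cref{lemma:sensitivity-bregman} supplies after the Bregman projection $\Gamma_s$.
\item \emph{Score sensitivity.} For the score $Q(j,y)=\langle y,N_j\rangle$ we get
$$|Q(j,y)-Q(j,y')|\le \|y-y'\|_1\max_i \tfrac{\text{OPT}}{c_j}|A_{ij}|.$$
Using the width bound $|A_i v^{(j)}|\lesssim \rho$, this gives sensitivity $\Delta=O(\rho/s)$. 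Alternatively, bounding by $\text{OPT}/c_{\min}$ directly gives $\Delta\le 2\,\text{OPT}/(s\,c_{\min})$ when the entries of $A$ are normalized to $[0,1]$; I would use this second form, since it matches the stated $s$.
\item \emph{Exactness of \textsc{LazyEM}.} Conditioned on $\mathcal{H}$ returning the exact top-$\sqrt d$ set, \textsc{LazyEM} samples exactly from the exponential mechanism, as in the proof of \Cref{thm:main-thm-acc-mw-em}. So each call is $\varepsilon'$-DP with $\varepsilon'=\varepsilon/\sqrt{2T\log(1/\delta)}$.
\item \emph{Composition.} Advanced composition (\Cref{thm:advanced_composition}) over $T$ rounds gives $(\varepsilon,\delta)$-DP. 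The failure probability of $\mathcal{H}$ is absorbed into $\delta$ exactly as before. With an imperfect index, I would instead invoke the privacy-preserving variant of \Cref{sec:approximation-mips}.
\end{itemize}

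\textbf{Utility.} By the EM guarantee, with probability $1-\beta/T$ per round the chosen $j$ is within
$$\alpha_{\mathrm{EM}}=\frac{2\Delta(\ln d+\ln(T/\beta))}{\varepsilon'}$$
of the optimal score. This makes \textsc{LazyEM} an $(\alpha_{\mathrm{EM}},\beta/T)$-dual oracle, and its output $v^{(j)}$ has $\|Av^{(j)}-b\|_\infty\le\rho$ by the definition of width.

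Requiring $\alpha_{\mathrm{EM}}\le\alpha/3$, and substituting $\Delta = O(\text{OPT}/(s c_{\min}))$, $\varepsilon'=\varepsilon/\sqrt{2T\log(1/\delta)}$ and $T=O(\rho^2\log m/\alpha^2)$, I solve for $s$. This should give $s=\widetilde{O}\big(\text{OPT}\,\rho\log\frac1\beta/(c_{\min}\alpha^2\varepsilon)\big)$. After the normalization $\rho\lesssim \text{OPT}/c_{\min}$ implicit in the statement, this reads $\widetilde{O}\big(\text{OPT}^2\log\frac1\beta/(c_{\min}^2\alpha^2\varepsilon)\big)$.

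Plugging this oracle into the dense-MWU guarantee of \cite{hsu2014privately} yields $x^*$ satisfying all but at most $s$ constraints up to error $\alpha$. A union bound over the $T$ rounds gives overall probability $1-\beta$.

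\textbf{Runtime.} Each iteration has two parts:
\begin{itemize}
\item The MWU update and Bregman projection cost $O(m)$, or $O(m\log m)$ for the projection.
\item \textsc{LazyEM} queries $\mathcal{H}$ for $\sqrt d$ candidates and lazily samples an expected $O(\sqrt d)$ further candidates. Each score $\langle y,N_j\rangle$ costs $O(m)$, so this is $O(m\sqrt d)$ in expectation, versus $O(md)$ for the exhaustive EM.
\end{itemize}

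\textbf{Main obstacle.} I expect the hardest part to be the sensitivity and parameter bookkeeping that yields exactly the stated $\text{OPT}^2/c_{\min}^2$ dependence for $s$. The width $\rho$ and the scale $\text{OPT}/c_{\min}$ of the vertices must be reconciled. I would first try bounding both $\Delta$ and $\rho$ by $\text{OPT}/c_{\min}$, assuming normalized $A$, and then check that the resulting $T$ and $s$ match the statement.
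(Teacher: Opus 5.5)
Your proposal is correct and follows essentially the same route as the paper. Both arguments plug (Lazy)EM over the $d$ vertices $\frac{\text{OPT}}{c_j}e_j$ into the dense-MWU lemma of \cite{hsu2014privately} as an $(\alpha/3,\beta/T)$-approximate dual oracle, bound the score sensitivity by $O(\text{OPT}/(c_{\min}s))$, identify the width with $\text{OPT}/c_{\min}$ (the paper sets $\rho=\text{OPT}/c_{\min}-b_{\max}$), and solve for $s$. The only difference is a constant: the paper charges the new constraint's contribution separately and gets sensitivity $3\,\text{OPT}/(c_{\min}s)$, while you fold it into $\|y-y'\|_1$ over the zero-padded index set.
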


\section{Experiments} % Steve
We validate that our approach gives improvements in the runtime of MWEM, while preserving accuracy and privacy. We fix $\eps=1$ and $\delta=10^{-3}$ throughout.

For our experiments, we consider three implementations of the $k$-MIPS index $\mathcal{H}$. Our baseline is a \textit{flat index}, which simply performs a linear scan over all the vectors. While inefficient, this index can help measure the error of our algorithm when compared to the original MWEM. To improve the runtime, we consider IVF and HNSW indices, the implementations of which we borrow from the FAISS library \citep{douze2024faiss} (see \Cref{sec:index-config}).

\begin{figure}[t]
    \centering
    \includegraphics[scale=0.5]{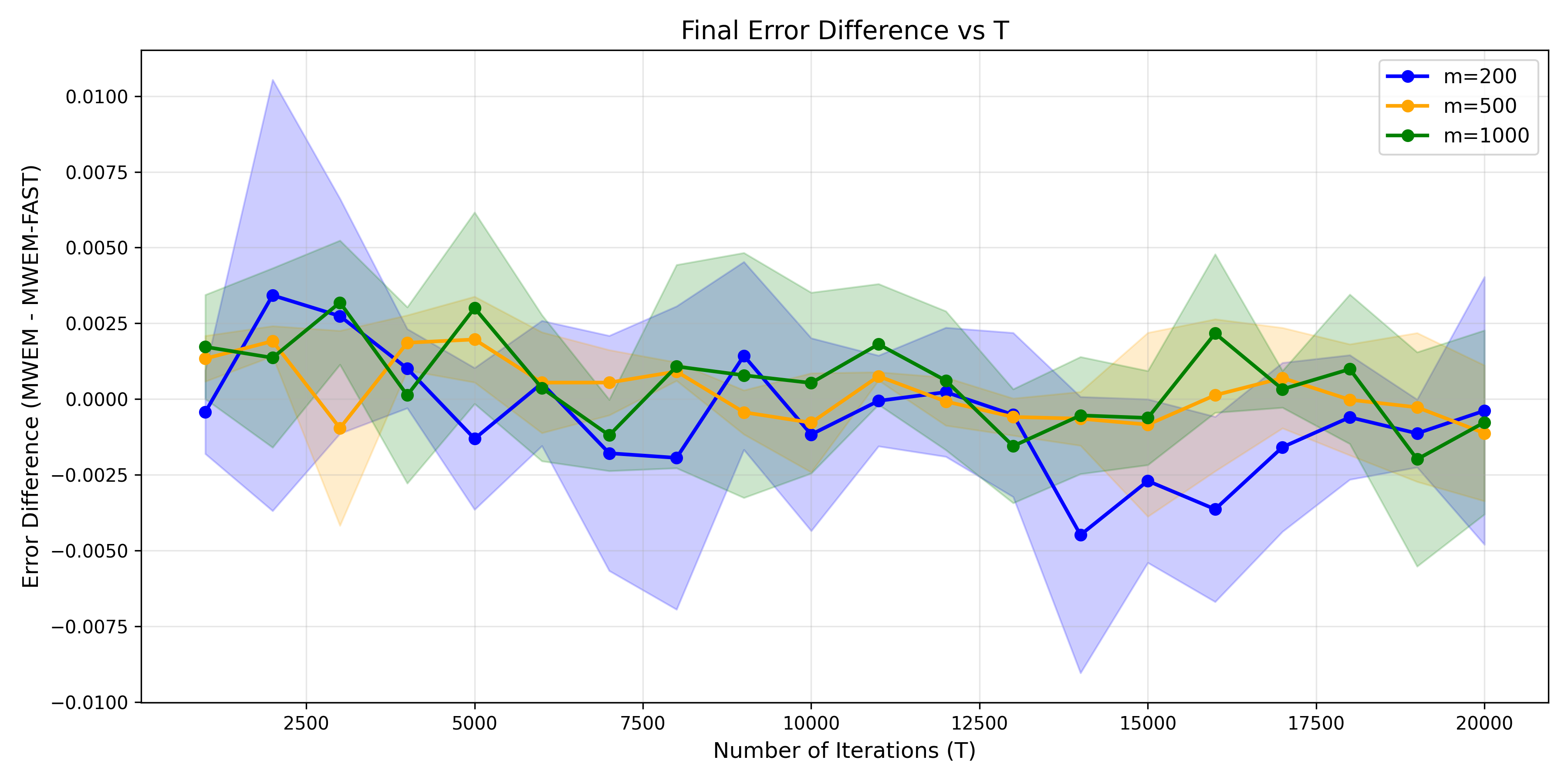}
    \caption{The error difference between MWEM and Fast MWEM}
    \label{fig:error-diff}
\end{figure}

\subsection{Fast Private Linear Query Release}
We first consider the linear queries problem. Fixing the domain size $U=|\cX|=3000$, we generate the true data histogram by sampling $n=500$ points from $\mathcal{N}(\frac{U}{3}, \frac{U}{15})$. Each query is a binary vector of length $|\cX|$ generated by sampling $U/4$ points from $\mathcal{N}(\frac{U}{2}, \frac{U}{5})$.

To test the accuracy of Fast MWEM we first measure, for $m \in \{200,500,1000\}$, the difference in the error between Fast MWEM (flat) and MWEM over $T=20000$ iterations. \textbf{In \Cref{fig:error-diff} we consistently observe that the difference in the two algorithms' error is close to zero}, which aligns with our theoretical expectations.

We also study the error of Fast MWEM throughout multiple iterations when different indices are used. \textbf{We observe in \Cref{fig:error-indices} that all indices achieve about the same error with the flat index, and that error decreases as $T$ increases}, which is predicted by our theory.
\begin{figure}[h]
    \centering
    \includegraphics[width=1\linewidth]{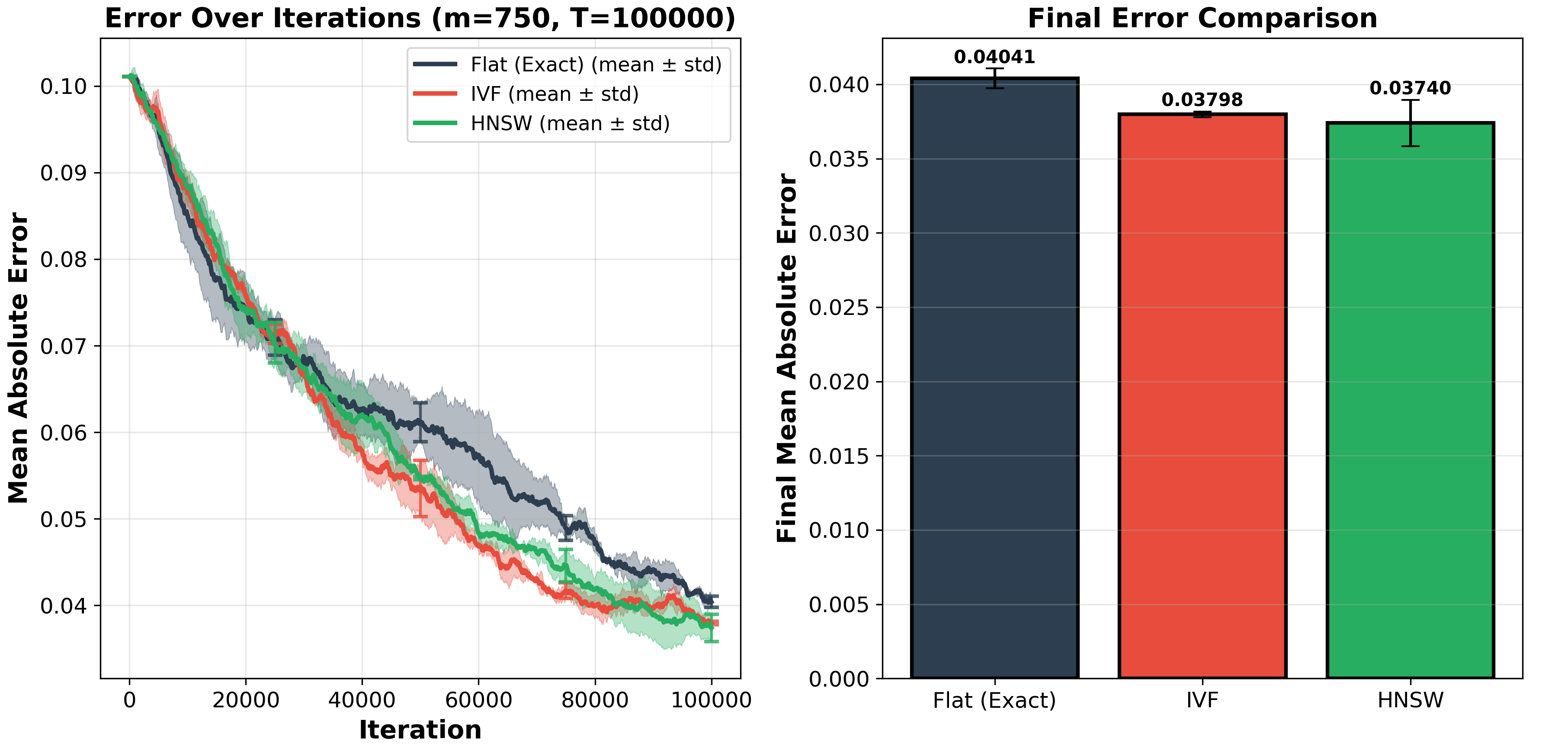}
    \caption{Error over iterations for different indices}
    \label{fig:error-indices}
\end{figure}

To evaluate performance we test $10^4 \leq m \leq 10^5$ for the different indices. As expected, \textbf{the flat index gives almost linear runtime scaling, while the other indices give sublinear scaling}. In particular, the HNSW index notably yields the fastest runtime, which matches the $O(\sqrt{m})$ baseline predicted by our theory (\Cref{fig:index-perf}). 
\begin{figure}[h]
    \centering
    \includegraphics[scale=0.1]{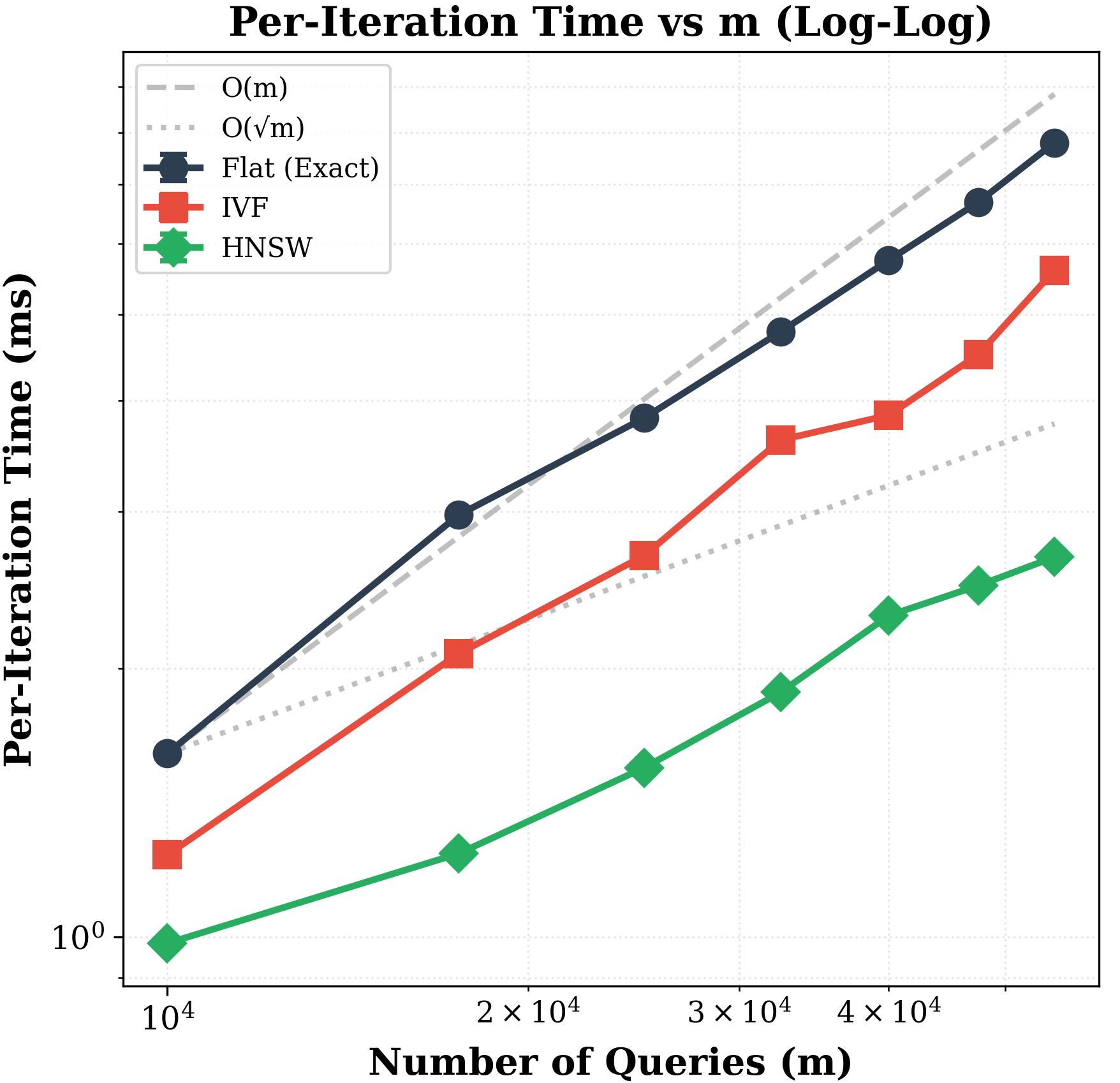}
    \caption{The performance of Fast MWEM for different indices.}
    \label{fig:index-perf}
\end{figure}

Finally, in \Cref{sec:more-experiments-linear-queries} we provide additional experimental results verifying our theory. We study the error as a function of the samples $n$ and also the effect of the margin parameter $B$ on the error of Fast MWEM.

\subsection{Fast Scalar Private LP solving}
We generate a feasibility LP instance by creating a large random matrix $A\sim \mathcal{N}(0,I_{m\times d})$ along with a solution $x^*\in\Delta([d])$. We solve the feasibility LP $Ax\leq \beta$ where $\beta := Ax+\delta$ for some random perturbation vector $\delta$. Throughout all experiments, we fix $d=20$, $\Delta_\infty=0.1$, and $\alpha=0.5$. 

With $T=5000$ we track the fraction of violated constraints compared to the baseline MWEM algorithm. \textbf{Fast MWEM behaves almost identical with MWEM}, which implies that Fast MWEM has a minimal effect on the accuracy of MWEM for solving LPs (\Cref{fig:lp-error}).
\begin{figure}[h]
    \centering
    \includegraphics[scale=0.1]{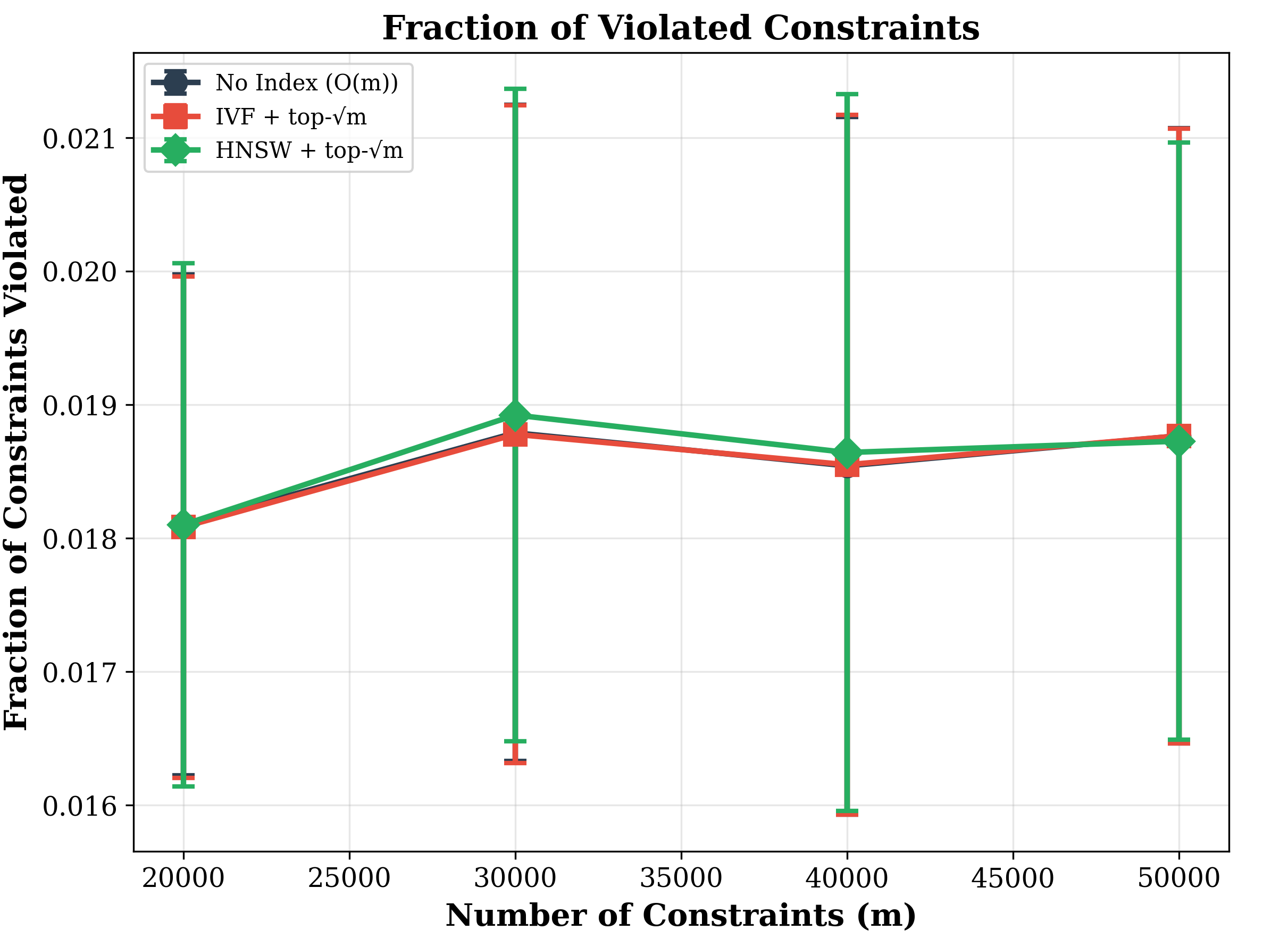}
    \caption{Violated constraints across different indices.}
    \label{fig:lp-error}
\end{figure}

We also perform ablation tests with $3\cdot 10^5 \leq m \leq 15\cdot 10^5$, comparing three indices to each other and to the classic MWEM solution. \textbf{We observe in \Cref{fig:runtime-lp} that MWEM with HNSW vastly outperforms the linear flat index, scaling indeed as $O(\sqrt{m})$ (up to constants).} We do not observe a speed-up when using IVF in this case. We provide additional experiments and plots in \Cref{sec:more-experiments-lps}.
\begin{figure}[h]
    \centering
    \includegraphics[scale=0.1]{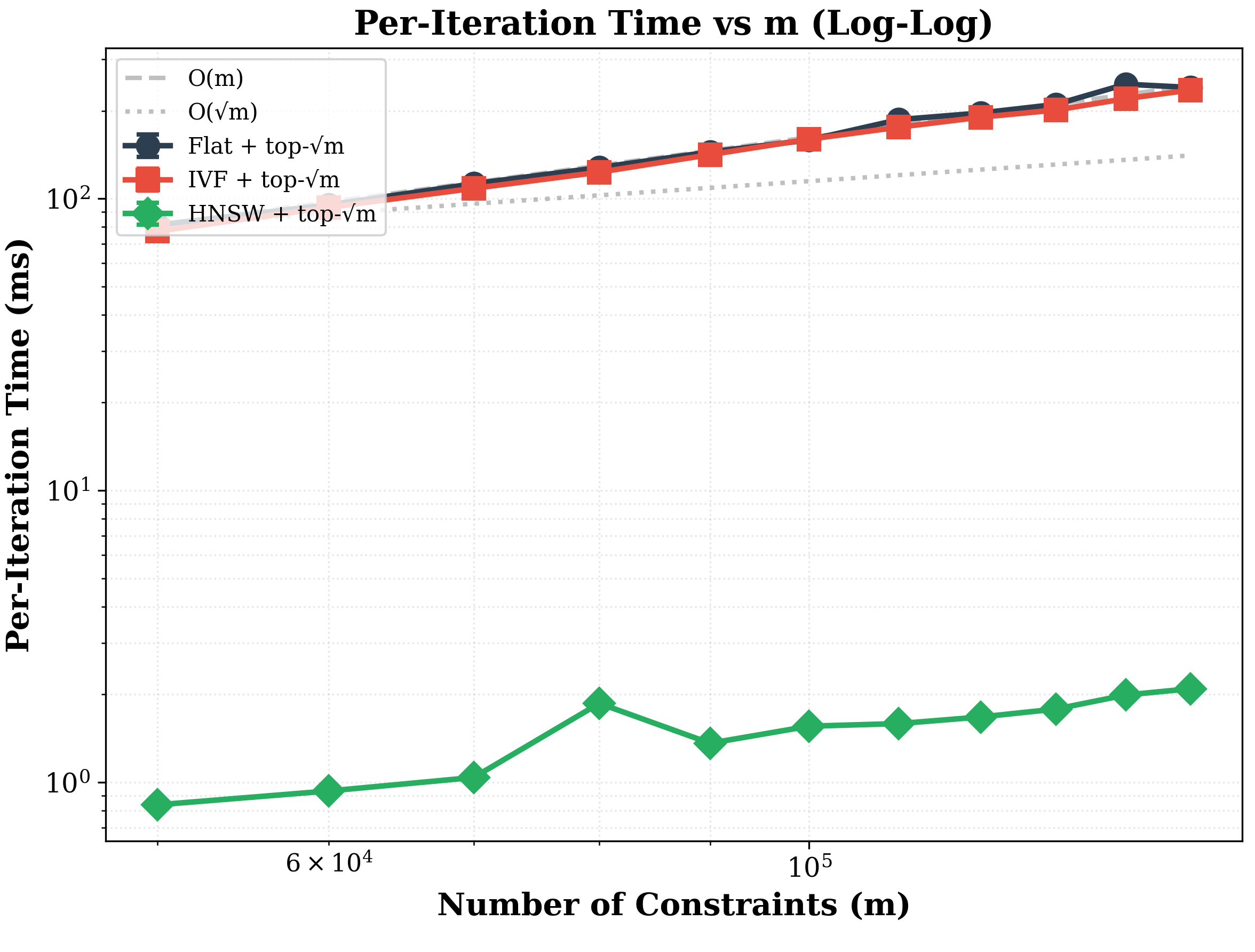}
    \label{fig:runtime-lp}
\end{figure}
\section{Conclusion \& Discussion}
In this work we proposed a new technique for accelerating the celebrated MWEM algorithm, while preserving its runtime and utility guarantees. Empirically, we also showed the benefit of using our method via experiments. Applying our technique in more practical settings to explore its real-world performance is an interesting future direction.

\section*{Acknowledgements}
This work originated as a final project for the course ``Privacy in Statistics and Machine Learning'' (Spring 2025) at Boston University, instructed by Professor Adam Smith. We thank Professor Smith for his guidance, encouragement, and insightful advice throughout the development of this research. We also wish to thank Ta Duy Nguyen for providing helpful comments on the manuscript and for directing our attention to the private linear programming (LP) solver problem.

\bibliography{references}
\bibliographystyle{apalike}

%%%%%%%%%%%%%%%%%%%%%%%%%%%%%%%%%%%%%%%%%%%%%%%%%%%%%%%%%%%%%%%%%%%%%%%%%%%%%%%
%%%%%%%%%%%%%%%%%%%%%%%%%%%%%%%%%%%%%%%%%%%%%%%%%%%%%%%%%%%%%%%%%%%%%%%%%%%%%%%
% APPENDIX
%%%%%%%%%%%%%%%%%%%%%%%%%%%%%%%%%%%%%%%%%%%%%%%%%%%%%%%%%%%%%%%%%%%%%%%%%%%%%%%
%%%%%%%%%%%%%%%%%%%%%%%%%%%%%%%%%%%%%%%%%%%%%%%%%%%%%%%%%%%%%%%%%%%%%%%%%%%%%%%
\newpage
\appendix
\onecolumn
\section{Dense Distributions and Bregman Projections}
\label{sec:bregman}
\begin{definition}[Dense Distributions]
    A distribution $y$ over $[n]$ is $\frac{1}{s}$-\textbf{dense} if $||y||_\infty \leq \frac{1}{s}$.
\end{definition}

Dense MWU requires projecting to the space of dense distributions in the following way:
\begin{definition}[Bregman Projections]
    Let $F:[0,1]^n\to\mathbb{R}$ and $D_F(p,q) := F(p)-F(q)-\langle \nabla F(q), p-q\rangle$ be the $F$-Bregman divergence of $p$ and $q$. We define the \textbf{Bregman projection} of $A \in [0,1]^n$ to the space of $\frac{1}{s}$-dense distributions as the solution to % the following 
    optimization problem
    \begin{align*}
        \Gamma_s A=\mathop{\arg\min}\limits_{{P\text{ is $\frac{1}{s}$-dense}} } D_{\text{KL}}(P\mid\mid A),
    \end{align*}
    where $\text{KL}(x) = \sum_a x_a \log x_a$. Solving this via Lagrange Multipliers we obtain
    \begin{align*}
        \Gamma_sA_a = \frac{1}{s}\min\{1,cA_a\},
    \end{align*}
    where $c$ is such that $s = \sum_a \min \{1,cA_a\}$. 
\end{definition}
\begin{lemma}[Lemma 13 in \citep{hsu2014privately}]
    \label{lemma:sensitivity-bregman}
    Let $A,A'$ be measures on $\mathcal{A}$ and $\mathcal{A}\cup a'$, identical on $\mathcal{A}$. Let $\widetilde{A},\widetilde{A}'$ be the Bregman projections of $A$ and $A'$ into the set of $\frac{1}{s}$-dense distributions. Then,
    $$
    ||\widetilde{A}-\widetilde{A}'||_1 \leq \frac{1}{s}.
    $$
\end{lemma}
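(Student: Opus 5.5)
The plan is to use the closed form of the Bregman projection, $\Gamma_s A_a = \frac{1}{s}\min\{1, cA_a\}$, where the normalizing constant $c$ is fixed by $\sum_a \min\{1,cA_a\} = s$. Write $c$ for the constant for $A$ on $\mathcal{A}$, and $c'$ for the constant for $A'$ on $\mathcal{A}\cup a'$. Extend $\widetilde{A}$ by $\widetilde{A}_{a'}=0$ so that both vectors live on $\mathcal{A}\cup a'$.

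\textbf{Step 1 (monotonicity of the constant).} The map $g(c)=\sum_{a\in\mathcal{A}}\min\{1,cA_a\}$ is nondecreasing in $c$. Adding the coordinate $a'$ only adds the nonnegative term $\min\{1,c A'_{a'}\}$. Hence any $c'$ solving the enlarged equation satisfies $g(c')\le s = g(c)$, and we may take $c'\le c$. If $g$ is flat at level $s$, I choose the minimal root in both cases, so that $c$ is well defined.

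\textbf{Step 2 (coordinatewise domination on $\mathcal{A}$).} From $c'\le c$ we get $\widetilde{A}'_a \le \widetilde{A}_a$ for every $a\in\mathcal{A}$. Both projections are probability distributions, so
\[
\sum_{a\in\mathcal{A}}\bigl(\widetilde{A}_a-\widetilde{A}'_a\bigr)=\widetilde{A}'_{a'}.
\]
All the summands here are nonnegative. This makes the $\ell_1$ distance explicit:
\[
\|\widetilde{A}-\widetilde{A}'\|_1=\sum_{a\in\mathcal{A}}\bigl(\widetilde{A}_a-\widetilde{A}'_a\bigr)+\widetilde{A}'_{a'}=2\,\widetilde{A}'_{a'}.
\]

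\textbf{Step 3 (bounding the new coordinate).} By $\frac1s$-density, $\widetilde{A}'_{a'}\le \frac{1}{s}$. To reach the stated $\frac1s$, one needs $\widetilde{A}'_{a'}\le\frac{1}{2s}$, or else the convention of \citep{hsu2014privately} in which the distance between the two measures is counted once. That convention is the one-sided mass $\sum_{a}(\widetilde{A}_a-\widetilde{A}'_a)_+$, i.e. the total-variation normalization of $\ell_1$. Under it, Step 2 immediately gives the quantity $\widetilde{A}'_{a'}\le\frac1s$, and the lemma follows.

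\textbf{Main obstacle.} The main obstacle is exactly this factor of $2$, and I would first test whether it can be removed for the literal $\ell_1$ norm. It cannot in general. With $s=1$, the projection is plain normalization. Take $\mathcal{A}=\{a\}$ with $A_a=1$, and add $a'$ with $A'_{a'}=M\to\infty$. Then $\widetilde{A}=(1,0)$ and $\widetilde{A}'\to(0,1)$, so $\|\widetilde{A}-\widetilde{A}'\|_1\to 2>\frac1s$.

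So the plan is the following. Establish Steps 1–2 rigorously, which is where all the structure lies. Then conclude the $\frac1s$ bound under the total-variation reading of $\|\cdot\|_1$ used in Lemma 13 of \citep{hsu2014privately}, and state explicitly that the literal $\ell_1$ distance is at most $\frac2s$. This is consistent with how the lemma is later invoked in the paper, where neighboring $y,y'$ are only required to satisfy $\|y-y'\|_1\le\frac2s$.
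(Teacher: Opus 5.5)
The paper gives no proof of this lemma; it only cites \citep{hsu2014privately}, so there is no argument of the paper's to compare yours against. Your argument is correct: domination $\widetilde{A}'_a\le\widetilde{A}_a$ on $\mathcal{A}$, then $\|\widetilde{A}-\widetilde{A}'\|_1=2\widetilde{A}'_{a'}\le 2/s$.

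One step needs tightening. In Step 1, the inference from $g(c')\le g(c)$ to $c'\le c$ is not valid for a merely nondecreasing $g$. The clean version is as follows. Let $h(x)=g(x)+\min\{1,xA'_{a'}\}$. Then $h$ is continuous and nondecreasing, with $h(0)=0<s=g(c)\le h(c)$, so the minimal root $c'$ of $h=s$ satisfies $c'\le c$. In the flat case the projected values do not depend on which root is chosen, so nothing else changes.

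Your counterexample is also right. The statement as written, with the literal $\ell_1$ norm, is false by a factor of $2$. This is not an artifact of unbounded weights, which matters because in dense MWU one might expect bounded measures. Take $s=2$, $A=(0.1,0.1)$ and $A'=(0.1,0.1,1)$. The normalizing constant is $c'=5$, so $\widetilde{A}'=(\tfrac14,\tfrac14,\tfrac12)$, while $\widetilde{A}=(\tfrac12,\tfrac12,0)$. The $\ell_1$ distance is $1=2/s$, so the bound $2/s$ is tight. The bound $1/s$ holds only for total variation, or for the $\ell_1$ distance restricted to $\mathcal{A}$.

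Do not assert that Lemma 13 of \citep{hsu2014privately} uses a total-variation convention; you have not verified that. Instead, state and prove the corrected lemma directly: $\|\widetilde{A}-\widetilde{A}'\|_1\le 2/s$, with equality attainable. As you noticed, this is the form the paper actually relies on downstream. The constraint-private oracle is required to be private for neighbors with $\|y-y'\|_1\le 2/s$, and the proof of \Cref{thm:dense-mwu-fast} says that $y$ can change by up to $\frac{2}{s}$. So the correction does not break anything later in the paper.
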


\section{Properties of Differential Privacy}
We use two key properties of DP in our work: adaptive composition and post-processing. Adaptive composition \citep{dwork2010boosting} describes a situation when DP algorithms are linked sequentially in an adaptive way.
\begin{theorem}
\label{thm:advanced_composition}
    Suppose algorithms $\mathcal{A}_1,...,\mathcal{A}_k$ are $(\varepsilon, \delta)$--DP. Let $\mathcal{A}'$ be the adaptive composition of these algorithms: on input database $x$, algorithm $\mathcal{A}_i$ is provided with $x$, and, for $i \geq 2$, with the output $y_{i-1}$ of $\mathcal{A}_{i-1}$. Then, for any $\delta' \in (0,1)$, Algorithm $\mathcal{A}$ is $(\widetilde{\varepsilon},\widetilde{\delta})$--DP with
    $$\widetilde{\varepsilon} = \varepsilon\cdot\sqrt{2k\ln(1/\delta')}+2k\varepsilon^2\,\text{ and }\,\widetilde{\delta}=k\delta+\delta'.$$
\end{theorem}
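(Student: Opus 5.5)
The statement just before the cutoff is \Cref{thm:advanced_composition}, the advanced composition theorem of \citet{dwork2010boosting}. I would prove it with the standard privacy-loss-random-variable argument.

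\textbf{Reduction to the pure case.} First handle $\delta=0$, then reduce the general case to it. For fixed neighbors $x\sim x'$ and an outcome sequence $y=(y_1,\dots,y_k)$, define the privacy loss
\begin{align*}
L(y)=\sum_{i=1}^k \ln\frac{\Pr[\mathcal{A}_i(x,y_{<i})=y_i]}{\Pr[\mathcal{A}_i(x',y_{<i})=y_i]} =: \sum_{i=1}^k L_i .
\end{align*}
The goal is to show $\Pr_{y\sim\mathcal{A}'(x)}[L(y)>\widetilde{\varepsilon}]\le\delta'$. Once this holds, splitting any event $E$ into the part where $L\le\widetilde\varepsilon$ and the rest gives $\Pr[\mathcal{A}'(x)\in E]\le e^{\widetilde\varepsilon}\Pr[\mathcal{A}'(x')\in E]+\delta'$.

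\textbf{Concentration via Azuma.} Conditioned on $y_{<i}$, pure $\varepsilon$-DP gives $|L_i|\le\varepsilon$. The key lemma is the conditional mean bound $\mathbb{E}[L_i\mid y_{<i}]\le\varepsilon(e^\varepsilon-1)$. This is the KL divergence between two distributions whose log-likelihood ratio is bounded by $\varepsilon$. To prove it, write $\mathrm{KL}(P\|Q)\le \mathrm{KL}(P\|Q)+\mathrm{KL}(Q\|P)=\sum_y (P(y)-Q(y))\ln\frac{P(y)}{Q(y)}$. Then use $|\ln\frac{P}{Q}|\le\varepsilon$ together with $|P-Q|\le (e^\varepsilon-1)\min(P,Q)$, and sum. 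The centered sums then form a martingale with increments bounded by $2\varepsilon$ (or $\varepsilon$ with a sharper Hoeffding range). Azuma--Hoeffding gives
\begin{align*}
\Pr\Big[L> k\varepsilon(e^\varepsilon-1)+\varepsilon\sqrt{2k\ln(1/\delta')}\Big]\le\delta'.
\end{align*}
Finally, $e^\varepsilon-1\le 2\varepsilon$ for $\varepsilon\le1$, which yields the stated $\widetilde\varepsilon$. Careful constant bookkeeping is routine.

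\textbf{Approximate case.} For $\delta>0$, I use the standard decomposition lemma: an $(\varepsilon,\delta)$-DP mechanism's output distributions on $x,x'$ can be written as mixtures $(1-\delta)P_0+\delta P_1$ and $(1-\delta)Q_0+\delta Q_1$, where $P_0,Q_0$ satisfy the pure $\varepsilon$ likelihood-ratio bound. This must be done conditionally at each step, given the adaptive history. A union bound over the $k$ "bad" coins costs $k\delta$, and on the good event the pure argument applies. This gives $\widetilde\delta=k\delta+\delta'$.

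\textbf{Main obstacle.} I expect the decomposition step to be the delicate part. Constructing $P_0,Q_0$ with two-sided ratio bounds, while keeping the coupling valid under adaptivity, needs care. I would first try the simpler route of bounding the set where the one-step ratio exceeds $e^\varepsilon$, which has $x$-mass at most $\delta$. I would then truncate the loss there and apply Azuma to the truncated variables.
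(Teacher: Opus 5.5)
The paper gives no proof of this statement; it is imported from \citet{dwork2010boosting}. Your outline is the standard argument for it, and the pure case is sound. The symmetrized-KL estimate gives $\mathbb{E}[L_i\mid y_{<i}]\le\varepsilon(e^\varepsilon-1)$. Azuma--Hoeffding in its range form (each centered increment lies in a history-determined interval of length $2\varepsilon$) then gives exactly the $\varepsilon\sqrt{2k\ln(1/\delta')}$ term.

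Two repairs are needed before the $\delta>0$ issue below:
\begin{itemize}
\item \textbf{Large $\varepsilon$.} The theorem does not assume $\varepsilon\le 1$, and $e^\varepsilon-1\le 2\varepsilon$ fails once $\varepsilon\gtrsim 1.26$. Use $\mathbb{E}[L_i\mid y_{<i}]\le\min\{\varepsilon,\varepsilon(e^\varepsilon-1)\}\le 2\varepsilon^2$ instead, which holds for every $\varepsilon>0$.
\item \textbf{The union bound.} The equal-weight decomposition you cite does exist (Kairouz--Oh--Viswanath; Murtagh--Vadhan). The union bound over bad coins is only additive because the coins are $\mathrm{Bernoulli}(\delta)$ independently of the history. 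Hence the all-good branch has weight $(1-\delta)^k$ under both $x$ and $x'$. Writing $\mathbf{P}_0,\mathbf{Q}_0$ for the compositions of the good components,
\[
\Pr[\mathcal{A}'(x)\in E]\le(1-\delta)^k\mathbf{P}_0(E)+k\delta\le e^{\widetilde\varepsilon}(1-\delta)^k\mathbf{Q}_0(E)+\delta'+k\delta\le e^{\widetilde\varepsilon}\Pr[\mathcal{A}'(x')\in E]+\delta'+k\delta .
\]
Say this explicitly. If you drop the factor $(1-\delta)^k$ on the left, you are left with an uncontrolled $(1-\delta)^{-k}$.
\end{itemize}

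The route you say you would try first, however, fails. The set $E=\{y:P(y)>e^{\varepsilon}Q(y)\}$ need not have $P$-mass at most $\delta$. Applied to $E$, $(\varepsilon,\delta)$-DP only gives $\sum_{y\in E}(P(y)-e^{\varepsilon}Q(y))\le\delta$. If the ratio on $E$ is $e^{\varepsilon}(1+\eta)$, this allows $P(E)=\delta(1+\eta)/\eta$, and one can build $(\varepsilon,\delta)$-indistinguishable pairs where $P(E)$ is a constant independent of $\delta$. What does hold is $P(\{L_i>t\})\le\delta/(1-e^{\varepsilon-t})$ for $t>\varepsilon$. So truncation forces a per-step loss bound above $\varepsilon$ and a per-step failure probability above $\delta$, and you lose the stated $\widetilde\varepsilon,\widetilde\delta$.

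If you want to avoid citing the decomposition, modify only the $x$-side. At each step and for each history, clip $P$ into the band $[e^{-\varepsilon}Q,e^{\varepsilon}Q]$, then restore total mass $1$ inside the band. There is room to do so since $e^{-\varepsilon}\le 1\le e^{\varepsilon}$. The resulting $P'$ satisfies
\[
d_{\mathrm{TV}}(P,P')\le\max\Big\{\textstyle\sum(P-e^{\varepsilon}Q)^+,\ e^{-\varepsilon}\sum(Q-e^{\varepsilon}P)^+\Big\}\le\delta,
\]
and $P'$ is pure $\varepsilon$-indistinguishable from the untouched $Q$. A hybrid argument then bounds the distance between the compositions by $k\delta$. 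Your pure argument applied to $(\mathbf{P}',\mathbf{Q})$ finishes the proof, with no adaptivity subtleties, since $\mathbf{Q}$ is exactly the law of $\mathcal{A}'(x')$.
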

Post-processing guarantees that the applying a data-independent function to the output of a DP algorithm cannot degrade its privacy guarantees:
\begin{theorem}
\label{thm:post processing DP}
Let $\mathcal{A} : U^n \to Y^m$ and $\mathcal{B} : Y^m \to Z^r$ be randomized algorithms, where $U, Y, Z$ are arbitrary sets. If $\mathcal{A}$ is $(\varepsilon,\delta)$--DP, then so is the composed algorithm $\mathcal{B}(\cA(\cdot))$.
\end{theorem}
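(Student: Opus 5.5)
The statement to prove is \Cref{thm:post processing DP}, post-processing. I would reduce to the deterministic case and then average over the randomness of $\mathcal{B}$. Fix neighboring databases $S\sim S'$ and a measurable set $T\subseteq Z^r$ of outcomes.

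\textbf{Step 1: deterministic $\mathcal{B}$.} Suppose first that $\mathcal{B}$ is a deterministic function $f:Y^m\to Z^r$. The event $\{f(\mathcal{A}(S))\in T\}$ equals $\{\mathcal{A}(S)\in f^{-1}(T)\}$. Applying the DP inequality of $\mathcal{A}$ to the set $f^{-1}(T)$ gives
\begin{align*}
\Pr[f(\mathcal{A}(S))\in T] &= \Pr[\mathcal{A}(S)\in f^{-1}(T)] \\
&\leq e^{\varepsilon}\Pr[\mathcal{A}(S')\in f^{-1}(T)]+\delta \\
&= e^{\varepsilon}\Pr[f(\mathcal{A}(S'))\in T]+\delta.
\end{align*}

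\textbf{Step 2: randomized $\mathcal{B}$.} A randomized $\mathcal{B}$ can be written as $\mathcal{B}(y)=f_R(y)$, where $R$ is an independent random seed with some distribution $\mu$ and each $f_r$ is deterministic. The independence is essential: $\mathcal{B}$'s coins do not depend on the database. Condition on $R=r$, apply Step 1 to $f_r$, and integrate over $r$:
\begin{align*}
\Pr[\mathcal{B}(\mathcal{A}(S))\in T] &= \int \Pr[f_r(\mathcal{A}(S))\in T]\,d\mu(r) \\
&\leq \int \left(e^{\varepsilon}\Pr[f_r(\mathcal{A}(S'))\in T]+\delta\right)d\mu(r) \\
&= e^{\varepsilon}\Pr[\mathcal{B}(\mathcal{A}(S'))\in T]+\delta.
\end{align*}
The additive $\delta$ passes through unchanged because $\mu$ is a probability measure, so $\int \delta\, d\mu(r)=\delta$.

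\textbf{Main obstacle.} The only delicate point is measure-theoretic. One must ensure that $f_r^{-1}(T)$ is measurable and that the map $r\mapsto \Pr[f_r(\mathcal{A}(S))\in T]$ is measurable so the integral makes sense. I would handle this by assuming $\mathcal{B}$ is a Markov kernel, i.e.\ a jointly measurable map $(y,r)\mapsto f_r(y)$, and invoking Fubini.

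An alternative that avoids the seed decomposition altogether is to write $g(y)=\Pr[\mathcal{B}(y)\in T]\in[0,1]$. The DP inequality for sets extends to $[0,1]$-valued test functions: express $\mathbb{E}[g(\mathcal{A}(S))]=\int_0^1 \Pr[g(\mathcal{A}(S))>u]\,du$ and apply the DP inequality to each level set $\{g>u\}$. This yields $\mathbb{E}[g(\mathcal{A}(S))]\leq e^{\varepsilon}\mathbb{E}[g(\mathcal{A}(S'))]+\delta$, which is exactly the claim.
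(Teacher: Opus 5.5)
Your proof is correct. The paper gives no proof of this theorem: it lists it in the appendix as a standard property of differential privacy. Your argument is precisely the standard textbook proof the paper relies on: take the preimage of $T$ under a deterministic post-processing map, then average over the coins of $\mathcal{B}$, which are independent of the database. Your level-set alternative via $\mathbb{E}[g(\mathcal{A}(S))]=\int_0^1\Pr[g(\mathcal{A}(S))>u]\,du$ is equally valid. Its small advantage is that it needs only measurability of $y\mapsto\Pr[\mathcal{B}(y)\in T]$, rather than a seed representation of $\mathcal{B}$.
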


\section{The Gumbel Distribution and the Gumbel-Max-Trick}
\label{sec:gumbel-distrib}
Suppose we want to sample an index $i \in [n]$ according to a score dataset $\{x_1, x_2, x_3, ..., x_n\}$ with probability $p_i \propto \exp(x_i)$, {where $p_i$ is the probability that we sample $x_i$}. The distribution with $$p_i = \frac{\exp(x_i)}{\sum_{j=1}^{n} \exp(x_j)}$$
is often called a \textit{categorical distribution}, and the Gumbel Max Trick gives us a numerically stable way to sample from it. We begin by defining the Gumbel distribution and reviewing some of its properties.
% First, let us define the Gumbel distribution and review some of its properties.
\begin{definition}[The Gumbel Distribution]
Let $X\sim \mathrm{Gumbel}(\mu,\beta)$. Then, $X$ has expectation of $\mu + \gamma'\beta$, where $\gamma'\approx 0.577$ is the Euler's constant, has variance of $\frac{\pi^2}{6}\beta^2$, and follows the PDF and CDF
$$
f_X(x) = \frac{1}{\beta}\exp(-z-\exp(-z))\text{ and}
$$
$$
F_X(x) = \exp(-\exp(-z)),
$$
respectively, for all $x\in\mathbb{R}$, where $z = \frac{x -\mu}{\beta}$.

\end{definition}

\begin{lemma}
\label{lemma:gumbel-max-trick}
Let $x_1, x_2, x_3, ..., x_n$ be real numbers, and define the distribution $p \in \Delta([n])$, where $$p_i = \frac{\exp(x_i)}{\sum_{j=1}^{n} \exp(x_j)}.$$
Consider sampling $n$ Gumbel random variables %$\{G_i\} 
$G_1, \dots ,G_n\sim \mathrm{Gumbel}(0, 1)$ and let $$\hat{i} \in \arg \max_{i \in [n]} (x_i + G_i).$$
Then, $\hat{i}$ is distributed according to $p$. 
\end{lemma}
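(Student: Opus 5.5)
We prove the Gumbel-Max Trick by computing $\Pr[\hat i = i]$ directly. Conditioning on $G_i$ and using independence of the $G_j$, the argmax equals $i$ exactly when $x_j + G_j < x_i + G_i$ for every $j\neq i$. Ties occur with probability zero because the Gumbel law is continuous. Writing $F(z)=\exp(-e^{-z})$ and $f(z)=e^{-z}\exp(-e^{-z})$ for the standard Gumbel CDF and PDF (the $\mu=0,\beta=1$ case of the definition above), we get
\begin{align*}
\Pr[\hat i = i] = \int_{-\infty}^{\infty} f(g)\prod_{j\neq i} F(x_i - x_j + g)\,dg .
\end{align*}

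The key step is to collapse the product into a single exponential. Since $F(x_i-x_j+g)=\exp(-e^{-g}e^{x_j-x_i})$, the product over $j\neq i$ together with the factor $\exp(-e^{-g})$ coming from $f(g)$ (the $j=i$ term) gives
\begin{align*}
\Pr[\hat i = i] = \int_{-\infty}^{\infty} e^{-g}\exp\Bigl(-e^{-g}\, Z_i\Bigr)\,dg, \qquad Z_i := \sum_{j=1}^n e^{x_j - x_i}.
\end{align*}

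Next substitute $u=e^{-g}$, so that $du=-e^{-g}\,dg$ and $g\in(-\infty,\infty)$ maps to $u\in(0,\infty)$. The integral becomes $\int_0^\infty e^{-uZ_i}\,du = 1/Z_i = e^{x_i}/\sum_j e^{x_j} = p_i$, which is the claim.

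There is no real obstacle; the only points needing care are the bookkeeping that absorbs the $j=i$ term into $Z_i$ and the remark that ties have measure zero, so the argmax is almost surely unique and any tie-breaking rule gives the same distribution. The ordering assumption $x_1\ge\cdots\ge x_n$ in the main-text version is not needed.
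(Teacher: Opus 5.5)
Your proof is correct and follows the same route as the paper: condition on the winning index's Gumbel noise, write $\Pr[\hat i = i]$ as a one-dimensional integral of the Gumbel density times the other indices' CDFs, and evaluate it. The only difference is in the last step. The paper shifts the variable to pull out $e^{x_1}$ and then uses symmetry of the remaining integral plus normalization. Your substitution $u=e^{-g}$ instead evaluates the integral exactly as $1/Z_i$, so no normalization step is needed, and you also make explicit that ties have probability zero, which the paper leaves implicit.
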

\begin{proof}
Let $q_i$ be the probability that $\hat{i} = i$ for all $i \in [n]$. By symmetry, we only need to prove that $q_1 \propto \exp(x_1)$.

{Fix $G_1 = x$. Then, by the Gumbel distribution, $f_{G_1}(x) = e^{-x-e^{-x}}$.}
% Consider a fixed $G_1 = x$, according to the Gumbel distribution, $f_{G_1}(x) = e^{-x-e^{-x}}$. %$\Pr[G_1 = x] = e^{-x-e^{-x}}$. 
For $\hat{i}$ to be $1$, we need $G_i < x_1 + G_1 - x_i$ for $\forall i \in [n]\setminus\{1\}$. Thus, the probability that $G_i < x_1 + G_1 - x_i$ is $e^{-e^{x_i - x_1 - G_1}}$.

Then, by law of total probability,
\begin{align*}
\Pr[\hat{i} = 1] &= \int_{-\infty}^{\infty} f_{G_i}(x) \prod_{i=2}^{n} \Pr[G_i < x] dx\\
&= \int_{-\infty}^{\infty} e^{-x-e^{-x}} \prod_{i=2}^{n} e^{-e^{x_i - x_1 - x}} dx\\
&= \int_{-\infty}^{\infty} e^{x_1-u-e^{x_1 - u}} \prod_{i=2}^{n} e^{-e^{x_i - u}} du &\text{(Substitute $u = x + x_1$)}\\
&= e^{x_1} \int_{-\infty}^{\infty} e^{-u} \prod_{i=1}^{n} e^{-e^{x_i - u}} du\\
&= e^{x_1} C. &\text{(For some constant $C$)}
\end{align*}

Therefore, $q_1 \propto \exp(x_1)$, and $\hat{i}$ is distributed according to $p$.
\end{proof}

Note also the following property of the Gumbel distribution:
\begin{lemma}
\label{lemma:conditional-gumbel}
The probability distribution of $G_1$ where $G_1\sim \mathrm{Gumbel}(0, 1)$ conditioned on $G_1 > B$ is the same as the probability distribution of $G_2$ where $G_2 = -\ln(-\ln(U))$ and $U\sim \mathrm{Uniform}(\exp(-\exp(-B)), 1)$.
\end{lemma}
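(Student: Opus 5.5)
The plan is to compute both distributions directly, using only the Gumbel CDF $F(x)=\exp(-e^{-x})$ and the inverse-CDF method. Both $G_1\mid G_1>B$ and $G_2$ are real-valued random variables, so it suffices to show that they have the same CDF on $\mathbb{R}$.

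The first step is the conditional CDF of $G_1$. For $x\le B$ it is $0$. For $x>B$,
\[
\Pr[G_1\le x\mid G_1>B]=\frac{F(x)-F(B)}{1-F(B)}=\frac{e^{-e^{-x}}-e^{-e^{-B}}}{1-e^{-e^{-B}}}.
\]

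The second step handles $G_2$. Write $a:=e^{-e^{-B}}\in(0,1)$, so that $U\sim\mathrm{Uniform}(a,1)$. The map $g(u)=-\ln(-\ln u)$ is a strictly increasing bijection from $(0,1)$ onto $\mathbb{R}$, and its inverse is $g^{-1}(x)=e^{-e^{-x}}=F(x)$. Consequently $G_2=g(U)$ takes values in $(g(a),\infty)=(B,\infty)$. For $x>B$, monotonicity gives
\[
\Pr[G_2\le x]=\Pr[U\le F(x)]=\frac{F(x)-a}{1-a},
\]
which is exactly the expression from the first step. For $x\le B$ the probability is $0$, again matching. Since the two CDFs agree everywhere, the distributions coincide.

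There is no real obstacle here. The only care needed is to check that $g$ is increasing and that $g(a)=B$, so that the support and the direction of the inequality transfer correctly under the change of variables. One could alternatively note that $F(G_1)\sim\mathrm{Uniform}(0,1)$, and that conditioning on $G_1>B$ is the same as conditioning on $F(G_1)>a$; this yields the uniform law on $(a,1)$, and applying $g=F^{-1}$ finishes the argument.
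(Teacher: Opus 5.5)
Your proof is correct and takes essentially the same route as the paper: compute the CDF of $G_1$ conditioned on $G_1>B$ and the CDF of $G_2=-\ln(-\ln U)$, then check that they agree. Your version is actually more careful than the paper's in two places. First, you write the conditional CDF correctly as $\frac{F(x)-F(B)}{1-F(B)}$, whereas the paper's displayed numerator uses $\Pr[G_1\le x]$ instead of $\Pr[B<G_1\le x]$ and so drops the $-e^{-e^{-B}}$ term. Second, you explicitly justify the change of variables via the monotonicity of $g$ and the fact that $g(a)=B$.
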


\begin{proof}
We will prove that the distributions of $G_1$ and $G_2$ are the same by proving that their CDFs are the same namely, $F_{G_1}(x) = F_{G_2}(x)$ for all $x \in \R$.

First, we will determine $F_{G_1}(x)$. When $x \leq B$, $F_{G_1}(x) = 0$. When $x > B$, $F_{G_1}(x) = \frac{\Pr[G_1 \leq x]}{\Pr[G_1 > B]} = \frac{\Pr[G_1 \leq x]}{1 - \Pr[G_1 \leq B]} = \frac{e^{-e^{-x}}}{1 - e^{-e^{-B}}}$. Therefore, $$F_{G_1}(x) = 
\begin{cases}
    0; & x \leq B\\
    \frac{e^{-e^{-x}}}{1 - e^{-e^{-B}}};& B < x\\
\end{cases}
$$
Next, we will determine $F_{G_2}(x)$. Since $U \sim \mathrm{Uniform}(\exp(-\exp(-B)), 1)$, then $$F_{U}(x) = 
\begin{cases}
    0; & x \leq e^{-e^{-B}}\\
    \frac{x - e^{-e^{-B}}}{1 - e ^{-e ^{-B}}};& e^{-e^{-B}} < x \leq 1\\
    1; & 1 < x\\
\end{cases}
$$

After that, we substitute $U$ with $G_2$ and $x$ with $e^{-e^{-x}}$ yielding 
$$F_{G_2}(x) = 
\begin{cases}
    0; & e^{-e^{-x}} \leq e^{-e^{-B}}\\
    \frac{e^{-e^{-x}} - e^{-e^{-B}}}{1 - e ^{-e ^{-B}}};& e^{-e^{-B}} < e^{-e^{-x}} \leq 1\\
    1; & 1 < e^{-e^{-x}}.\\
\end{cases}
$$
Simplifying all expressions result in $F_{G_1}(x) = F_{G_2}(x)$.

\end{proof}

\section{Lazy Gumbel Sampling}
\label{sec:lazy-gumbel-sampling-appx}
In the Gumbel Max Trick of \Cref{lemma:gumbel-max-trick} above, we have to calculate $n$ values of $x_i + G_i$ but since it is very rare that a Gumbel noise will be high, it is very unlikely that we will choose an index $i$ such that $x_i$ is small. Hence, at least intuitively, we often do not need to sample all $n$ Gumbel noise random variables. The following algorithm, due to \cite{mussmann2017fast}, shows how to perform Gumbel Max Trick while only using $\Theta(\sqrt{n})$ samples in expectation.

\begin{algorithm}
\caption{Lazy Gumbel Sampling, \citep{mussmann2017fast}}
\label{alg:lazy-gumbel}
\begin{algorithmic}[1]
\State \textbf{Inputs:} An integer $k \in [n]$, and a score dataset $X = \{x_1, x_2, x_3, ..., x_n\} \in \mathbb{R} ^ n$
\vspace{1mm}
\State Let $S$ be the set of the indices of $k$ largest $x_i$ from $X$
\For{$j \in S$}
    \State Sample $G_j \sim \mathrm{Gumbel}(0, 1)$
\EndFor
\State Let $M = \max_{j \in S}(x_j + G_j)$, $m = \min_{j \in S} x_j$, and $B = M - m$
\State Let $C \sim \mathrm{Bin}(n - k, 1 - \exp(-\exp(-B))$ %be the number of indices in $[n]\setminus S$ that receives a Gumbel noise at least $B$.
\State {Let $T\subseteq [n]\setminus S$ be the set of $C$ distinct indices sampled uniformly from $[n]\setminus S$.}% Sample $C$ indices from $[n]\setminus S$ as a set $T$.
\State Sample $U_i\sim \mathrm{Uniform}(\exp(-\exp(-B)), 1)$, and let $G_i=-\ln(-\ln(U_i))$ for $\forall i\in T$. 

\Comment{{\color{blue}This simulates $G_i\sim \mathrm{Gumbel}(0,1)$ conditioned on $G_i>B$}}
% \State Sample $G_i \sim \mathrm{Gumbel}(0, 1)$, conditioned on $G_i >B$ for $\forall i \in T$. 
\vspace{1mm}
\State \textbf{return} $\arg\max_{i \in S \cup T}(x_i + G_i)$
\end{algorithmic}
\end{algorithm}
This is formalized in the following theorem of \citep{mussmann2017fast}:
\begin{theorem}[\citep{mussmann2017fast}]
\label{thm:lazy-gumbel}
The Lazy Gumbel Algorithm on a score dataset $\{x_1,...,x_n\}$ outputs an index $i \in [n]$ with probability:
$$
p_i \propto \exp(x_i)
$$
using $\Theta(\sqrt{n})$ samples in expectation when $k \approx \Theta(\sqrt{n})$. 
\end{theorem}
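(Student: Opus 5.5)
The proof has two parts: exactness of the output distribution, and the expected number of samples.

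\textbf{Correctness.} I would reduce to the full Gumbel-Max trick of \Cref{lemma:gumbel-max-trick}, using a coupling. Imagine drawing all $n$ variables $G_1,\dots,G_n\sim\mathrm{Gumbel}(0,1)$ independently. The full argmax $\arg\max_i(x_i+G_i)$ has law $p_i\propto\exp(x_i)$.

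Fix the noises $\{G_j\}_{j\in S}$, so $M$, $m$ and $B=M-m$ are determined. For $i\notin S$ we have $x_i\le m$, because $S$ holds the top $k$ scores. Hence if $G_i\le B$ then $x_i+G_i\le m+B=M$, and index $i$ cannot beat the current best in $S$ (ties occur with probability zero). So the full argmax equals the argmax over $S\cup T^*$, where $T^*=\{i\notin S: G_i>B\}$.

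It remains to check that the algorithm simulates $(T^*,\{G_i\}_{i\in T^*})$ with the correct conditional law given $\{G_j\}_{j\in S}$. By independence, each $i\notin S$ lies in $T^*$ independently with probability $1-e^{-e^{-B}}$. Therefore $|T^*|\sim\mathrm{Bin}(n-k,1-e^{-e^{-B}})$ and, given its size, $T^*$ is a uniformly random subset. Given membership, the $G_i$ are i.i.d. Gumbel conditioned on exceeding $B$, which is exactly what \Cref{lemma:conditional-gumbel} produces. The joint law of everything used in the final argmax therefore matches the full-sampling procedure, and the output is distributed according to $p$.

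\textbf{Runtime.} The number of Gumbel samples is $k+C$, so it suffices to show $\mathbb{E}[C]=O(n/k)$. Since $1-e^{-e^{-B}}\le e^{-B}$, we get $\mathbb{E}[C\mid B]\le (n-k)e^{-B}$. Moreover $M\ge\max_{j\in S}(m+G_j)$, so $B\ge\max_{j\in S}G_j$, which is the maximum of $k$ i.i.d. standard Gumbels. That maximum is distributed as $\mathrm{Gumbel}(\ln k,1)$, and for $Y\sim\mathrm{Gumbel}(\mu,1)$ one has $\mathbb{E}[e^{-Y}]=e^{-\mu}\mathbb{E}_{E\sim\mathrm{Exp}(1)}[E]=e^{-\mu}$. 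Hence $\mathbb{E}[e^{-B}]\le 1/k$ and $\mathbb{E}[C]\le n/k$.

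The expected number of samples is then at most $k+n/k$. With $k=\Theta(\sqrt n)$ this is $\Theta(\sqrt n)$, and the lower bound holds trivially because $k$ samples are always drawn.

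\textbf{Main obstacle.} The main obstacle is making the correctness coupling rigorous. The threshold $B$ depends on the $S$-noises, so one must condition on $\{G_j\}_{j\in S}$ before invoking independence of the remaining $G_i$. One must also handle ties and the case where some $x_i$ with $i\notin S$ equals $m$; both are measure-zero events or are covered by the non-strict inequality. The runtime step only needs the max-of-Gumbels identity.
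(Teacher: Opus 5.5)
Your proof is correct and complete. The paper gives no proof of this statement and simply imports it from \citet{mussmann2017fast}, so there is no in-paper argument to compare against line by line. Your argument reconstructs the standard proof behind the cited result:
\begin{itemize}
\item The coupling conditions on the $S$-noises and observes that any $i\notin S$ with $G_i\le B$ cannot win. It then checks that the binomial count, the uniform subset, and the conditional Gumbels of \Cref{lemma:conditional-gumbel} together reproduce the law of $\{i\notin S: G_i>B\}$ and its noises.
\item The runtime bound is $\mathbb{E}[e^{-B}]\le\mathbb{E}[e^{-\max_{j\in S}G_j}]=1/k$, which gives $\mathbb{E}[C]\le n/k$.
\end{itemize}
Your write-up also makes explicit something the paper only asserts. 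The sample bound uses only $x_j\ge m$ for $j\in S$, not that $S$ is the true top-$k$, and this is exactly what \Cref{lem:ann-runtime} relies on. Replacing $B$ by $B-c$ gives $\mathbb{E}[C]\le e^{c}n/k$, the bound quoted in \Cref{thm:ann-privacy}. Your coupling, run with $x_i\le m+c$ for $i\notin S$, also supplies a rigorous version of that theorem's correctness sketch.
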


\section{From $k$-MIPS to $k$-Nearest Neighbor Search}
\label{sec:mips-to-knn}
The regime we are considering is when the scores $x_i$ are given by inner products of a query vector $q \in \mathbb{R}^d$ with a set of key vectors $\{k_1,...,k_n\} \subseteq \mathbb{R}^d$. In other words, $x_i = \langle q, k_i\rangle$. In addition, we are considering a situation in which we have to sample from $m$ categorical distributions, defined by query vectors $\{q_1,...,q_m\}$ on a fixed set of key vectors. In this setting, identifying the top $k$ inner products can be done efficiently, in total time $O(n+m\sqrt{n})$ rather than $O(nm)$, which is where our improvement lies. 

More specifically, we recognize that this is exactly the $k$--MIPS (Maximum Inner Product Search) problem. This problem has been studied extensively, and we can use any data structure developed for it to solve it. By initializing a data structure like this on the set $\{k_1,...,k_n\}$ and then querying it for each incoming query $q_i$, we get the practical improvement we seek. 

% Let the histogram that our DP interface receive is $p$ and the true histogram is $h_x$, then the score function for a query $\varphi_i$ will be $\langle\varphi_i,p-h_x \rangle$.

% In the lazy Gumbel sampling, we need to find the $k$ queries with largest scores. This is a $k$-Maximum Inner Product Search Problem (MIPS). In this section, we are going to show how to reduce this problem to a $k$-Nearest Neighbour ($k$NN) problem.

We can also make use of the observation that this problem is reducible to the $k$--nearest neighbor problem, which is also very well studied and for which there are also a wide variety of developed algorithms. We elaborate on this briefly below:

We know that 
$$
q^T k_i = \frac{1}{2}(||q||_2^2 + ||k_i||_2^2 - ||q - k_i||_2^2)
$$
and $||q||_2^2$ is a constant for all $i \in [n]$. Therefore, if $||k_i||_2^2$ is also a constant for all $i \in [n]$, then $q^T k_i$ will be largest when $||q - k_i||_2^2$ is smallest. Thus, we will be able to reduce MIPS to $k$NN.

Let $M$ be an arbitrary number that is known to be at least $\max_{i\in[n]} ||k_i||_2^2$. Then, we can transform $k_i$ to 
$$\left[k_i^T, \sqrt{M - ||k_i||_2^2}\right]^T$$so that $||k_i||_2^2 = M$ for all $i \in [n]$. Next, we transform $q$ to $[q^T, 0]^T$ so that it has the same dimension as $k_i$. Since this transformation preserves the dot product $\langle q, k_i \rangle$, we have successfully reduced MIPS to $k$NN.

% \paragraph{$k$-MIPS via concentric LSH}
% Another theoretically sound way to solve $k$-MIPS without reducing it to $k$-NN is by using a concentric LSH construction, as proposed by \cite{mussmann2017fast}. Since lazy Gumbel sampling works even when retrieving the approximate top $k$ inner products, we can use a modification of the Locality Sensitive Hashing data structure:  we maintain a series of copies of the data structure, all tuned to represent concentric regions in $\mathbb{R}^d$. Then, upon arrival of a query, we look at the bucket it is hashed and find the first LSH ``annulus'' containing at least $k$ points co-hashed in the bucket. From that data structure we start outputting the $k$ approximately largest inner products.

\section{Robustness to Approximation}
\label{sec:robustness-to-approximation}
In this section, we show that if our $k$-MIPS algorithm only retrieves the \textit{approximate top-$k$} inner products, we can still perform our improved algorithm.

\begin{definition}
\label{def:approx-top-k}
A set of indices $S$ is an approximate top $k$ if $|S| = k$ and 
$$
\max_{i \notin S} x_i - \min_{i \in S} x_i \leq c,
$$ for some constant $c$.
\end{definition}

\subsection{Preserving Runtime}
Here, we describe how to preserve the $\Theta(\sqrt n)$ runtime under approximation, with some diminished privacy guarantees.
\label{sec:preserving-runtime}
\begin{algorithm}
\caption{Lazy Gumbel Sampling using Approximate Top-$k$ while preserving runtime}
\label{alg:lazy-gumbel-top-k-runtime}
\begin{algorithmic}[1]
\State \textbf{Inputs:} An integer $k \in [n]$, and a score dataset $X = \{x_1, x_2, x_3, ..., x_n\} \in \mathbb{R} ^ n$
\vspace{1mm}
\State Let $S$ be an \textbf{approximate top-$k$} of $X$.
\For{$j \in S$}
    \State Sample $G_j \sim \mathrm{Gumbel}(0, 1)$
\EndFor
\State Let $M = \max_{j \in S}(x_j + G_j)$, $m = \min_{j \in S} x_j$, and $B = M - m$
\State Let $C \sim \mathrm{Bin}(n - k, 1 - \exp(-\exp(-B)))$ %be the number of indices in $[n]\setminus S$ that receives a Gumbel noise at least $B$.
\State {Let $T\subseteq [n]\setminus S$ be the set of $C$ distinct indices sampled from $[n]\setminus S$.}% Sample $C$ indices from $[n]\setminus S$ as a set $T$.
\State Sample $U_i\sim \mathrm{Uniform}(\exp(-\exp(-B)), 1)$, and let $G_i=-\ln(-\ln(U_i))$ for every $i\in T$. 

% \State Sample $G_i \sim \mathrm{Gumbel}(0, 1)$, conditioned on $G_i > B$ for $\forall i \in T$. 
\vspace{1mm}
\State \textbf{return} $\arg\max_{i \in S \cup T}(x_i + G_i)$
\end{algorithmic}
\end{algorithm}
\begin{theorem}\label{thm:ann-runtime-preserving}
    Given the set $S$ of the approximate top-$k$ scores of dataset $X$ in the exponential mechanism with candidate set $[n]$, \Cref{alg:lazy-gumbel-top-k-runtime} runs in expected time $\Theta(\sqrt n)$ and is $(\eps+2c)$-DP.
\end{theorem}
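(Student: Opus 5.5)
The plan is to compare the output distribution $\tilde p$ of \Cref{alg:lazy-gumbel-top-k-runtime} with the exact exponential-mechanism distribution $p_i \propto \exp(x_i)$, where $x_i = \varepsilon s(D,i)/(2\Delta)$. We aim for a pointwise bound $e^{-c} p_i \le \tilde p_i \le e^{c} p_i$ for every $i$ and every database. Once this holds, for neighboring $D\sim D'$ we chain
\[
\tilde p_i(D)\le e^{c}p_i(D)\le e^{c}e^{\varepsilon}p_i(D')\le e^{\varepsilon+2c}\tilde p_i(D').
\]
This gives $(\varepsilon+2c,0)$-DP, since the exact exponential mechanism is $\varepsilon$-DP. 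This is the reason the privacy loss is $2c$: one $e^c$ factor is paid on each side.

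To obtain the pointwise bound, we write out exactly what the algorithm samples. Conditioned on $S$ and on the Gumbels $\{G_j\}_{j\in S}$, the threshold is $B=M-m$. The pair $(C,T)$ together with the conditional Gumbels on $T$ (\Cref{lemma:conditional-gumbel}) is equivalent in law to drawing $G_i\sim\mathrm{Gumbel}(0,1)$ for every $i\notin S$ and keeping exactly those with $G_i>B$. So the algorithm returns $\arg\max_{i\in S\cup T}(x_i+G_i)$ for a full Gumbel vector. Any $i\notin S$ with $G_i\le B$ is discarded, and the true argmax can differ from the returned one only if such an $i$ satisfies $x_i+G_i>M$. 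Such a discarded $i$ has $x_i+G_i \le x_i+B = M+(x_i-m)\le M+c$, using the approximate-top-$k$ property $x_i-m\le c$. Hence the output equals the Gumbel-max argmax for the perturbed scores in which each discarded coordinate effectively loses at most $c$.

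We then formalize this as a sandwich. Define modified scores $x'_i$ with $x_i-c\le x'_i\le x_i$ for $i\notin S$ and $x'_i=x_i$ for $i\in S$. We show the algorithm's output law lies between the Gumbel-max laws for $x$ and for $x-c\mathbf 1_{S^c}$. By \Cref{lemma:gumbel-max-trick}, each of these is a softmax, so every output probability is within a factor $e^{\pm c}$ of $p_i$: the normalizer changes by at most $e^{c}$, and each numerator by at most $e^{c}$ in the opposite direction. We would first carry this out via the integral formula $\Pr[\hat i=i]=\int f(g)\prod_{j\ne i}\Pr[x_j+G_j<x_i+g]\,dg$, lower- and upper-bounding the factor for $j\notin S$ through the truncation event.

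For runtime, the expected work is $k+\mathbb E[C]$. With $k=\sqrt n$, the Mussmann et al.\ argument (\Cref{thm:lazy-gumbel}) bounds $\mathbb E[(n-k)(1-e^{-e^{-B}})]\le \mathbb E[(n-k)e^{-B}]=O(n/k)$, because $M-m\ge \max_{j\in S}G_j$ and the max of $k$ Gumbels is $\ln k+\mathrm{Gumbel}$. This part does not depend on the approximation, since $B\ge \max_{j\in S} G_j$ regardless of which $k$ scores are retrieved. So the expected time is $\Theta(\sqrt n)$.

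The main obstacle is the sandwich step. The truncation event depends on $B$, which depends on the sampled $G_j$ for $j\in S$, so the coupling argument must be made carefully. The cleanest route may be to bound the ratio directly through the integral representation.
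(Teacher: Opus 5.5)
Your outer structure matches the paper's. You prove a pointwise comparison $e^{-c}p_i\le\tilde p_i\le e^{c}p_i$ valid for every $c$-approximate $S$, then chain through the exact mechanism. Your runtime argument via $B\ge\max_{j\in S}G_j$ is correct and more explicit than the paper's appeal to Mussmann et al.

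\textbf{The gap is the sandwich step.} You leave it open, and as stated it is false for indices outside $S$. Take $k=1$, $S=\{a\}$, $x_a=x_i=0$, $x_j=c>0$, so that $S$ is $c$-approximate. Then $B=G_a$.
\begin{itemize}
\item Whenever the exact Gumbel-max returns $i$, so does the algorithm.
\item In addition, the algorithm returns $i$ on the positive-probability event $\{G_j\le G_a<G_i<G_j+c\}$. There $j$ is discarded even though $x_j+G_j>x_i+G_i$.
\item Hence $\tilde p_i>p_i=1/(2+e^{c})$. The softmax of $x-c\mathbf 1_{S^c}$, however, gives $i$ only $e^{-c}/(2+e^{-c})<p_i$.
\end{itemize}
So $\tilde p_i$ does not lie between the two endpoint laws; for $c=1$, $\tilde p_i\approx 0.244>p_i\approx 0.212$. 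The root cause is that your pathwise remark lowers a \emph{random}, $G$-dependent set of coordinates. That does not by itself give a comparison with the law of any fixed perturbed score vector. (If you meant that the output law equals the softmax of one fixed $x'$ in the box you defined, that can be checked by an exponential-race computation, but nothing in your sketch establishes it.)

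\textbf{How to close the gap.} Let the comparison vector depend on $i$, and use pathwise event inclusions. This also removes your concern about $B$ depending on $G_S$, since nothing needs to be conditioned.
\begin{itemize}
\item For $i\in S$: $\{\arg\max(x+G)=i\}\subseteq\{\mathrm{out}=i\}\subseteq\{\arg\max(x-c\mathbf 1_{S^c}+G)=i\}$. The second inclusion holds because an output $i\in S$ has $x_i+G_i=M$, and every discarded $j$ satisfies $x_j-c+G_j\le x_j-c+B\le M$, using $x_j-m\le c$.
\item For $i\notin S$ (with $e_i$ the $i$-th basis vector): $\{\arg\max(x-c\,e_i+G)=i\}\subseteq\{\mathrm{out}=i\}\subseteq\{\arg\max(x-c\mathbf 1_{S^c\setminus\{i\}}+G)=i\}$. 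For the first inclusion, $x_i-c+G_i\ge M$ forces $G_i\ge M-m=B$, so $i$ is retained and beats everyone. The second uses the same discarded-coordinate bound; the essential point is that $i$ itself must not be lowered.
\end{itemize}
Each of these four softmax probabilities is within $e^{\pm c}$ of $p_i$ by your numerator/normalizer estimate, which proves the comparison lemma.

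This route differs from the paper's, which works through the $P/Q/R$ decomposition and five claims. Be aware that \Cref{claim:lem-d2-part-1} (an exact argmax outside $Q$ is always retained) fails for $i\in R$ with $x_i>m$. There $x_i+G_i\ge M$ only gives $G_i\ge M-x_i$, which can be below $B=M-m$. So the per-coordinate coupling is the more reliable way to finish.
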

We split the proof of \Cref{thm:ann-runtime-preserving} into two lemmas: \Cref{lem:ann-runtime} proves the runtime guarantees and \Cref{lem:ann-privacy} proves the privacy guarantees.

We first prove the runtime guarantees.
% First, we prove that this algorithm samples at most $\Theta({\sqrt{n}})$ elements in expectation.
\begin{lemma}\label{lem:ann-runtime}
    \Cref{alg:lazy-gumbel-top-k-runtime} runs in expected time $\Theta(\sqrt n)$.
\end{lemma}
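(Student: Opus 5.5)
The runtime of \Cref{alg:lazy-gumbel-top-k-runtime} consists of the $k$ Gumbel samples drawn for $S$ plus the $C$ conditional samples drawn for $T$, with $k=\Theta(\sqrt n)$. Everything else is constant work per sampled index, so the lemma reduces to showing $\mathbb{E}[C]=O(\sqrt n)$. The lower bound $\Theta(\sqrt n)$ is immediate, since we always draw $k$ samples. Conditioning on $B$,
\[
\mathbb{E}[C\mid B]=(n-k)\bigl(1-e^{-e^{-B}}\bigr)\le n e^{-B},
\]
using $1-e^{-z}\le z$. Hence $\mathbb{E}[C]\le n\,\mathbb{E}[e^{-B}]$.

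The key step is bounding $\mathbb{E}[e^{-B}]$. Recall $B=M-m$, where $M=\max_{j\in S}(x_j+G_j)$ and $m=\min_{j\in S}x_j$. For every $j\in S$ we have $x_j\ge m$, so
\[
M\ \ge\ m+\max_{j\in S}G_j,\qquad\text{hence}\qquad B\ \ge\ \max_{j\in S}G_j .
\]
The maximum of $k$ i.i.d.\ $\mathrm{Gumbel}(0,1)$ variables is distributed as $\mathrm{Gumbel}(\ln k,1)$, since its CDF is $e^{-k e^{-x}}=e^{-e^{-(x-\ln k)}}$. Write $\max_{j\in S}G_j=\ln k+G$ with $G\sim\mathrm{Gumbel}(0,1)$. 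Then
\[
\mathbb{E}[e^{-B}]\le \frac1k\,\mathbb{E}[e^{-G}].
\]
To evaluate $\mathbb{E}[e^{-G}]$, note that $e^{-G}\sim\mathrm{Exp}(1)$, because $\Pr[e^{-G}\le t]=\Pr[G\ge -\ln t]=1-e^{-t}$. So $\mathbb{E}[e^{-G}]=1$, and
\[
\mathbb{E}[C]\le \frac nk=\Theta(\sqrt n)
\]
for $k=\Theta(\sqrt n)$.

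This argument never uses that $S$ is the exact top-$k$. The bound $B\ge \max_{j\in S}G_j$ holds for any set $S$ of size $k$, so the approximation constant $c$ plays no role in the runtime, only in privacy. The step I expect to need the most care is the bound on $\mathbb{E}[e^{-B}]$. One might be tempted to argue $B\approx \ln k$ with high probability, but that would leave an unbounded tail. The exponential-moment computation via $e^{-G}\sim\mathrm{Exp}(1)$ avoids this and gives an exact constant.

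Finally, sampling $C$ distinct indices from $[n]\setminus S$ can be done in expected $O(C)$ time, for example by rejection sampling with a hash set, since $C\le n-k$ and, in expectation, is far below $n$. Combining the pieces gives total expected time $O(k+\mathbb{E}[C])=\Theta(\sqrt n)$.
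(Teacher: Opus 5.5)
Your proof is correct and follows the paper's approach. The paper just invokes the expected-sample bound of \citet{mussmann2017fast} and asserts that it does not depend on the choice of $S$; you verify this by noting that $B\ge\max_{j\in S}G_j$ holds for every $k$-set $S$, which gives $\mathbb{E}[C]\le n\,\mathbb{E}[e^{-B}]\le n/k$.

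One small loose end, which the paper also ignores: rejection sampling with a hash set does not give $O(C)$ time when $C$ is close to $n-k$, because of the coupon-collector cost. You can fix this either by using a sampler with exactly $C$ draws (e.g.\ Floyd's algorithm), or by noting that $\Pr[C>(n-k)/2]=e^{-\Omega(k)}$, since $B\ge\max_{j\in S}G_j$ makes a small $B$ exponentially unlikely in $k$.
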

\begin{proof}
The proof is the same as the proof of \Cref{thm:lazy-gumbel} since the upper bound on the expected number of samples is the same regardless of the choice of $S$.
\end{proof}

Next, to prove that \Cref{alg:lazy-gumbel-top-k-runtime} is $(\eps + 2c)$-DP, we first prove that the probability of the index $i\in[n]$ returned by \Cref{alg:lazy-gumbel-top-k-runtime} is bounded above and below a constant factor of the true top-$k$ probability. Let $\cI$ and $\cI'$ be \Cref{alg:lazy-gumbel} and \Cref{alg:lazy-gumbel-top-k-runtime}, respectively, and $\cI(X)$ and $\cI'(X,S)$ be the outputs of \Cref{alg:lazy-gumbel} on dataset $X$ and \Cref{alg:lazy-gumbel-top-k-runtime} on dataset $X$ with approximate top-$k$ set $S$, respectively.

\begin{lemma}\label{lem:ann-privacy}
For any score dataset $X = \{x_1, x_2, \dots, x_n\}\in \R^n$, and every $i \in [n]$, we have
$$e^{-c}\Pr[\cI(X) = i] \leq \Pr[\cI'(X,S) = i] \leq e^c\Pr[\cI(X) = i].$$
\end{lemma}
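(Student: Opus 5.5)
The plan is to compare the two algorithms through the same integral representation used in the proof of \Cref{lemma:gumbel-max-trick}. By \Cref{thm:lazy-gumbel}, $\Pr[\cI(X)=i] = e^{x_i}/\sum_j e^{x_j}$. So it suffices to write $\Pr[\cI'(X,S)=i]$ as a Gumbel-max probability for a perturbed score vector, with every score moved by at most $c$.

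\textbf{Step 1: reformulate $\cI'$ as Gumbel-max with filtered candidates.} Couple $\cI'$ with a full draw $G_1,\dots,G_n\sim\mathrm{Gumbel}(0,1)$. By \Cref{lemma:conditional-gumbel} and exchangeability, the set $T$ together with its noises has the same law as $\{j\notin S: G_j > B\}$ with the unconditioned noises, where $B=M-m$. Hence $\cI'$ outputs
\[
\arg\max\{x_j+G_j : j\in S \text{ or } G_j > M-m\}.
\]
For $j\notin S$ with $x_j\le m$, the exclusion $G_j\le M-m$ forces $x_j+G_j\le M$, so this is exactly the exact algorithm's mechanism. The only discrepancy comes from indices $j\notin S$ with $x_j\in(m,m+c]$, and this range is guaranteed by \Cref{def:approx-top-k}. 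Write $a_j := x_j-\min(x_j,m)\in[0,c]$ for these indices.

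\textbf{Step 2: compute the output probability by conditioning on the winner's value.}
\begin{itemize}
\item \emph{Case $i\in S$.} Condition on $v=x_i+G_i$ being the maximum over $S$, so that $M=v$. Each $j\in S$ contributes the factor $\exp(-e^{x_j-v})$, as in the proof of \Cref{lemma:gumbel-max-trick}. An index $j\notin S$ fails to beat $i$ iff $G_j\le\max(v-m,\,v-x_j)$, which gives the factor $\exp(-e^{\min(x_j,m)-v})$. Running the substitution from the Gumbel-max proof, $\Pr[\cI'=i]$ is exactly the softmax weight of $i$ under the scores $x'_j=\min(x_j,m)$ for $j\notin S$ and $x'_j=x_j$ for $j\in S$.
\item \emph{Case $i\notin S$.} The winning condition becomes $G_i > M-x_i+a_i$ and $x_i+G_i$ beats every included competitor. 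I would handle this by sandwiching. Each blocking factor $\Pr[G_j<\max(v-x_j,\,M-m)]$ lies between $\exp(-e^{x_j-v})$ and $\exp(-e^{x'_j-v})$. The threshold on $G_i$ lies between $M-x_i$ and $M-x_i+c$.
\end{itemize}
In both cases $\Pr[\cI'=i]$ is bounded above and below by Gumbel-max integrals whose scores differ from $x$ coordinatewise by at most $c$.

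\textbf{Step 3: turn score perturbations into probability ratios.} Each integrand has the form $e^{x_i}\int e^{-u}\prod_j \exp(-e^{y_j-u})\,du$ with $x_j-c\le y_j\le x_j$. I would compare such an integral with the unperturbed one using the monotonicity of $\prod_j\exp(-e^{y_j-u})$ in each $y_j$. The shift $u\mapsto u-c$ then shows that the integral changes by at most a factor $e^{c}$, and the numerator $e^{x_i}$ versus $e^{x_i'}$ changes by at most a factor $e^{a_i}\le e^c$.

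\textbf{Main obstacle.} I expect the main obstacle to be the case $i\notin S$. There the inclusion threshold $M-m$ couples the winner with the maximum over $S$, so the clean product form is lost. I would first try to dominate the event from above by the exact Gumbel-max event with $x_i$ raised by $a_i$, and from below by the event with $x_i$ lowered to $\min(x_i,m)$. A further difficulty is keeping the two sources of distortion, the numerator and the normalizing integral, from compounding into $e^{2c}$ rather than $e^{c}$. To avoid this, I would arrange that they act in opposite directions: lowering the scores of competitors raises $i$'s chance while lowering the normalizer, so that only one factor of $e^{c}$ survives on each side.
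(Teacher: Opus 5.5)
Your route differs from the paper's. The paper splits the indices into $P=S$, $Q=S^*\setminus S$ and $R$, and proves five claims. It uses pathwise comparisons, auxiliary instances (lowering $Q\setminus\{i\}$ by $c$, or raising $S$ by $c$ so that $S$ becomes the exact top-$k$ and \Cref{thm:lazy-gumbel} applies as a black box), and a total-mass contradiction. You instead realize $\cI'$ as a filtered Gumbel-max on one full noise vector and integrate directly, splitting only on $i\in S$ versus $i\notin S$.

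Your Step 1 and the case $i\in S$ are correct. They give the exact value $\Pr[\cI'(X,S)=i]=e^{x_i}/\bigl(A_S+\sum_{j\notin S}e^{x'_j}\bigr)$, where $A_S=\sum_{j\in S}e^{x_j}$ and $x'_j=\min(x_j,m)\ge x_j-c$ for $j\notin S$. So the ratio to $\Pr[\cI(X)=i]$ lies in $[1,e^c]$.

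Beyond being sharper, your computation is more robust than the paper's argument in two places:
\begin{itemize}
\item You never need the assertion in \Cref{claim:lem-d2-part-1} that an overall argmax $i\notin S$ has $G_i>B$. That assertion fails for $i\in R$ with $x_i>m$. For $x=(10,9,8,7)$, $k=2$, $S=\{1,4\}$, the upper bound below gives $\Pr[\cI'(X,S)=3]$ at most about $0.045$, while $\Pr[\cI(X)=3]\approx 0.087$.
\item You never need the identity in \Cref{claim:lem-d2-part-5} that the output, conditioned on avoiding $Q$, is a softmax over $P\cup R$. The example at the end refutes it.
\end{itemize}

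For $i\notin S$, your Step 2 sandwich already finishes the proof. The feared $e^{2c}$ never appears if you integrate over $v=x_i+G_i$ rather than moving $a_i$ into the numerator. The exact condition $G_i>M-x_i+a_i$ means each $j\in S$ must satisfy $x_j+a_i+G_j<v$, so the $S$-block is effectively raised by $a_i$. Meanwhile, each other $j\notin S$ has a blocking factor sandwiched between those of scores $x_j$ and $x'_j$, using $M<v$ on the winning event. This gives
\[
\frac{e^{x_i}}{e^{a_i}A_S+e^{x_i}+\sum_{j\notin S,\,j\neq i}e^{x_j}}\;\le\;\Pr[\cI'(X,S)=i]\;\le\;\frac{e^{x_i}}{e^{a_i}A_S+e^{x_i}+\sum_{j\notin S,\,j\neq i}e^{x'_j}}.
\]
The left denominator is at most $e^{a_i}\sum_j e^{x_j}$ and the right denominator is at least $e^{-c}\sum_j e^{x_j}$. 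So the ratio lies in $[e^{-a_i},e^c]\subseteq[e^{-c},e^c]$.

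This does not fit your Step 3 template $x_j-c\le y_j\le x_j$, because the $S$-scores move up. That upward shift only costs in the lower bound, and the downward shift only costs in the upper bound; this is the ``opposite directions'' you were after.

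However, drop the upper domination proposed in your last paragraph. The Gumbel-max event with $x_i$ raised by $a_i$ need not contain $\{\cI'(X,S)=i\}$, because when $a_i=0$ the excluded competitors can strictly help $i$. Take $k=1$, $S=\{1\}$, $x=(0,c,-d)$ with $d>0$. Then
\[
\Pr[\cI'(X,S)=3]=\tfrac12\bigl(\tfrac{e^{-d}}{2+e^{-d}}+\tfrac{e^{-d}}{2e^{c}+e^{-d}}\bigr),
\]
which exceeds $\Pr[\cI(X)=3]=\tfrac{e^{-d}}{1+e^{c}+e^{-d}}$ for every $c>0$, by strict convexity of $t\mapsto 1/t$. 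Your lower domination (lowering $x_i$ to $x'_i$) is valid and gives the factor $e^{-a_i}$ directly.
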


Let $S^*$ be the set of true top-$k$ indices. Define sets $P=S$, $Q = S^*\setminus S$, and $R = [n]\setminus(S\cup S^*)$. Furthermore, define 
$$
A_{P} = \sum_{i\in P}e^{x_i}, \quad\quad A_{Q} = \sum_{i\in Q}e^{x_i},\quad\quad A_{R} = \sum_{i\in R}e^{x_i}.
$$

We split the proof of \Cref{lem:ann-privacy} into 5 claims: \Cref{claim:lem-d2-part-1} and \Cref{claim:lem-d2-part-4} prove the lower bound of $\Pr[\cI'(X,S)=i]$, and \Cref{claim:lem-d2-part-2}, \Cref{claim:lem-d2-part-3}, and \Cref{claim:lem-d2-part-5} prove the upper bound of $\Pr[\cI'(X,S)=i]$.

First, we prove a lower bound on $\Pr[\cI'(X,S)=i]$, if $i\notin Q$.
\begin{claim}\label{claim:lem-d2-part-1}
    For an index $i\in[n]$, if $i\notin Q$, then $\Pr[\cI(X) = i] \leq \Pr[\cI'(X,S) = i]$.
\end{claim}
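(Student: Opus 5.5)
The plan is a coupling argument. Both $\cI(X)$ and $\cI'(X,S)$ are driven by a single vector of i.i.d.\ Gumbel variables $G_1,\dots,G_n\sim\mathrm{Gumbel}(0,1)$, one for every index. By \Cref{thm:lazy-gumbel} together with \Cref{lemma:gumbel-max-trick}, $\cI(X)$ has the same law as $\arg\max_{j\in[n]}(x_j+G_j)$, so I will identify $\cI(X)$ with this full Gumbel-max. For $\cI'(X,S)$, the first step is to check that the Binomial count $C$, the uniform choice of $T\subseteq[n]\setminus S$, and the conditional noise from \Cref{lemma:conditional-gumbel} jointly reproduce the law of the random set $\{j\notin S: G_j>B\}$ together with its noise values. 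Here $B=M-m$, with $M=\max_{j\in S}(x_j+G_j)$ and $m=\min_{j\in S}x_j$. This holds because, conditioned on the $G_j$ for $j\in S$, the remaining $G_j$ are independent, so each exceeds $B$ independently with probability $1-e^{-e^{-B}}$ and has the conditional law given that event. Hence, in the coupling,
\[
\cI'(X,S)=\arg\max_{j\in S\cup T}(x_j+G_j),\qquad T=\{j\notin S: G_j>B\},
\]
and $\cI'$ is simply the full Gumbel-max with some competitors removed.

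With this coupling, it suffices to prove the pointwise statement: for $i\notin Q$, on the event $\{\cI(X)=i\}$ we also have $\cI'(X,S)=i$. Taking probabilities then gives the claim. The key observation is that deleting competitors cannot make a surviving winner lose. So it remains only to show that $i$ survives, i.e.\ $i\in S\cup T$. I split into the two cases allowed by $i\notin Q$.

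\textbf{Case $i\in P=S$.} Then $i$ is always in $S\cup T$. If it is the global argmax, it is also the argmax over the subset $S\cup T$.

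\textbf{Case $i\in R$.} Since $i$ is the global winner, $x_i+G_i\ge M$, so $G_i\ge M-x_i$. To conclude $G_i\ge B=M-m$, and hence $i\in T$ (ties have probability zero), I need $x_i\le m=\min_{j\in S}x_j$.

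The main obstacle is exactly this inequality $x_i\le\min_{j\in S}x_j$ for $i\in R$. Since $i\notin S^*$, we do get $x_i\le\min_{j\in S^*}x_j$. However, $\min_S x\le\min_{S^*}x$ in general, so this does not immediately yield what is needed. I would first try to establish it from how $S$ is produced. I also expect that an element of $R$ lying above $\min_S x$ can only arise when some top element is missing from $S$, and that such an element can be treated like those in $Q$. If that fails, the fallback is to weaken the coupling argument: I would show $\Pr[G_i\ge B\mid x_i+G_i\ge M]$ is close to $1$ using the approximate top-$k$ condition $\max_{j\notin S}x_j-m\le c$. This would give a bound with an $e^{-c}$ factor, which is still sufficient for \Cref{lem:ann-privacy}.
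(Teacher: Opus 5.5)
Your reduction of $(C,T,\{G_t\}_{t\in T})$ to $T=\{j\notin S: G_j>B\}$ is fine, and so is the case $i\in S$. The coupling is exactly the paper's argument: the paper also asserts that when $i$ is the overall winner of $x_j+G_j$, $\cI'$ returns $i$ ``because either $i\in S$ or $G_i>B$''. You have correctly isolated that the second alternative needs $x_i\le m=\min_{j\in S}x_j$ for $i\in R$, which the paper never justifies. This gap cannot be closed, because the claim itself is false. Take $n=4$, $k=2$, $x=(2,1,\tfrac12,0)$ and $S=\{1,4\}$. Then $\max_{j\notin S}x_j-\min_{j\in S}x_j=1$, so $S$ is an approximate top-$2$ with $c=1$. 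Moreover $S^*=\{1,2\}$, $Q=\{2\}$, $R=\{3\}$, $m=0$, and $B=M=\max(2+G_1,G_4)$. Index $3$ can be returned by $\cI'$ only if $3\in T$, i.e.\ $G_3>M$. Since $M$ is Gumbel with location $\ln(e^2+1)$, this has probability exactly $1/(2+e^2)\approx 0.107$. On the other hand, $\Pr[\cI(X)=3]=e^{1/2}/(e^2+e+e^{1/2}+1)\approx 0.129$. So $\Pr[\cI'(X,S)=3]<\Pr[\cI(X)=3]$ although $3\notin Q$.

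Consequently none of your remedies yields the stated claim. The only hypothesis on $S$ is the approximate-top-$k$ condition, which the example satisfies, so $x_i\le m$ cannot be derived from how $S$ is produced. Moving the elements of $R$ above $m$ into $Q$ (i.e.\ taking $Q:=\{j\notin S: x_j>m\}$) makes your coupling go through verbatim, but that is a different statement. It also loses $|Q|\le k$, which \Cref{claim:lem-d2-part-5} uses to get $A_Q\le e^cA_P$. Your fallback is true as a weaker statement. For $i\in R$ with $x_i>m$: if $m+G_i>\max_{j\ne i}(x_j+G_j)$, then $G_i>M-m=B$, so $i\in T$ and $i$ wins. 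This gives $\Pr[\cI'(X,S)=i]\ge e^{m-x_i}\Pr[\cI(X)=i]\ge e^{-c}\Pr[\cI(X)=i]$. However, it is not the claimed inequality. Nor can it be substituted into the proof of \Cref{claim:lem-d2-part-2}, which sums the factor-one inequality over all $j\neq i$ to contradict total mass $1$; the example above, with $Q=\{2\}$, is exactly that situation. What your argument (and the paper's) actually proves is the claim for $i\in S$ and for $i\notin S$ with $x_i\le\min_{j\in S}x_j$.
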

\begin{proof}
Consider the event $\cI'(X,S)= i$. If $i=\arg\max_{j\in[n]}\{x_j+G_j\}$, then $i$ will be returned by $\mathcal{I}$ because of the Gumbel Max Trick and by $\mathcal{I}'$ because either $i \in S$ or $G_i > B$. If $i\neq\arg\max_{j\in[n]}\{x_j+G_j\}$, there exists some index $j\in Q$ with greater score value than $i$, but with noise value less than $B$, which means that $i$ can still be sampled by $\cI'$, though not by $\cI$. Thus, if $i\notin Q$, we have $\Pr[\cI'(X,S)=i]\geq \Pr[\cI(X)=i]$.
\end{proof}

The next two claims prove an upper bound on $\Pr[\cI'(X, S) = i]$, if $i\in Q$.
\begin{claim}\label{claim:lem-d2-part-2}
    For an index $i\in[n]$, if $Q=\{i\}$, then we have $\Pr[\cI'(X,S)=i]\leq \Pr[\cI(X)=i]$.
\end{claim}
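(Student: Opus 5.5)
We plan to couple $\cI$ and $\cI'$ on a single draw of Gumbel noises $(G_j)_{j\in[n]}$, and show that on every realization where $\cI'$ returns $i$, $\cI$ returns $i$ as well. As in the proof of \Cref{claim:lem-d2-part-1}, $\cI'$ is implemented by sampling independent $G_j\sim\mathrm{Gumbel}(0,1)$ for every $j\in[n]$, computing $M=\max_{j\in S}(x_j+G_j)$, $m=\min_{j\in S}x_j$, $B=M-m$, and returning $\arg\max_{j\in S\cup T}(x_j+G_j)$ with $T=\{j\notin S: G_j>B\}$. By \Cref{lemma:conditional-gumbel} and the independence of the $G_j$, conditioned on $B$ the count $|T|$ is $\mathrm{Bin}(n-k,1-e^{-e^{-B}})$, $T$ is a uniformly random subset of that size, and its members carry Gumbel noise conditioned on exceeding $B$. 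So this coupled process has exactly the output law of \Cref{alg:lazy-gumbel-top-k-runtime}. On the same noise, $\cI$ returns $\arg\max_{j\in[n]}(x_j+G_j)$ by \Cref{lemma:gumbel-max-trick} and \Cref{thm:lazy-gumbel}.

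Now fix a realization with $\cI'(X,S)=i$, where $Q=\{i\}$, so $i\in S^*\setminus S$. Then $i\notin S$, so $i\in T$ and hence $G_i>B$. We must show $x_i+G_i\ge x_j+G_j$ for all $j\in[n]$, splitting into three cases.
\begin{itemize}
\item For $j\in S\cup T$, the inequality holds directly from the definition of the argmax returned by $\cI'$.
\item For $j\notin S\cup T$, we have $j\notin S$ and $G_j\le B$. Since $Q=\{i\}$ and $|S|=|S^*|=k$, we get $S^*=(S\setminus\{s_0\})\cup\{i\}$ for a single $s_0\in S$. Hence $j\notin S\cup\{i\}$ implies $j\notin S^*$.
\item For that case, $x_j\le \min_{l\in S^*}x_l\le x_i$ because $i\in S^*$ and $S^*$ is the true top-$k$. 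Together with $G_j\le B<G_i$ this gives $x_j+G_j< x_i+G_i$.
\end{itemize}
Thus $i$ is the global argmax and $\cI(X)=i$. The event $\{\cI'(X,S)=i\}$ is therefore contained in $\{\cI(X)=i\}$, and taking probabilities gives $\Pr[\cI'(X,S)=i]\le\Pr[\cI(X)=i]$. Ties occur with probability zero and can be ignored.

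The main obstacle is justifying the coupling, namely that drawing all $n$ Gumbels and keeping those above $B$ reproduces the lazy sampler's law even though $B$ depends on the noise in $S$. Independence between $S$-noise and non-$S$-noise makes this go through when we condition on $(G_j)_{j\in S}$.
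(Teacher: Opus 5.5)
Your proof is correct, but it takes a different route from the paper.

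The paper argues by contradiction through normalization. Since $Q=\{i\}$, every $j\neq i$ lies outside $Q$, so \Cref{claim:lem-d2-part-1} gives $\Pr[\cI'(X,S)=j]\ge\Pr[\cI(X)=j]$ for all $j\neq i$. If also $\Pr[\cI'(X,S)=i]>\Pr[\cI(X)=i]$, the output probabilities of $\cI'$ would sum to more than $1$.

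You instead prove a pathwise statement. Under the coupling where all $n$ Gumbels are drawn and $T=\{j\notin S: G_j>B\}$, you show the event $\{\cI'(X,S)=i\}$ is contained in $\{i=\arg\max_{j\in[n]}(x_j+G_j)\}$. The key fact is $S^*\subseteq S\cup\{i\}$, so every unsampled $j$ has $x_j\le x_i$ and $G_j\le B<G_i$.

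What each route buys:
\begin{itemize}
\item The paper's argument is shorter and uses \Cref{claim:lem-d2-part-1} as a black box. But that claim's proof is itself an informal pathwise argument. It silently relies on exactly the coupling you make explicit and justify via \Cref{lemma:conditional-gumbel} and the independence of the $S$-noise from the non-$S$ noise.
\item Your argument is self-contained and yields the stronger event-inclusion conclusion.
\item Your argument also shows clearly why $|Q|=1$ is needed. With a second element $j\in Q\setminus\{i\}$ one could have $G_j\le B$ and yet $x_j+G_j>x_i+G_i$, which breaks the inclusion. That is precisely the case \Cref{claim:lem-d2-part-3} handles by shifting the scores of $Q\setminus\{i\}$ down by $c$ to reduce to $|\widetilde Q|=1$.
\end{itemize}
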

\begin{proof}
Suppose for contradiction that $\Pr[\cI'(X, S) = i] > \Pr[\cI(X) = i]$.

By \Cref{claim:lem-d2-part-1}, for all $j \in [n]\setminus \{i\}$, we have $\Pr[\cI'(X, S) = j] \geq \Pr[\cI(X) = j]$. Then we have
\begin{align*}
    \sum_{j\in[n]}\Pr[\cI'(X,S) = j] &= \Pr[\cI'(X,S)=i] + \sum_{j\in[n]\setminus\{i\}}\Pr[\cI'(X,S)=j]\\
    &\geq \Pr[\cI'(X,S)=i] + \sum_{j\in[n]\setminus\{i\}}\Pr[\cI(X)=j]\\
    &> \sum_{j\in[n]}\Pr[\cI(X)=j] = 1,
\end{align*}
a contradiction.
\end{proof}
\begin{claim}\label{claim:lem-d2-part-3}
    For an index $i\in[n]$, if $i\in Q$ and $|Q|>1$, then we have $\Pr[\cI'(X, S) = i] \leq e^c \Pr[\cI(X) = i]$.
\end{claim}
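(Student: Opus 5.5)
The plan is to couple $\cI'(X,S)$ with an ordinary Gumbel-max experiment on a \emph{slightly perturbed} score vector. Then the probability that $\cI'$ returns $i$ becomes a softmax probability in which only $x_i$ has been raised by $c$.

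\textbf{Step 1: realize $\cI'$ with a full Gumbel vector.} Draw i.i.d.\ $G_1,\dots,G_n\sim\mathrm{Gumbel}(0,1)$ and compute $M$, $m_S=\min_{j\in S}x_j$ and $B=M-m_S$ from $\{G_j\}_{j\in S}$. Conditioned on $B$, the set $T'=\{j\notin S: G_j>B\}$ has size distributed as $\mathrm{Bin}(n-k,1-e^{-e^{-B}})$ and is a uniformly random subset of that size. By \Cref{lemma:conditional-gumbel}, the values $G_j$ for $j\in T'$ are i.i.d.\ Gumbels conditioned on exceeding $B$. Hence the output of \Cref{alg:lazy-gumbel-top-k-runtime} has the same law as
\[
\arg\max_{j\in S\cup T'}(x_j+G_j).
\]
From now on I work with this coupled description.

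\textbf{Step 2: containment of events.} Fix $i\in Q$, so $i\in S^*\setminus S$. I will show that on the event $\{\cI'(X,S)=i\}$ we have $x_j+G_j\le (x_i+c)+G_i$ for every $j\neq i$. Two preliminary facts are needed.

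First, since $Q\neq\emptyset$ and $|S|=|S^*|=k$, some element of $S$ lies outside $S^*$. Its score is at most $x_i$, because $i$ is a true top-$k$ index. Hence $m_S\le x_i$.

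Second, by \Cref{def:approx-top-k}, every $j\notin S$ satisfies $x_j\le m_S+c$.

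Now, if $i$ is returned then $i\in T'$, so $G_i>B=M-m_S$. Consider each $j\neq i$ in turn:
\begin{itemize}
\item If $j\in S$, then $x_j+G_j\le M< x_i+G_i+m_S-x_i+ c$. Since $m_S\le x_i+c$ (indeed $m_S\le x_i$), this gives $x_j+G_j\le x_i+c+G_i$.
\item If $j\in T'$, then $x_j+G_j\le x_i+G_i$ directly, because $i$ is the argmax over $S\cup T'$.
\item If $j\notin S\cup T'$, then $G_j\le B$. So $x_j+G_j\le x_j+M-m_S<x_j+G_i$, and $x_j\le m_S+c\le x_i+c$.
\end{itemize}
Thus $\{\cI'(X,S)=i\}\subseteq\{i\in\arg\max_j(\tilde x_j+G_j)\}$, where $\tilde x_i=x_i+c$ and $\tilde x_j=x_j$ for $j\neq i$.

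\textbf{Step 3: apply the Gumbel-max trick.} By \Cref{lemma:gumbel-max-trick} applied to $\tilde x$,
\[
\Pr[\cI'(X,S)=i]\le \frac{e^{x_i+c}}{e^{x_i+c}+\sum_{j\ne i}e^{x_j}}\le e^c\frac{e^{x_i}}{\sum_j e^{x_j}}=e^c\Pr[\cI(X)=i].
\]
The last equality uses \Cref{thm:lazy-gumbel}.

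\textbf{Main obstacle.} The delicate step is the third case in Step 2: indices outside $S$ whose noise fell below $B$ and were therefore never examined. Bounding their perturbed score requires the inequality $m_S\le x_i$. That inequality holds only because $Q\ne\emptyset$ forces $S$ to contain a non-top-$k$ element, and this is exactly where the combinatorial structure of $P,Q,R$ enters. The hypothesis $|Q|>1$ appears not to be needed for this argument. I would also double-check the measure-theoretic equivalence in Step 1, namely that choosing a uniformly random subset of binomial size matches the i.i.d.\ threshold exceedances.
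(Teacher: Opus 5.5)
Your proof is correct, and it takes a genuinely different route from the paper's. The paper lowers the scores of $Q\setminus\{i\}$ by $c$ to form $\widetilde X$. It argues by monotonicity that $\Pr[\cI'(X,S)=i]\le\Pr[\cI'(\widetilde X,S)=i]$ and asserts that $|\widetilde Q|=1$. It then invokes \Cref{claim:lem-d2-part-2} (itself a sum-to-one argument built on \Cref{claim:lem-d2-part-1}) to pass to $\Pr[\cI(\widetilde X)=i]$, and finishes with an explicit softmax computation.

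You instead realize $\cI'$ from one full Gumbel vector. You show in a single step that $\{\cI'(X,S)=i\}$ is contained in the Gumbel-max event for the score vector with only $x_i$ raised by $c$. This gives the bound directly, and in fact the slightly sharper $e^{x_i+c}/(e^{x_i+c}+\sum_{j\ne i}e^{x_j})$.

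Your route is self-contained and never uses $|Q|>1$; you are right that this hypothesis is unnecessary. It also avoids the paper's reduction to $|\widetilde Q|=1$, which is not justified and can fail. Take $k=3$, $x=(10,5,5,4.6,4,4)$ and $S=\{1,5,6\}$, so $m_S=4$, $c=1$, $Q=\{2,3\}$, $R=\{4\}$. With $i=2$, lowering $x_3$ to $4$ lets index $4$ enter the top-$3$, giving $\widetilde Q=\{2,4\}$. The paper's approach has the appeal of reusing its earlier claims, but its correctness hinges on that reduction.

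Your Step 1 coupling is sound. Given $(G_j)_{j\in S}$, the indicators $\mathbf{1}[G_j>B]$ for $j\notin S$ are i.i.d.\ Bernoulli$(1-e^{-e^{-B}})$. So $T'$ is a uniformly random subset of binomial size, and the exceeding values are i.i.d.\ Gumbels conditioned on exceeding $B$, exactly as in the algorithm.

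Two cosmetic points. In the $j\in S$ case the clean chain is $x_j+G_j\le M<G_i+m_S\le G_i+x_i$. Step 3 uses $c\ge 0$, which holds here since $0\le x_i-m_S\le c$.
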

\begin{proof}
Consider a modified dataset $\widetilde{X} = \{\widetilde{x}_1,...,\widetilde{x}_n\}\in\R^n$, where for all $j\in[n]$, we have
$$\widetilde{x}_j = \begin{cases}
x_j - c &\text{if }j \in Q \setminus \{i\}\\
x_j &\text{otherwise}
\end{cases},$$
and let $\widetilde Q$ be the analogous set to $Q$, but for dataset $\widetilde X$.

We first prove that $\Pr[\cI'(X, S) = i] \leq \Pr[\cI'(\widetilde{X}, S) = i]$. To prove this statement, we fix the Gumbel noise $G_i$ on index $i$. Then, we have 
% and prove that $\Pr[\cI'(X, S) = i | G_i] \leq \Pr[\cI'(\widetilde{X}, S) = i | G_i]$. 
\begin{align*}
    \Pr[\cI'(X, S) = i \mid G_i] &=\prod_{j \in Q \setminus \{i\}} \Pr[\underbrace{G_j < \max\{x_i + G_i - x_j, B\}}_{(1)}\mid G_i] \cdot \prod_{j \notin Q}{\Pr[G_j < x_i + G_i - x_j\mid G_i]}\\
    \tag{for every $j\in Q\setminus\{i\}$, we have $\cI'(X,S)\neq j$ if $x_j+G_j<x_i+G_i$ or $G_j<B$}\\
    &\leq \prod_{j \in Q \setminus \{i\}} \Pr[\underbrace{G_j < \max\{x_i + G_i - (x_j - c), B\}}_{(2)}\mid G_i]\cdot \prod_{j \notin Q}{\Pr[G_j < x_i + G_i - x_j\mid G_i]}\\
    \tag{event $(1)$ is a subset of event $(2)$}\\
    &\leq \prod_{j \in Q \setminus \{i\}} \Pr[G_j < \max\{\widetilde{x}_i + G_i - \widetilde{x}_j, B\}\mid G_i]\cdot \prod_{j \notin Q}{\Pr[G_j < \widetilde{x}_i + G_i - \widetilde{x}_j\mid G_i]}\\
    &= \Pr[\cI'(\widetilde{X}, S) = i \mid G_i].
\end{align*}

Then, by the law of total probability, we have
\begin{align*}
    \Pr[\cI'(X,S)=i] &= \int_{x\in\R}\Pr[\cI'(X,S)=i\mid G_i=x]\cdot f_{G_i}(x)dx\\
    &\leq \int_{x\in\R}\Pr[\cI'(\widetilde{X}, S) = i \mid G_i=x]\cdot f_{G_i}(x)dx\\
    &= \Pr[\cI'(\widetilde{X}, S) = i].
\end{align*}

Since we decrease all $x_{j}$ where $j \in Q \setminus \{i\}$ by $c$, these elements will no longer be in the true top-$k$. Therefore, $|\widetilde Q| = 1$.

Since $|\widetilde Q| = 1$, we have
\begin{align*}
    \Pr[\cI'(\widetilde X, S) = i] &\leq \Pr[\cI(\widetilde X) = i] & \tag{by \Cref{claim:lem-d2-part-2}}\\
    &= \frac{e^{x_i}}{A_P + e^{-c}(A_Q - e^{x_i}) + e^{x_i} + A_R}\\
    &= \frac{e^ce^{x_i}}{e^cA_P + A_Q + (e^c - 1) e^{x_i} + e^cA_R}\\
    &\leq \frac{e^ce^{x_i}}{A_P + A_Q + A_R}\\
    &= e^c \Pr[\cI(X) = i],
\end{align*}
which proves \Cref{claim:lem-d2-part-3}.
\end{proof}
Next, we prove a lower bound on $\Pr[\cI'(X,S)=i]$, if $i\in Q$.
\begin{claim}\label{claim:lem-d2-part-4}
    For an index $i\in [n]$, if $i\in Q$, then we have $e^{-c}\Pr[\cI(X)=i]\leq \Pr[\cI'(X,S)=i]$.
\end{claim}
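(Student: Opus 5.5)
The plan is a coupling argument: realize $\cI'$ and the exponential mechanism on a slightly perturbed dataset with the \emph{same} Gumbel variables, and show that the perturbed mechanism selecting $i$ forces $\cI'(X,S)=i$. Concretely, I will first make explicit the ``full-noise'' description of \Cref{alg:lazy-gumbel-top-k-runtime}, which is what already underlies the proofs of \Cref{claim:lem-d2-part-1} and \Cref{claim:lem-d2-part-3}.

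\textbf{Step 1 (full-noise description of $\cI'$).} Draw i.i.d.\ $G_1,\dots,G_n\sim\mathrm{Gumbel}(0,1)$. Put $M=\max_{j\in S}(x_j+G_j)$, $m=\min_{j\in S}x_j$ and $B=M-m$. Then $\cI'(X,S)$ has the same law as
\[
\arg\max_{j\in S\cup\{j\notin S:\,G_j>B\}}(x_j+G_j).
\]
The reason is that $B$ depends only on the noise in $S$. Given $B$, the number of $j\notin S$ with $G_j>B$ is $\mathrm{Bin}(n-k,1-e^{-e^{-B}})$, their identities form a uniform subset, and their noises are Gumbels conditioned on exceeding $B$ (\Cref{lemma:conditional-gumbel}). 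In the same probability space, $\cI(X)$ is realized as $\arg\max_{j\in[n]}(x_j+G_j)$ by \Cref{lemma:gumbel-max-trick} and \Cref{thm:lazy-gumbel}.

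\textbf{Step 2 (the perturbed dataset).} Fix $i\in Q$, so $i\notin S$. Define $\widetilde X$ by $\widetilde x_i=x_i-c$ and $\widetilde x_j=x_j$ for $j\neq i$, and consider the event
\[
E=\{\,i=\arg\max_{j\in[n]}(\widetilde x_j+G_j)\,\}.
\]
On $E$, for every $j\in S$ we have $x_i-c+G_i\ge x_j+G_j$, hence $x_i-c+G_i\ge M$. The approximate top-$k$ condition (\Cref{def:approx-top-k}) and $i\notin S$ give $x_i\le m+c$. Therefore
\[
G_i\ \ge\ M-x_i+c\ \ge\ M-m\ =\ B,
\]
so $i$ lies in the candidate set $S\cup\{j\notin S: G_j>B\}$ (equality has probability zero). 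Moreover, on $E$ we have $x_i+G_i>x_i-c+G_i\ge x_j+G_j$ for every $j\neq i$, so $i$ beats every other candidate under the original scores. Hence $E\subseteq\{\cI'(X,S)=i\}$.

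\textbf{Step 3 (computing the bound).} By \Cref{lemma:gumbel-max-trick}, with $Z=A_P+A_Q+A_R=\sum_j e^{x_j}$,
\[
\Pr[\cI'(X,S)=i]\ \ge\ \Pr[E]\ =\ \frac{e^{x_i-c}}{Z-e^{x_i}(1-e^{-c})}\ \ge\ \frac{e^{-c}e^{x_i}}{Z}\ =\ e^{-c}\Pr[\cI(X)=i],
\]
which is exactly the claim.

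The main obstacle is Step 1: stating the coupling cleanly, so that the lazy sampler is identified exactly with the thresholded argmax over fully sampled noise. Once that identification is in place, the key inequality $G_i\ge B$ is a one-line consequence of the approximation guarantee $x_i\le m+c$ together with the choice to shift $x_i$ by exactly $c$.
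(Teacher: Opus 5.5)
Your proof is correct, and it is a variant of the paper's argument rather than a different strategy. Both proofs couple $\cI'(X,S)$ with a Gumbel-max on a $c$-shifted instance under shared noise, then evaluate that event with \Cref{lemma:gumbel-max-trick}. The difference is which scores get shifted.

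\textbf{The paper's shift.} The paper raises every score in $S$ by $c$, so that $S$ becomes an exact top-$k$ of $\widetilde X$. It then uses that $\cI'(\widetilde X,S)$ is exactly the exponential mechanism on $\widetilde X$. This gives $\Pr[\cI'(\widetilde X,S)=i]=e^{x_i}/(e^cA_P+A_Q+A_R)\ge e^{-c}\Pr[\cI(X)=i]$.

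\textbf{Your shift.} You lower $x_i$ alone by $c$, which is equivalent to raising every $j\neq i$. This produces a smaller event and a slightly weaker intermediate bound, $e^{x_i-c}/(Z-e^{x_i}(1-e^{-c}))$. The denominator exceeds the paper's by $(e^c-1)(A_Q+A_R-e^{x_i})\ge 0$, but the final factor is still exactly $e^{-c}$.

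\textbf{What your version buys.} The containment is checked directly from \Cref{def:approx-top-k}: you prove $G_i\ge M-m=B$, i.e.\ that $i$ actually enters the candidate set of $\cI'(X,S)$. The paper's write-up only shows that $i$ is the global argmax of $x_j+G_j$, which is the event $\cI(X)=i$ (it writes $\cI(X,S)$). It leaves membership in the candidate set implicit. That membership does hold there, because shifting all of $S$ by $c$ cancels in $M-m$, so $\widetilde B=B$.

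Your Step 1 makes the full-noise description of the lazy sampler explicit. The same coupling is used tacitly in \Cref{claim:lem-d2-part-1} and \Cref{claim:lem-d2-part-3}.
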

\begin{proof}
Again, consider a modified dataset $\widetilde X = \{\widetilde x_1,...,\widetilde x_n\}$, where for all $j\in [n]$, we have
$$
\widetilde{x}_j = \begin{cases}
x_j + c &\text{if }j \in S\\
x_j &\text{otherwise}
\end{cases},
$$
and let $\widetilde S^*$ be the analogous set to $S^*$, but for dataset $\widetilde X$. Then, we have $S=\widetilde S^*$.

First we prove that $\Pr[\cI'(\widetilde X, S) = i]\leq \Pr[\cI(X, S) = i]$. Consider fixing $G_1,...,G_n$, such that $\cI'(\widetilde X, S) = i$ is true. Then we know that $\widetilde x_i+G_i = \max_{j\in[n]}\{\widetilde{x}_j+G_j\}$. Then, this implies that $x_i+G_i=\max_{j\in[n]}\{x_j+G_j\}$, since 
\begin{align*}
    \widetilde x_i+G_i &= \max_{j\in[n]}\{\widetilde{x}_j+G_j\}\\
    &= \max\{\max_{j\in S}\{(x_j+c)+G_j\}, \max_{j\notin S}\{x_j+G_j\}\}\\
    &\geq \max\{\max_{j\in S}\{x_j+G_j\}, \max_{j\notin S}\{x_j+G_j\}\}\\
    &= \max_{j\in[n]}\{x_j+G_j\}.
\end{align*}

Thus, the event $\widetilde x_i+G_i = \max_{j\in[n]}\{\widetilde{x}_j+G_j\}$ is a subset of $x_i+G_i=\max_{j\in[n]}\{x_j+G_j\}$, and
$$
\Pr[\cI'(\widetilde X,S) = i]\leq \Pr[\cI(X,S)=i].
$$

Next, we prove $e^{-c}\Pr[\cI(X)=i]\leq \Pr[\cI'(\widetilde X,S)=i]$.

Since $S=\widetilde S^*$, for all $i\in[n]$, we have $\Pr[\cI'(\widetilde X,S)=i]=\Pr[\cI(\widetilde X)=i]$. We then have
\begin{align*}
    \Pr[\cI(\widetilde X)=i] &= \frac{e^{x_i}}{e^cA_P + A_Q + A_R} \tag{Since we added $c$ to every element in $S$}\\
    &\geq e^{-c}\frac{e^{x_i}}{A_P + A_Q + A_R}\\
    &= e^{-c}\Pr[\cI(X)=i].
\end{align*}

Thus, we have 
$$
e^{-c}\Pr[\cI(X)=i]\leq\Pr[\cI'(\widetilde X,S) = i]\leq \Pr[\cI(X,S)=i],
$$
which proves \Cref{claim:lem-d2-part-4}
\end{proof}
Finally, we prove an upper bound on $\Pr[\cI(X,S)=i]$, if $i\notin Q$.
\begin{claim}\label{claim:lem-d2-part-5}
    For index $i\in[n]$, if $i\notin Q$, then $\Pr[\cI(X,S)=i]\leq e^c\Pr[\cI(X)=i]$.
\end{claim}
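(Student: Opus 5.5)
The plan is to reuse the two devices from \Cref{claim:lem-d2-part-3}: monotonicity under lowering competitors, followed by a reduction to a dataset on which $S$ is (almost) an exact top-$k$. Fix $i\notin Q$, so $i\in P$ or $i\in R$. I would define a modified dataset $\widetilde X$ by
$$
\widetilde x_j = \begin{cases} x_j - c & \text{if } j\notin S \text{ and } j\neq i,\\ x_j & \text{otherwise.}\end{cases}
$$

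\textbf{Step 1 (monotonicity).} I would show $\Pr[\cI'(X,S)=i]\leq \Pr[\cI'(\widetilde X,S)=i]$ by conditioning on $G_i$, as in \Cref{claim:lem-d2-part-3}.
\begin{itemize}
\item The threshold $B=M-\min_{j\in S}x_j$ depends only on scores and noises in $S$. These are unchanged, so $B$ is the same for $X$ and $\widetilde X$.
\item For each competitor $j\in S$, the event that $j$ does not beat $i$ is identical in both datasets.
\item For each competitor $j\notin S\cup\{i\}$, the event is $\{G_j<\max\{x_i+G_i-x_j,B\}\}$. It only enlarges when $x_j$ is lowered by $c$.
\item If $i\in R$, the additional requirement $G_i>B$ is the same in both datasets.
\end{itemize}
Given $G_i$, the competitor noises are independent. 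Taking the product over competitors and integrating over $G_i$ gives the inequality.

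\textbf{Step 2 (exactness on $\widetilde X$).} By the approximate top-$k$ condition, every $j\notin S$ has $x_j\le \min_{S}x+c$. Hence every $j\notin S\cup\{i\}$ satisfies $\widetilde x_j\le\min_S\widetilde x$. So the true top-$k$ set $\widetilde S^*$ of $\widetilde X$ differs from $S$ by at most the single index $i$, which means $\widetilde Q\subseteq\{i\}$.
\begin{itemize}
\item If $\widetilde Q=\emptyset$ (in particular whenever $i\in P$), then $\cI'(\widetilde X,S)$ is exactly lazy Gumbel on $\widetilde X$. By \Cref{thm:lazy-gumbel}, its output law equals that of $\cI(\widetilde X)$.
\item If $\widetilde Q=\{i\}$, then \Cref{claim:lem-d2-part-2}, applied to $\widetilde X$, gives $\Pr[\cI'(\widetilde X,S)=i]\le\Pr[\cI(\widetilde X)=i]$.
\end{itemize}

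\textbf{Step 3 (compare exact probabilities).} Finally, I would compute
$$
\Pr[\cI(\widetilde X)=i]=\frac{e^{x_i}}{A_P+e^{-c}\left(A_Q+A_R-\mathbf{1}[i\in R]e^{x_i}\right)+\mathbf{1}[i\in R]e^{x_i}}.
$$
This denominator is at least $e^{-c}(A_P+A_Q+A_R)$. Hence $\Pr[\cI(\widetilde X)=i]\le e^{c}\Pr[\cI(X)=i]$, and chaining with Steps 1--2 gives the claim.

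The step I expect to be most delicate is Step 1 in the case $i\in R$. There the event ``$\cI'$ returns $i$'' also needs $i\in T$, i.e.\ $G_i>B$. One must check carefully that lowering the out-of-$S$ competitors neither changes $B$ nor the event $G_i>B$, and that the per-$j$ events remain conditionally independent given $G_i$, so that the product form is valid.
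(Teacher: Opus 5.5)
Your proof is correct, and it takes a genuinely different route from the paper's. The paper argues globally. It sets $p=\sum_{j\in Q}\Pr[\cI'(X,S)=j]$ and lower-bounds $p$ via \Cref{claim:lem-d2-part-4}. It then asserts that, conditioned on $\cI'(X,S)\notin Q$, the output is distributed proportionally to $e^{x_j}$ on $P\cup R$, and it finishes with $A_Q\le e^cA_P$. You instead reuse the template of \Cref{claim:lem-d2-part-3}: lower every out-of-$S$ competitor by $c$ (which can only help $i$), observe that $S$ is then an exact top-$k$ up to $i$ itself, and compare two explicit softmax probabilities. What your route buys is self-containment. It needs neither \Cref{claim:lem-d2-part-4} nor the conditional-law identity, and that identity does not hold in general. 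Take $n=3$, $k=1$, $S=\{2\}$, $e^{x_1}=2$, $e^{x_2}=e^{x_3}=1$, so that $Q=\{1\}$, $R=\{3\}$ and $B=G_2$. A direct computation gives $\Pr[\cI'(X,S)=1]=\frac25$, $\Pr[\cI'(X,S)=2]=\frac13$ and $\Pr[\cI'(X,S)=3]=\frac{4}{15}$. So two equal scores in $P\cup R$ receive different conditional mass, and the paper's formula $\frac{e^{x_i}}{A_P+A_R}(1-p)$ would give $\frac{3}{10}$ for index $2$. The final inequality of the claim still holds here; your argument establishes it without that step.

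Two small tightenings of your write-up. First, in Step 1, do not take a product over competitors conditioned on $G_i$ alone. That is not quite right for $i\in R$, because $B=\max_{j\in S}(x_j+G_j)-\min_S x$ depends on all of $(G_j)_{j\in S}$. Instead, argue pathwise under the coupling in which all $G_j$ are drawn up front and $T=\{j\notin S: G_j>B\}$. Then $B$ and the candidate set $S\cup T$ do not depend on any score outside $S$, so lowering competitors' scores preserves the event that $i$ is the argmax. This removes the delicacy you flagged.

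Second, in Step 2 with $\widetilde Q=\{i\}$, the same pathwise reasoning gives the needed inequality directly. If $\cI'(\widetilde X,S)=i$, then every excluded $j\neq i$ satisfies $\widetilde x_j+G_j\le \min_S\widetilde x+B=M<x_i+G_i$, so $i$ is also the global argmax. This is exactly the situation, with all other out-of-$S$ scores at most $\min_S\widetilde x$, in which the argument behind \Cref{claim:lem-d2-part-1}, and hence \Cref{claim:lem-d2-part-2}, is actually valid. So your appeal to \Cref{claim:lem-d2-part-2} on $\widetilde X$ is sound, but the direct argument is cleaner.
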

\begin{proof}
Let $p=\sum_{j\in Q}\Pr[\cI'(X,S)=j]$. Then, by \Cref{claim:lem-d2-part-4}, we have
$$
p\geq e^{-c}\frac{A_Q}{A_P+A_Q+A_R}.
$$

Consider the probability $\Pr[\cI'(X,S)=i]$. We have
\begin{align*}
\Pr[\cI'(X,S)=i] &= \Pr[\cI'(X,S)=i\mid \cI'(X,S)\notin Q]\cdot\Pr[\cI'(X,S)\notin Q]\\
&= \frac{e^{x_i}}{A_P+A_R}\cdot (1-p).
\end{align*}

Consider any element $a\in P$ and $b\in Q$. We have
$$
x_{b}-x_{a}\leq \max_{j\in Q}x_j - \min_{j\in P}x_j \leq c \iff e^{x_b}\leq e^{c+x_a},
$$
which implies 
$$A_Q = \sum_{j\in Q}e^{x_j}\leq \sum_{j\in P}e^{c+x_j}  = e^cA_P,$$
where the inequality holds because $|P|=k\geq |Q|$.

Since $p \geq \frac{e^{-c}A_Q}{A_P + A_Q + A_R}$, we have $1 - p \leq \frac{A_P + (1-e^{-c})A_Q + A_R}{A_P + A_Q + A_R}$. Then, we have
\begin{align*}
    \Pr[\cI'(X, S) = i] &= (1 - p) \cdot \frac{e^{x_i}}{A_P + A_R}\\
    &\leq \frac{A_P + (1-e^{-c})A_Q + A_R}{A_P + A_Q + A_R} \cdot \frac{e^{x_i}}{A_P + A_R}\\
    &= \frac{A_P + (1-e^{-c})A_Q + A_R}{A_P + A_R} \cdot \frac{e^{x_i}}{A_P + A_Q + A_R}\\
    &\leq \frac{A_P + (1-e^{-c})e^cA_P + A_R}{A_P + A_R} \cdot \frac{e^{x_i}}{A_P + A_Q + A_R}\\
    &= \frac{e^cA_P + A_R}{A_P + A_R} \cdot \frac{e^{x_i}}{A_P + A_Q + A_R}\\
    &\leq e^c \cdot \frac{e^{x_i}}{A_P + A_Q + A_R}\\
    &= e^c\Pr[\cI(X) = i],
\end{align*}
which proves \Cref{claim:lem-d2-part-5}
\end{proof}
We can now combine the above claims to prove \Cref{lem:ann-privacy}.
\begin{proof}[Proof of \Cref{lem:ann-privacy}]
Combining \Cref{claim:lem-d2-part-1}, \Cref{claim:lem-d2-part-2}, \Cref{claim:lem-d2-part-3}, \Cref{claim:lem-d2-part-4}, and \Cref{claim:lem-d2-part-5} altogether proves $e^{-c}\Pr[\cI(X) = i] \leq \Pr[\cI'(X, S) = i] \leq e^c\Pr[\cI(X) = i]$.
\end{proof}

% \begin{lemma}
% E' (\Cref{alg:lazy-gumbel-top-k-runtime}) is $(\eps + 2c)$-DP.
% \end{lemma}

We are now ready to prove \Cref{thm:ann-runtime-preserving}.

\begin{proof}[Proof of \Cref{thm:ann-runtime-preserving}]
We first prove that \Cref{alg:lazy-gumbel-top-k-runtime} is $(\eps+2c)$-DP.

Let $\Omega_S$ be the sample space of all possible approximate top-$k$ sets, and $\cI'(X)$ be the output of the index, which includes the random coins of the selection of $S\sim \Omega_S$. Then, for all $T\subseteq [n]$, we have
\begin{align*}
    \Pr[\cI'(X)\in T] &= \sum_{S\in\Omega_S}\sum_{i\in T}\Pr[S]\cdot \Pr[\cI'(X,S)=i]\\
    &\leq \sum_{S\in \Omega_S}\sum_{i\in T}\Pr[S]\cdot e^c\Pr[\cI(X)=i] \tag{by \Cref{lem:ann-privacy}}\\
    &= \sum_{S\in \Omega_S}\Pr[S]\cdot e^c\Pr[\cI(X)\in T]\\
    &= e^c\Pr[\cI(X)\in T].
\end{align*}

Next, let $X'$ be the neighboring dataset of $X$. We will prove that $\Pr[\cI'(X) \in T] \leq e^{\eps + 2c} \Pr[\cI'(X') \in T]$. We have
\begin{align*}
    \Pr[\cI'(X) \in T] &\leq e^c\Pr[\cI(X) \in T]\\
    &\leq e^{\eps + c}\Pr[\cI(X') \in T] \tag{\Cref{alg:lazy-gumbel} is $\eps$-DP}\\
    &\leq e^{\eps + 2c}\Pr[\cI'(X') \in T].
\end{align*}

Similarly, we have $e^{-\eps - 2c} \Pr[\cI'(X') \in T] \leq \Pr[\cI'(X) \in T]$ for all $T\subseteq [n]$. Combining \Cref{lem:ann-runtime}, this concludes the proof that \Cref{alg:lazy-gumbel-top-k-runtime} is $(\eps + 2c)$-DP and runs in expected time $\Theta(\sqrt n)$.
\end{proof}

\subsection{Preserving Privacy}
Here, we describe our algorithm for preserving privacy under approximation, at the cost of a slightly higher runtime.

The privacy preserving algorithm is identical to \Cref{alg:lazy-gumbel-top-k-runtime}, except we now set $B=M-m-c$.

\begin{algorithm}[H]
\caption{Lazy Gumbel Sampling using Approximate Top-$k$ while preserving runtime}
\label{alg:lazy-gumbel-top-k-privacy}
\begin{algorithmic}[1]
\State \textbf{Inputs:} An integer $k \in [n]$, and a score dataset $X = \{x_1, x_2, x_3, ..., x_n\} \in \mathbb{R} ^ n$
\vspace{1mm}
\State Let $S$ be an \textbf{approximate top-$k$} of $X$
\For{$j \in S$}
    \State Sample $G_j \sim \mathrm{Gumbel}(0, 1)$
\EndFor
\State Let $M = \max_{j \in S}(x_j + G_j)$, $m = \min_{j \in S} x_j$, and $B = M - m - c$
\State Let $C \sim \mathrm{Bin}(n - k, 1 - \exp(-\exp(-B)))$ %be the number of indices in $[n]\setminus S$ that receives a Gumbel noise at least $B$.
\State {Let $T\subseteq [n]\setminus S$ be the set of $C$ distinct indices sampled from $[n]\setminus S$.}% Sample $C$ indices from $[n]\setminus S$ as a set $T$.
\State Sample $U_i\sim \mathrm{Uniform}(\exp(-\exp(-B)), 1)$, and let $G_i=-\ln(-\ln(U_i))$ for $\forall i\in T$. 

% \State Sample $G_i \sim \mathrm{Gumbel}(0, 1)$, conditioned on $G_i >B$ for $\forall i \in T$. 
\vspace{1mm}
\State \textbf{return} $\arg\max_{i \in S \cup T}(x_i + G_i)$
\end{algorithmic}
\end{algorithm}

% First, we will show that \Cref{alg:lazy-gumbel-top-k-privacy} is $\eps$-DP.
\begin{theorem}\label{thm:ann-privacy}
    Given the set $S$, the approximate top-$k$ scores of dataset $X$, \Cref{alg:lazy-gumbel-top-k-privacy} runs in expected time $e^c\cdot\Theta(\sqrt n)$ and is $\eps$-DP.
\end{theorem}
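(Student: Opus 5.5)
The plan is to show that, once the margin is lowered to $B=M-m-c$, \Cref{alg:lazy-gumbel-top-k-privacy} samples \emph{exactly} from the categorical distribution $p_i\propto e^{x_i}$. Privacy then follows from the privacy of the exponential mechanism and post-processing (\Cref{thm:post processing DP}), with no $2c$ loss. The runtime bound comes from redoing the expected-sample count of \Cref{thm:lazy-gumbel} with the smaller margin.

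\textbf{Exactness.} Couple the algorithm with the full Gumbel-Max Trick (\Cref{lemma:gumbel-max-trick}). Draw $G_1,\dots,G_n$ i.i.d.\ $\mathrm{Gumbel}(0,1)$ and let $i^\star=\arg\max_j (x_j+G_j)$. I will argue that the algorithm's output equals $i^\star$ almost surely.

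Fix $j\notin S$. By \Cref{def:approx-top-k}, $x_j\le \min_{i\in S}x_i + c = m+c$. Hence if $G_j\le B = M-m-c$, then $x_j+G_j\le M$, so $j$ cannot beat the best element of $S$, and $j$ can be discarded. The indices $j\notin S$ with $G_j>B$ are exactly those that could matter. Conditioned on the $G_s$ for $s\in S$ (and hence on $B$), the number of such indices is $\mathrm{Bin}(n-k,1-e^{-e^{-B}})$, and their identities form a uniformly random subset of $[n]\setminus S$ of that size. By \Cref{lemma:conditional-gumbel}, their noise values are distributed as Gumbels conditioned on exceeding $B$, which is exactly what the uniform sampling on $(e^{-e^{-B}},1)$ produces.

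Therefore the joint law of $(S\cup T,\{G_i\}_{i\in S\cup T})$ matches the law of the candidates that could possibly be the argmax in the full trick. The returned argmax over $S\cup T$ thus has the law of $i^\star$, namely $p$. One subtlety is that $S$ may be random (produced by the index), but the argument holds for every fixed approximate top-$k$ set, so we can average over $S$, as in the proof of \Cref{thm:ann-runtime-preserving}.

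\textbf{Privacy.} Applying the above with $x_i=\varepsilon s(D,i)/(2\Delta)$, the output distribution on every input $D$ equals that of the exponential mechanism, which is $\varepsilon$-DP. The approximate set $S$ only affects runtime, not the output law, so $\varepsilon$-DP follows directly.

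\textbf{Runtime.} This is the main obstacle. The expected work is $k+\mathbb{E}[C]=k+(n-k)\,\mathbb{E}[1-e^{-e^{-B}}]\le k+n\,\mathbb{E}[e^{-B}]$, using $1-e^{-y}\le y$. Since $e^{-B}=e^{c}e^{-(M-m)}$, this equals $k+e^{c}\,n\,\mathbb{E}[e^{-(M-m)}]$.

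Now $M-m\ge \max_{s\in S}G_s$, because $x_s\ge m$ for every $s\in S$. The maximum of $k$ i.i.d.\ standard Gumbels is $\mathrm{Gumbel}(\ln k,1)$, so $\mathbb{E}[e^{-\max G}]=1/k$ (for a standard Gumbel, $e^{-G}\sim\mathrm{Exp}(1)$, and the shift by $\ln k$ contributes the factor $1/k$). The expected work is therefore at most $k+e^{c}n/k$. Choosing $k=\Theta(\sqrt n)$ gives $O(e^{c}\sqrt n)$, which is the claimed $e^c\cdot\Theta(\sqrt n)$ bound (treating $c$ as a constant). The key point is that this bound never uses the true top-$k$, only that every element of $S$ has score at least $m$, so it holds for the approximate set.
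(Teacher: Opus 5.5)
Your proposal is correct and follows the same route as the paper. Lowering the margin to $B=M-m-c$ guarantees that any $j\notin S$ with $G_j\le B$ has $x_j+G_j\le M$, so the output law is exactly that of the exponential mechanism, and the runtime rests on $\mathbb{E}[C]\le e^{c}n/k$. The differences are that you make the coupling with the full Gumbel-Max Trick explicit, and you derive the bound on $\mathbb{E}[C]$ yourself (via $1-e^{-y}\le y$, $M-m\ge\max_{s\in S}G_s$, and $\mathbb{E}[e^{-\max_{s\in S}G_s}]=1/k$), whereas the paper simply cites \citet{mussmann2017fast} for it.
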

\begin{proof}
We first prove that \Cref{alg:lazy-gumbel-top-k-privacy} is $\eps$-DP.

Given our approximate top-$k$ set $S$, we have that $\max_{i \notin S} x_i - \min_{i \in S} x_i \leq c$. For an element $j \notin S$ to achieve $j=\arg\max_{i \in [n]}\{x_i + G_i\}$, the Gumbel noise $G_j$ need to be at least 
\begin{align*}
    \max_{i \in S}\{x_i + G_i\} - x_j &\geq \max_{i \in S}\{x_i + G_i\} - \max_{i \notin S} \{x_i\}\\
    &\geq \max_{i \in S}\{x_i + G_i\} - \min_{i \in S} \{x_i\}- c\\
    &= M - m - c\\
    &= B.
\end{align*}
Therefore, we guarantee that if $j \notin S$ and $G_j< B$, index $j$ cannot be chosen. Thus, \Cref{alg:lazy-gumbel-top-k-privacy} follows the same distribution as exponential mechanism and has the same privacy guarantees, which is $\eps$-DP.

It is also proven by Mussmann et al. \citep{mussmann2017fast} that $\mathbb{E}[C] \leq \frac{e^cn}{k}$. Therefore, when $k =\Theta(\sqrt{n})$, we need to sample $\sqrt{n}$ maximum score from indices $[n]$ and at most $\frac{e^cn}{k} = e^c\sqrt{n}$ extra indices from $[n]\setminus S$ in expectation. Thus, we need to sample $e^c\cdot\Theta(\sqrt{n})$ elements in expectation.

Thus, \Cref{alg:lazy-gumbel-top-k-privacy} runs in expected time $e^c\cdot \Theta(\sqrt n)$ and is $\eps$-DP.
\end{proof}

\section{Proof of \Cref{thm:dense-mwu-fast}}
\label{sec:proof-of-dense-mwu-fast}
We rely on the following lemma by \cite{hsu2014privately}:
\begin{lemma}
    \label{lemma:non-private}
    Let $0 \leq \alpha \leq 9\rho$ and $\beta \in (0,1)$. With probability at least $1-\beta$, dense MWU with density $s$ and equipped with a $(\frac{\alpha}{3},\frac{\beta}{T})$--approximate, $\rho$-bounded dual oracle finds some $x^* \in \mathcal{K}_\text{OPT}$ for which $A_i x^* \leq b_i + \alpha$ except for at most $s-1$ constraints. 
\end{lemma}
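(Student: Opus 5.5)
The plan is the standard ``oracle + no-regret'' argument, run on the constraint side and using the dense regret bound (the lemma of \cite{herbster2001tracking} stated above) in place of the usual MWU regret.

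\textbf{Setup.} At round $t$ the dense MWU maintains $\widetilde{y}^{(t)}$, a $\frac{1}{s}$-dense distribution over the $m$ constraints. The dual oracle returns $x^{(t)}\in\mathcal{K}_{\text{OPT}}$. We feed MWU the loss vector
$$\ell^{(t)}_i := -\frac{1}{\rho}\left(A_i x^{(t)}-b_i\right).$$
By the second condition of the $\rho$-bounded oracle, $\|\ell^{(t)}\|_\infty\le 1$, so the regret lemma applies. The final output is $x^*:=\frac1T\sum_t x^{(t)}$, which lies in $\mathcal{K}_{\text{OPT}}$ because $\mathcal{K}_{\text{OPT}}$ is convex.

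\textbf{Oracle step.} Since the LP is feasible, some $\bar x\in\mathcal{K}_{\text{OPT}}$ has $A\bar x\le b$. Hence for every distribution $y$,
$$\min_{x\in\mathcal{K}} \sum_i y_i(A_ix - b_i)\le 0.$$
The $(\frac{\alpha}{3},\frac{\beta}{T})$ oracle therefore gives $\langle \widetilde y^{(t)}, Ax^{(t)}-b\rangle\le \alpha/3$ with probability at least $1-\beta/T$. A union bound over the $T$ rounds makes this hold for all $t$ simultaneously with probability at least $1-\beta$. Equivalently, $\frac1T\sum_t\langle \ell^{(t)},\widetilde y^{(t)}\rangle\ge -\frac{\alpha}{3\rho}$.

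\textbf{Contradiction.} Suppose that at least $s$ constraints satisfy $A_ix^*>b_i+\alpha$. Let $S$ be $s$ of them and let $\widetilde B^*$ be uniform on $S$; this is a legitimate competitor in the dense regret lemma. By linearity in $x$,
$$\frac1T\sum_t\langle \ell^{(t)},\widetilde B^*\rangle = -\frac1\rho\langle \widetilde B^*, Ax^*-b\rangle < -\frac{\alpha}{\rho}.$$
The regret lemma then yields
$$-\frac{\alpha}{3\rho} - \left(-\frac{\alpha}{\rho}\right) < \eta+\frac{\log m}{\eta T}.$$
Choose $\eta=\sqrt{\log m/T}$ and $T=\frac{36\rho^2\log m}{\alpha^2}$. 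The right-hand side becomes $2\sqrt{\log m/T}=\frac{\alpha}{3\rho}$, while the left-hand side is $\frac{2\alpha}{3\rho}$, so we obtain $\frac{2\alpha}{3\rho}<\frac{\alpha}{3\rho}$, a contradiction. The hypothesis $\alpha\le 9\rho$ is used exactly here: it gives $\eta=\frac{\alpha}{6\rho}\le\frac{3}{2}$. The lemma requires $\eta\le \frac12$, so this is the point to double-check; I expect the intended requirement is $\alpha\le 3\rho$, or the constants in $T$ absorb the difference. Therefore at most $s-1$ constraints are violated by more than $\alpha$.

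\textbf{Main obstacle.} The main difficulty is bookkeeping rather than ideas. We must get the sign convention right so that MWU \emph{up-weights} violated constraints. We must also confirm that the regret lemma, as stated, controls (algorithm loss) $-$ (competitor loss) for every $s$-uniform competitor. Finally, the choice of $T,\eta$ must be checked against the constraint $\eta\le\frac12$ using $\alpha\le 9\rho$.
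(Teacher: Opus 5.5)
Your argument is correct and is essentially the standard proof from \cite{hsu2014privately}; the paper cites that lemma rather than proving it. Your choice $|S|=s$ is also the right one, because the comparator in the dense regret bound must itself be $\frac{1}{s}$-dense (the paper's ``$|S|\le s$'' should read $|S|\ge s$).

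The loose end you flag about $\eta\le\frac12$ closes immediately. On the good event, each oracle output has $\|Ax^{(t)}-b\|_\infty\le\rho$, so averaging gives $A_ix^*-b_i\le\rho$ for every $i$, and the conclusion holds trivially when $\alpha\ge\rho$. When $\alpha<\rho$, your parameters give $\eta=\frac{\alpha}{6\rho}<\frac16$, so the stated regret lemma applies. The hypothesis $\alpha\le 9\rho$ is therefore never actually needed.
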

Our algorithm effectively implements the exponential mechanism. Therefore, our dual oracle has width:
$$
\rho = \frac{\text{OPT}}{c_{\min}}-b_{max}
$$
where $c_{\min} := \min_{i=1}^d c_i$ and $b_{\max} = \max_{i=1}^d b_i$. Recall the quality scores $Q(j,y) = -\frac{\text{OPT}}{c_j}y^T A_{:,j}$. The sensitivity is at most $\frac{3\text{OPT}}{c_{\min}s}$ because $y$ can change up to $\frac{2}{s}$ (\Cref{lemma:sensitivity-bregman}) and a new constraint may also be added in the sum. As a result, by the guarantees of the exponential mechanism, we retrieve the optimal point with error at most
$$
\alpha = \frac{6\text{OPT}}{c_{\min}s\varepsilon'}\log d\log\frac{T}{\beta},\quad\text{   where    }\, \varepsilon' = \frac{\varepsilon}{\sqrt{2T\log(1/\delta)}}.
$$
Expanding, we get
$$
s = O\left(\frac{\text{OPT}^2\log d\log^{1/2}m\log\frac{T}{\beta}\log^{1/2}(1/\delta)}{c^2\alpha^2\varepsilon}\right).
$$

\section{Index Configuration}
\label{sec:index-config}
We compare three FAISS index types for approximate nearest neighbor search:
\begin{itemize}
    \item The \textbf{Flat} index serves as the baseline, performing exact inner product search via linear scan with $O(m)$ complexity per query. 
    \item For \textbf{IVF} (Inverted File Index), we partition the query vectors into $\texttt{nlist} = \max(2\sqrt{m}, 20)$ Voronoi cells using $k$-means clustering, and search the $\texttt{nprobe} = \min(\texttt{nlist}/4, 10)$ nearest cells at query time. This reduces the search space from $m$ to approximately $m \cdot \texttt{nprobe}/\texttt{nlist}$ vectors on average. 
    \item For \textbf{HNSW} (Hierarchical Navigable Small World), we construct a proximity graph with $M = 32$ neighbors per node, using $\texttt{efConstruction} = 100$ during index building and $\texttt{efSearch} = 64$ during querying. HNSW achieves approximately $O(\log m)$ query complexity by navigating through a hierarchical graph structure. 
\end{itemize} 

Both approximate indices trade off some accuracy for improved query time, with IVF offering faster index construction and HNSW providing better query performance at scale.

\section{Additional Experiments on Linear Queries}
\label{sec:more-experiments-linear-queries}
In this section we provide additional experimental results and ablation studies for the application of Fast MWEM on the linear query release problem. Since our experiments in this section are not related to performance or efficiency, we use a flat index throughout.

\subsection{Analyzing the margin $B$} 
Recall that in Fast MWEM, apart from the top $\sqrt{m}$ scores $S$ we also need to sample $C$ scores outside of $S$, where $C \sim \text{Bin}(n-\sqrt{m}, 1-\exp(-\exp(-B)))$. Here $B$ is a margin we calculate based on the maximum pertrubed score and the minimum score in $S$. If $B$ is very large, we risk having to collect a large additional sample, effectively losing the sublinear runtime guarantee. 

\citet{mussmann2017fast} show that in expectation we have $C = O(\sqrt{m})$. We ran an additional experiment to verify that this is the case. Over $T=500$ iterations and for $m \in \{500, 2000, 20000\}$ we calculate $C$ and the percentage of $m$ that we need to take additional samples of. \textbf{We confirm that the fraction of additional samples we take is a very small fraction of $m$, in the order of $O(\frac{1}{\sqrt{m}})$}.

\begin{figure}
    \centering
    \includegraphics[width=0.85\linewidth]{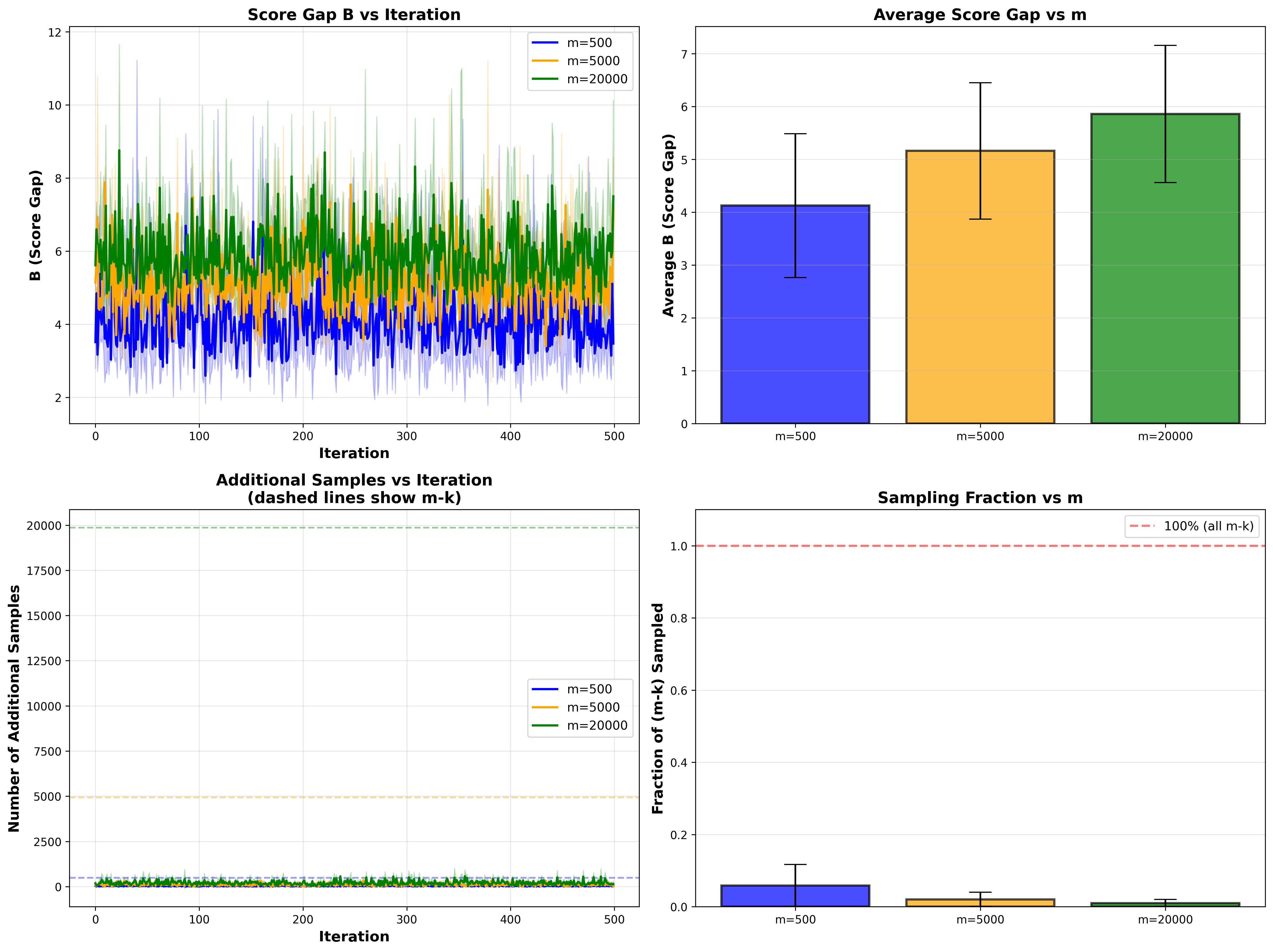}
    \caption{Analyzing the margin $B$}
    \label{fig:b-analysis}
\end{figure}

\subsection{Ablation with number of samples}
We also verify that the number of samples $n$ is indeed related to the error of our mechanism, in the same way it is for MWEM. Setting $m=100$ and letting $T=n^2$, \textbf{our experiment confirms the fact that more samples lead to increased accuracy}. Both MWEM and Fast MWEM behave very similarly in this experiment, which further confirms the closeness in the algorithms' behavior.

\begin{figure}
    \centering
    \includegraphics[width=0.7\linewidth]{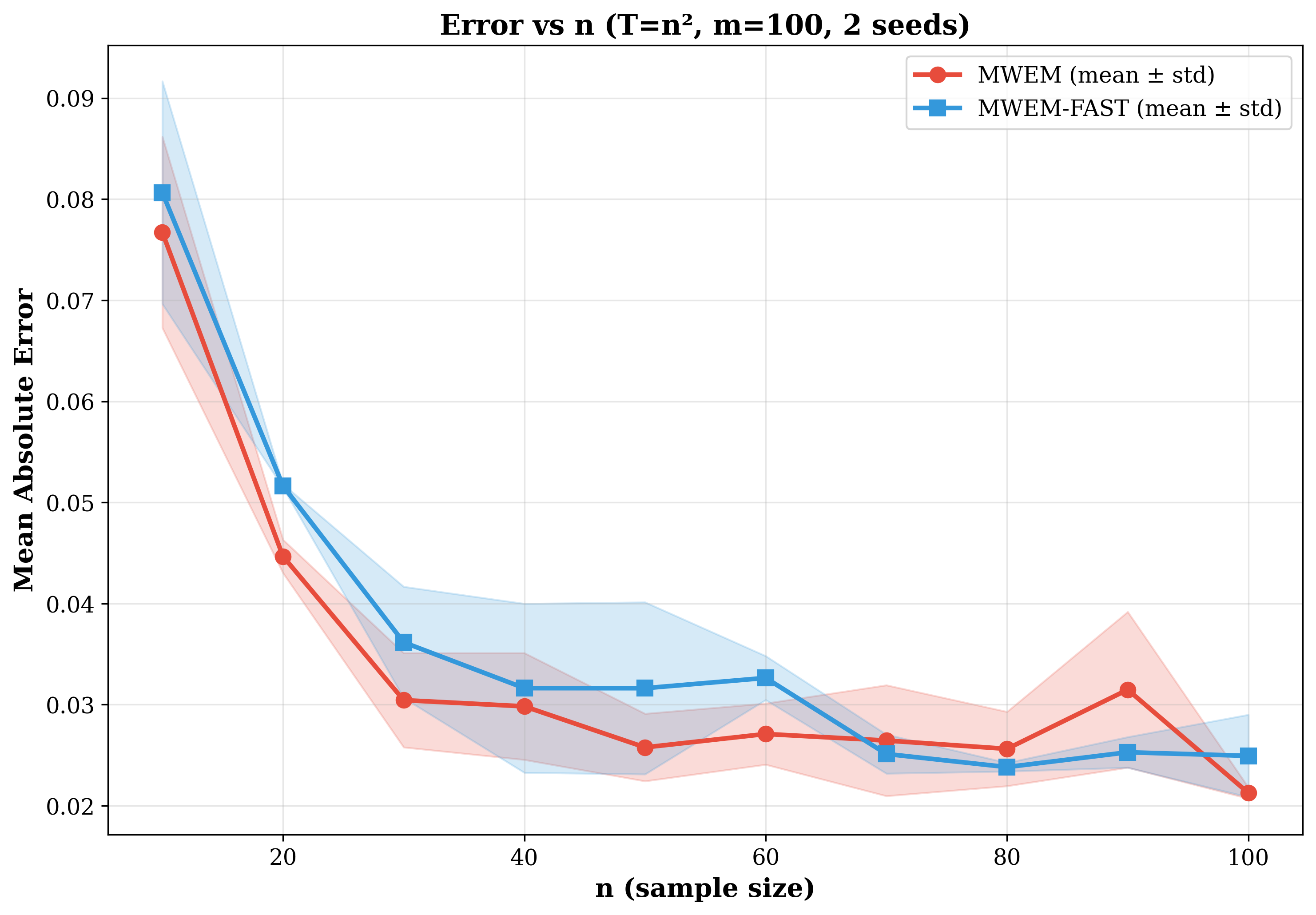}
    \caption{Ablation of the final error for different values of $n$}
    \label{fig:n-ablation}
\end{figure}

\section{Additional Plots on Linear Programming with Fast MWEM}
\label{sec:more-experiments-lps}
We also provide additional plots on our experiments with using fast MWEM for solving LPs.
\begin{itemize}
    \item As \Cref{fig:lp-appendix-index-comp} shows, both IVF and HNSW perform almost identical iterations as the original MWEM (No Index). Notably the performance of HNSW in total runtime is superior to the other approaches.
    \item When $m$ is ablated with very large values ($3\cdot 10^5 \leq m \leq 15 \times 10^5$), the runtime of HNSW manifests the sublinear speed-up by completely outperforming the other approaches, even though the build time of the data structure is larger. Unfortunately, IVF did not give us signficant speed up in these experiments. 
    
\begin{figure}[t]
    \centering
    \includegraphics[width=0.83\linewidth]{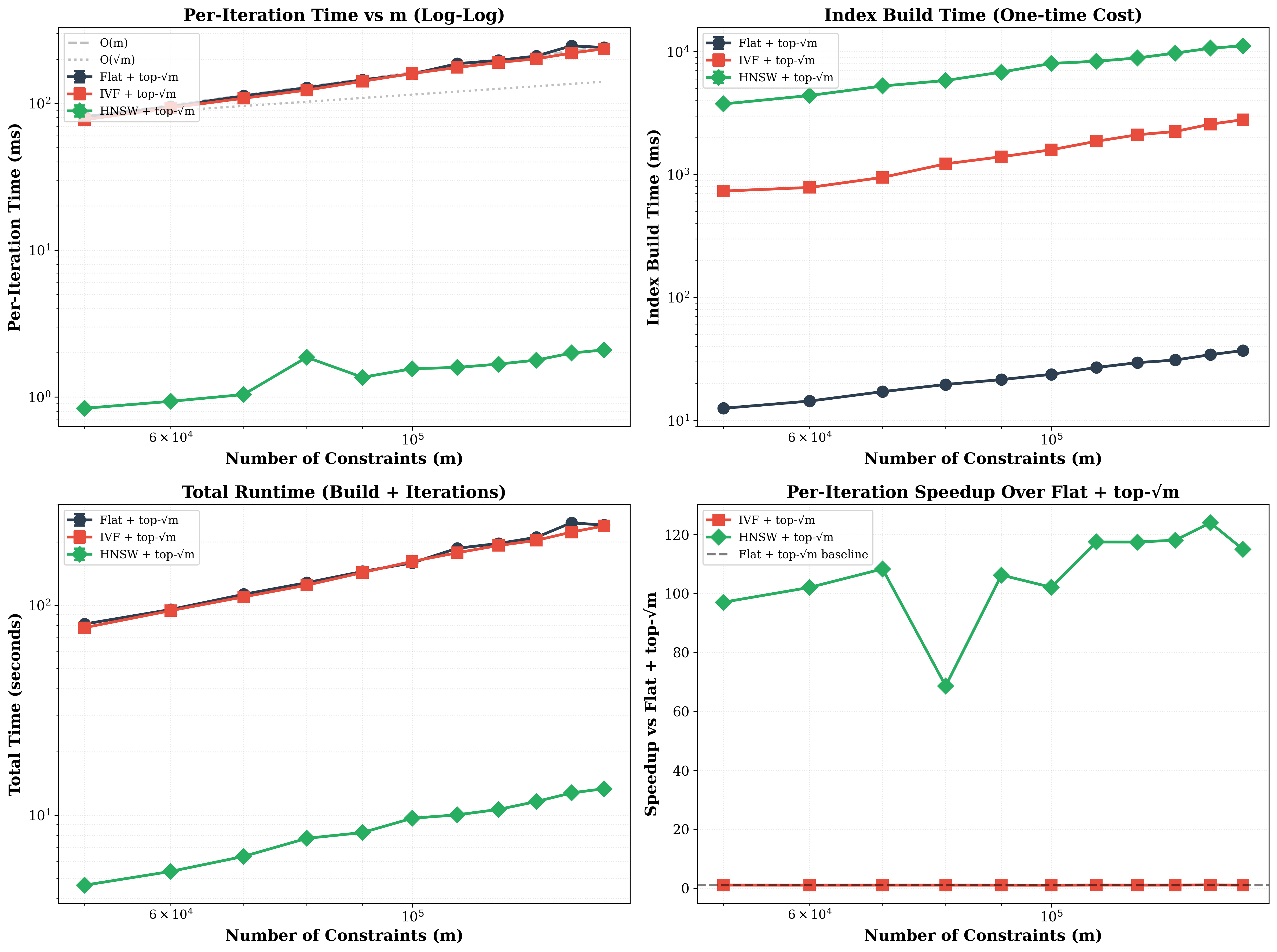}
    \caption{Comparing index performance for large ablated values of $m$}
    \label{fig:lp_index_comp}
\end{figure}
\end{itemize}

\begin{figure}
    \centering
    \includegraphics[width=0.83\linewidth]{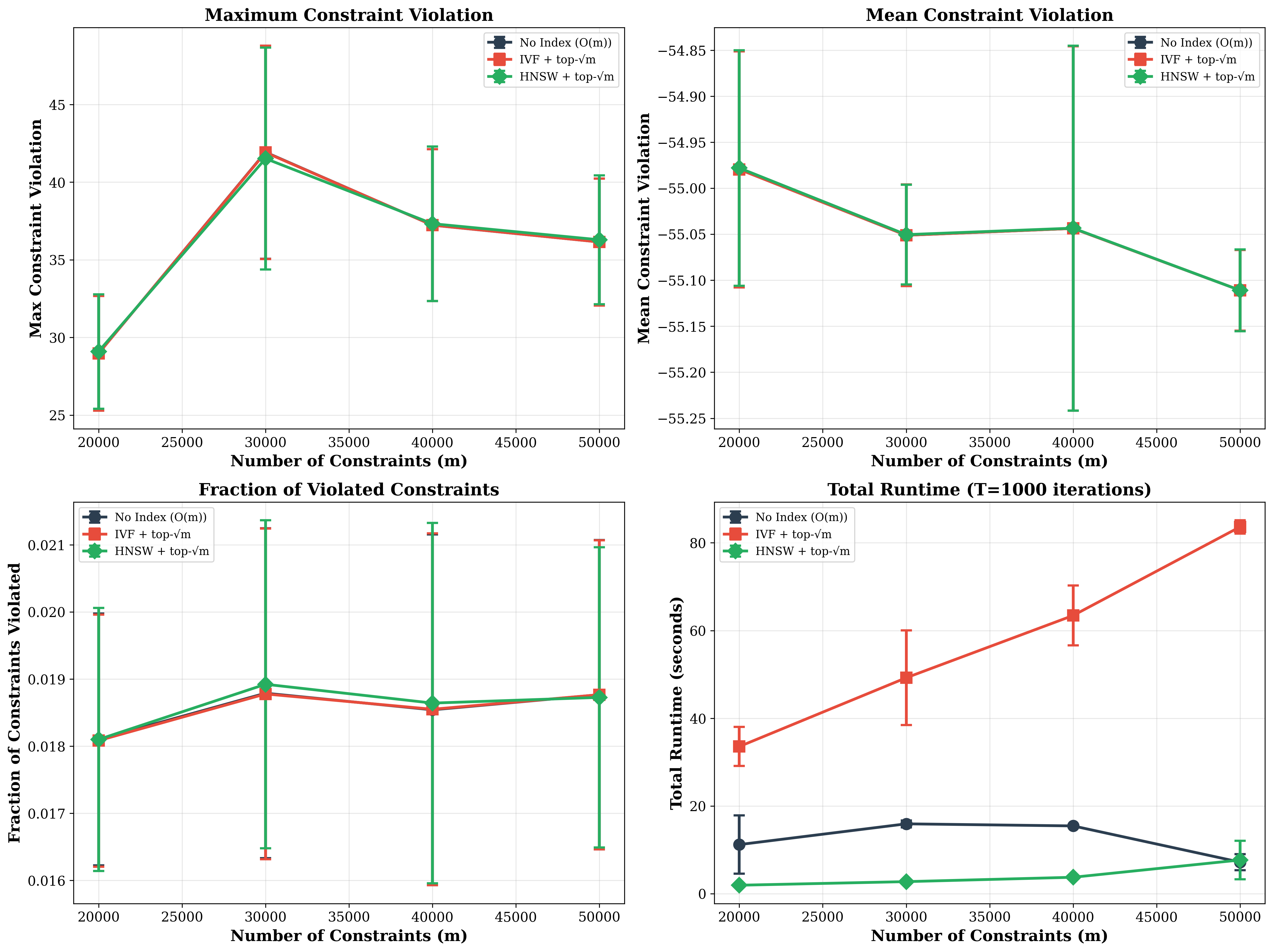}
    \caption{Comparing the error and max-constraint violations for solving scalar-private LPs with fast MWEM.}
    \label{fig:lp-appendix-index-comp}
\end{figure}

%%%%%%%%%%%%%%%%%%%%%%%%%%%%%%%%%%%%%%%%%%%%%%%%%%%%%%%%%%%%%%%%%%%%%%%%%%%%%%%
%%%%%%%%%%%%%%%%%%%%%%%%%%%%%%%%%%%%%%%%%%%%%%%%%%%%%%%%%%%%%%%%%%%%%%%%%%%%%%%

\end{document}